\documentclass{article} 
\usepackage{iclr2026_conference,times}

\usepackage{hyperref}
\usepackage{url}

\PassOptionsToPackage{dvipsnames}{xcolor}
\iclrfinalcopy 
\usepackage{booktabs,array,graphicx}
\newcolumntype{L}[1]{>{\raggedright\arraybackslash}m{#1}}
\newcolumntype{C}[1]{>{\centering\arraybackslash}m{#1}}
\usepackage{array,booktabs}
\newcolumntype{L}[1]{>{\raggedright\arraybackslash}m{#1}}   
\newcolumntype{C}[1]{>{\centering\arraybackslash}m{#1}}     
\usepackage{mmll}
\usepackage{xcolor}
\usepackage[colorinlistoftodos, textsize=tiny, textwidth=2.2cm, backgroundcolor=blue!10, linecolor=magenta, bordercolor=magenta]{todonotes}
\usepackage{enumitem}
\newcommand{\err}{\operatorname{Err}}

\newcommand{\KL}{\mathrm{KL}}
\newcommand{\woful}{\mathrm{Weighted-OFUL}}

\newcommand{\simpleReg}{\mathrm{SR}}

\allowdisplaybreaks
\title{Breaking the Total Variance Barrier: Sharp Sample Complexity for Linear Heteroscedastic Bandits with Fixed Action Set}

\author{
    Heyang Zhao$^{1*}$~
    Tianyuan Jin$^{2*}$~
    Weixin Wang$^{3}$~
    Vincent Y. F. Tan$^{2}$~
    Pan Xu$^{3,4,5}$~
    Quanquan Gu$^1$
    \\
$^1$Department of Computer Science,
University of California, Los Angeles\\
$^2$Department of Electrical and Computer Engineering,  National University of Singapore\\
$^3$Department of Biostatistics \& Bioinformatics, Duke University \\
$^4$Department of Computer Science, Duke University \\
$^5$Department of Electrical and Computer Engineering, Duke University \\
\texttt{\{hyzhao,qgu\}@cs.ucla.edu} \\
\texttt{\{tianyuan, vtan\}@nus.edu.sg} \\
\texttt{\{weixin.wang, pan.xu\}@duke.edu}
}

\begin{document}

\maketitle

\newcommand{\customfootnotetext}[2]{%
  \begingroup
  \renewcommand{\thefootnote}{#1}%
  \footnotetext[1]{#2}%
  \endgroup
}

\customfootnotetext{*}{Equal contribution.}

\begin{abstract}
Recent years have witnessed increasing interests in tackling heteroscedastic noise in bandits and reinforcement learning \citep[e.g.,][]{zhou2021nearly, zhao2023variance, jia2024does, pacchiano2025second}. In these works, the cumulative variance of the noise $\Lambda = \sum_{t=1}^T \sigma_t^2$, where $\sigma_t^2$ is the variance of the noise at round $t$, is used to characterize the statistical complexity of the problem, yielding \emph{simple regret} bounds of order $\tilde\cO(d \sqrt{\Lambda / T^2})$ for $d$-dimensional linear bandits with heteroscedastic noise \citep{zhou2021nearly, zhao2023variance}. 
However, with a closer look, $\Lambda$ remains the same order even if the noise is close to zero at half of the rounds, which indicates that the $\Lambda$-dependence is not optimal.

In this paper, we revisit the stochastic linear bandit problem with heteroscedastic noise, where the action set is prefixed throughout the learning process. We propose a novel variance-adaptive algorithm \texttt{VAEE} (Variance-Aware Exploration with Elimination) for large action set, which actively explores actions that maximizes the information gain among a candidate set of actions that are not eliminated. With the active-exploration strategy, we show that \texttt{VAEE} achieves a \emph{simple regret} with a nearly \emph{harmonic-mean} dependent rate, i.e., $\tilde\cO\Big(d\Big[\sum_{t = 1}^T \frac{1}{\sigma_t^2} - \sum_{i = 1}^{\tilde{O}(d)} \frac{1}{[\sigma^{(i)}]^2} \Big]^{-\frac{1}{2}}\Big)$\footnote{The formal notation is given by $\tilde\cO\Big(d\Big[\sum_{t = 1}^T \frac{1}{\sigma_t^2} - \sum_{i = 1}^{\iota(d, T)} \frac{1}{[\sigma^{(i)}]^2} \Big]^{-\frac{1}{2}}\Big)$, where $\iota(d, T) = \tilde{O}(d)$ is a function of $d$ and $T$. For simplicity, we use $\tilde{O}(d)$ to denote $\iota(d, T)$ throughout the paper.
} where $\sigma^{(i)}$ is the $i$-th smallest variance among $\{\sigma_t\}_{t=1}^T$. For finitely many actions, we propose a variance-aware variant of G-optimal design based exploration, which achieves a simple regret of $\tilde\cO\Big(\sqrt{d \log|\cA|}\Big[\sum_{t = 1}^T \frac{1}{\sigma_t^2} - \sum_{i = 1}^{\tilde{O}(d)} \frac{1}{[\sigma^{(i)}]^2} \Big]^{-\frac{1}{2}}\Big)$. We also establish a nearly matching lower bound for the fixed action set setting indicating that \emph{harmonic-mean} dependent rate is unavoidable. To the best of our knowledge, this is the first work that breaks the $\sqrt{\Lambda}$ barrier for stochastic linear bandits with heteroscedastic noise.
\end{abstract}

\section{Introduction}

The stochastic multi-armed bandit (MAB) problem is a fundamental framework for studying the exploration-exploitation trade-off in sequential decision-making \citep{auer2002finite}. In the classic stochastic bandit setting, an agent repeatedly selects an arm from a set of arms and receives a stochastic reward associated with the chosen arm. The goal of the agent is to maximize the cumulative reward over a series of rounds by balancing exploration (trying out different arms to gather information) and exploitation (choosing the best-known arm based on past observations). Over the past few decades, various algorithms have been proposed to tackle the stochastic bandit problem from the perspectives of minimax optimal sample complexity \citep{audibert2009minimax,menard2017minimax,jin2021mots,jin2023thompson}.

To further leverage the heteroscedastic nature of the noise in real-world applications, recent works have extended the classic bandit framework to account for heteroscedastic noise, where the variance of the noise can vary across different arms and time steps \citep{zhou2021nearly, zhao2023variance, jia2024does, pacchiano2025second}. These works have shown that by taking into account the varying variance of the noise, it is possible to design more efficient algorithms that achieve better performance in terms of regret bounds. In detail, \citet{zhou2021nearly} first considered the linear bandit problem with heteroscedastic noise and proposed a variance-aware algorithm that achieved a regret bound of order $\tilde\cO(d \sqrt{\Lambda} + \sqrt{dT})$, where $\Lambda = \sum_{t=1}^T \sigma_t^2$ is the cumulative variance of the noise along $T$ time steps and $d$ is the dimension of the feature space. Later, \citet{zhou2022computationally} improved the cumulative regret bound to $\tilde\cO(d \sqrt{\Lambda} + d)$, yielding a simple regret\footnote{Simple regret quantifies the expected gap between the optimal reward and the reward of the arm proposed by the algorithm.} bound of order $\tilde\cO(d \sqrt{\Lambda / T^2} + d / T)$. More recently, \citet{jia2024does} proposed \texttt{VarCB}, which achieves a tighter cumulative regret bound of order $\tilde{O}(\sqrt{|\cA| \Lambda d} + d^2)$ for contextual bandits with a fixed action set, where $|\cA|$ is the size of the action set and $\Lambda = \sum_{t=1}^T \sigma_t^2$ is the variance budget, and further extended their analysis to general function classes. In the same work, they proved a minimax lower bound of order $\tilde \Omega\big(\sqrt{\min(|\cA|, d)\cdot \Lambda} + d\big)$ when $d \le \sqrt{|\cA|T}$, showing that the $\sqrt{\Lambda}$ dependence is unavoidable in the worst case over instances and variance sequences. {Recently, \citet{he2025variance} further established (up to logarithmic factors) matching variance-dependent lower bounds of order $\tilde\Omega\big(d\sqrt{\Lambda}\big)$ for linear contextual bandits with time-varying action sets and arbitrary variance sequences, confirming that the $\sqrt{\Lambda}$ scaling is information-theoretically optimal even when the entire variance sequence is revealed to the learner. On the other hand, \citet{he2025variance} proved that for stochastic linear bandits where the action set is prefixed, the $\tilde\Omega\big(d\sqrt{\Lambda}\big)$ lower bound does not hold. This motivates us to pursue sharper variance-dependent regret bounds for stochastic linear bandits with a fixed action set (either finite or infinite).}  

More specifically, existing regret bounds depend on the total variance term $\Lambda$, which overlooks the heterogeneity of information gain across actions and time steps with different noise levels. Consider an extreme case: if $\sigma_t^2 \approx 0$ for all $t \leq t_0$ with $t_0 = \tilde O(d) \ll T$, the $d$-dimensional weight parameter in the linear bandit problem could be recovered almost exactly. In such a case, the regret bound should be essentially independent of the noise variance after time step $t_0$. This motivates an important open question in heteroscedastic stochastic linear bandits:
\begin{center}
\emph{Can we improve upon the $\sqrt{\Lambda}$ dependence in the regret bounds in stochastic linear heteroscedastic bandits? }
\end{center}
In this paper, we revisit the problem of best-arm identification in stochastic linear bandits under heteroscedastic noise, where the action set is prefixed and the variances of the reward distribution may vary significantly across actions. The primary performance metric we focus on is the simple regret, which measures the suboptimality of the action recommended after a fixed budget of exploration. Our results highlight the fundamental role of the harmonic mean of the variances in characterizing the attainable regret rate.

Our main contributions are summarized as follows:
\begin{itemize} [leftmargin = *]
    \item \textbf{Variance-adaptive exploration for large action sets.}  
    We propose a novel algorithm, \texttt{VAEE} (Variance-Aware Exploration with Elimination), designed to handle large (potentially infinite) action sets. The key idea is to maintain a candidate set of promising actions and actively explore those that maximize the information gain subject to elimination rules. We prove that \texttt{VAEE} achieves a simple regret bound of
    \[
    \tilde\cO\Bigg(d\Bigg[\sum_{t=1}^T \frac{1}{\sigma_t^2} \;-\; \sum_{i=1}^{\tilde{O}(d)} \frac{1}{[\sigma^{(i)}]^2} \Bigg]^{-\tfrac{1}{2}}\Bigg),
    \]
    where $d$ is the feature dimension, and $\{[\sigma^{(i)}]^2\}$ are the ordered list of the variance sequence $\{\sigma_t^2\}$. This establishes a nearly harmonic-mean dependent rate for the simple regret.

    \item \textbf{Variance-aware G-optimal design for finite action sets.}  
    For the case of a finite action set $\cA$, we propose a variance-adaptive variant of G-optimal design based exploration. We show that this strategy achieves a simple regret bound with improved dependence on the dimension $d$ as follows
    \[
    \tilde\cO\Bigg(\sqrt{d \log|\cA|}\;\Bigg[\sum_{t=1}^T \frac{1}{\sigma_t^2} \;-\; \sum_{i=1}^{\tilde{O}(d)} \frac{1}{[\sigma^{(i)}]^2} \Bigg]^{-\tfrac{1}{2}}\Bigg).
    \]

    \item \textbf{Lower bound matching the harmonic-mean rate.}  
    We establish a nearly matching lower bound for the fixed-action setting, showing that the harmonic-mean dependence is intrinsic to the problem. This demonstrates that our algorithms are essentially optimal in their variance dependence.

    \item \textbf{Breaking the $\sqrt{\Lambda}$ barrier.}  
    To the best of our knowledge, this is the first work that surpasses the classical $\sqrt{\Lambda}$-type dependence in simple regret bounds for linear bandits with heteroscedastic noise, where $\Lambda$ denotes the variance proxy commonly used in prior analyses. A comprehensive comparison on the simple regret bounds is provided in \Cref{tab:regret_comparison} for the reader's reference.
\end{itemize}

\textbf{Notations.} We use bold lowercase letters (e.g., $\ab$) to denote vectors and bold uppercase letters (e.g., $\Ab$) to denote matrices. For a vector $\ab \in \mathbb{R}^d$, we use $\|\ab\|_2$ to denote its Euclidean norm. For a positive definite matrix $\Ab \in \mathbb{R}^{d \times d}$, we define the elliptical norm of a vector $\ab$ as $\|\ab\|_{\Ab} = \sqrt{\ab^\top \Ab \ab}$. We use $\Ib_d$ to denote the $d \times d$ identity matrix. For a set $\cA$, we use $|\cA|$ to denote its cardinality. We use $\tilde O(\cdot)$ to hide logarithmic factors in $d, T, 1/\delta, 1/\sigma_{\min}, 1/\sigma_{\max}$. For a sequence $\{a_t\}_{t=1}^T$, we use $a^{(i)}$ to denote the $i$-th smallest element in the sequence.

\begin{table}[t]
\centering
\caption{Comparison between different algorithms for stochastic (linear) contextual bandits. Here $d$ is the feature dimension, $T$ is the number of rounds, $\{\sigma_t\}_{t \in [T]}$ is the variance of noise at round $t \in [T]$, $|\cA|$ is the size of the arm set, and $\Lambda = \sum_{t=1}^T \sigma_t^2$ is the cumulative variance of the noise. {The time-varying means that the action set is allowed to change over time (possibly chosen in advance by an oblivious adversary), whereas fixed means that the same action set is used in all rounds. The infinite means the action set may contain infinitely many actions, whereas finite means the action set contains only finitely many actions.} The lower bounds from \citet{jia2024does,he2025variance} are derived for the cumulative regret. We convert them to be comparable to our simple regret by dividing them by $T$. Note that their lower bounds are derived for the worst case sequence of noise variance, while our lower bound has a refined dependence on the noise variance sequence. 
\label{tab:regret_comparison}}
\resizebox{\textwidth}{!}{%
\begin{tabular}{@{}lcccc@{}}
\toprule
\textbf{Algorithm} & \textbf{Simple Regret Upper Bound} & \textbf{Simple Regret Lower Bound} &  {\textbf{Action Set}} &  \\ \midrule


Weighted OFUL \citep{zhou2021nearly} 
& $d \sqrt{\Lambda/T^2}$ & - &   {Time-varying/Infinite} \\

Weighted OFUL+ \citep{zhou2022computationally} 
& $d \sqrt{\Lambda/T^2}$ & - &  {Time-varying/Infinite} \\

VOFUL \citep{zhang2021improved} 
& $d^{9 / 2} \sqrt{\Lambda/T^2}$ & - &  {Time-varying/Infinite} \\

VOFUL2 \citep{kim2021improved} 
& $d^{3 / 2} \sqrt{\Lambda/T^2}$ & - &  {Time-varying/Infinite} \\

SAVE \citep{zhao2023variance} 
& $d \sqrt{\Lambda/T^2}$ & - &  {time-varying/Infinite}  \\

LinNATS \citep{xu2023noise} 
& $d^{3 / 2} \sqrt{\Lambda/T^2}$ & - &  {Time-varying/Infinite}\\ 


VarCB \citep{jia2024does} 
& $\sqrt{|\cA| \Lambda d / T^2}$ & $\Omega\big(\sqrt{\min(|\cA|, d) \Lambda/T^2}\big)$ &  {Time-varying/Finite}  \\ 
 
\citet{he2025variance} 
& - & $\widetilde{\Omega}\big(d \sqrt{\Lambda/T^2}\big)$ & {Time-varying/Infinite}  \\

\textbf{VAEE (Ours)} 
& $d\Big[\sum_{t = 1}^T \frac{1}{\sigma_t^2} - \sum_{i = 1}^{\tilde{O}(d)} \frac{1}{[\sigma^{(i)}]^2} \Big]^{-\frac{1}{2}}$ & $\Omega\Big(d \Bigl(\sum_{i = 1}^t \frac{1}{\sigma_i^2}\Bigr)^{-\frac{1}{2}}\Big)$&   {Fixed/Infinite}  \\ 

\textbf{VAGD (Ours)} 
& $\sqrt{d \log|\cA|}\Big[\sum_{t = 1}^T \frac{1}{\sigma_t^2} - \sum_{i = 1}^{\tilde{O}(d)} \frac{1}{[\sigma^{(i)}]^2} \Big]^{-\frac{1}{2}}$ & - &   {Fixed/Finite} \\ \bottomrule

\end{tabular}%
}
\end{table}

\section{Related Work}

\paragraph{Variance-Aware Regret for Linear Bandits with Heteroscedastic Noise.}
The incorporation of variance information in linear bandit algorithms has garnered significant attention in recent years, leading to substantial improvements in regret bounds. Early work by \citet{kirschner2018information} introduced the concept of information-directed sampling for bandits with heteroscedastic noise, demonstrating that leveraging variance information can lead to more efficient exploration strategies. Later, \citet{zhou2021nearly} proposed a variance-aware algorithm for linear bandits that achieves a regret bound of order $\tilde\cO(d \sqrt{\Lambda} + \sqrt{dT})$, where $\Lambda = \sum_{t=1}^T \sigma_t^2$ is the cumulative variance of the noise. This result was further improved by \citet{zhou2022computationally} to a tighter bound of $\tilde\cO(d \sqrt{\Lambda} + d)$. \citet{zhao2023optimal} later proposed a peeling-based algorithm that achieves a similar regret bound.

The challenge of unknown conditional variances has been addressed by several researchers. \citet{zhang2021improved} and \citet{kim2021improved} developed algorithms that operates without prior knowledge of the variance, achieving regret bounds that adapt to the observed noise levels. However, these approaches are not tractable for large action sets and incur sub-optimal dependence on $d$.
\citet{zhao2023variance} proposed a computationally efficient algorithm that achieves a regret bound of order $\tilde\cO(d \sqrt{\Lambda} + d)$ without requiring prior knowledge of the variances.

More recently, \citet{pacchiano2025second} extended the variance-aware framework of \citet{zhao2023variance} to the general function approximation setting, achieving a regret bound of order $\tilde\cO(d_{eluder}\sqrt{\log(\cF) \Lambda} + d_{eluder}\log(\cF))$, where $d_{eluder}$ is the eluder dimension and $\log(\cF)$ is the log-covering number of the function class $\cF$. Concurrently,
\citet{jia2024does} introduced \texttt{VarCB}, an algorithm that attains a regret bound of $\tilde\cO(\sqrt{|\cA| \Lambda d} + d^2)$ for bandits with few actions, and extended their results to general function classes. They also established a worst-case lower bound of order $\Omega(\sqrt{\min(|\cA|, d) \Lambda} + d)$ when $d \le \sqrt{|\cA|T}$. \citet{he2025variance} further studied the setting where the action set can change arbitrarily over time and proved an instance-dependent lower bound of order $\Omega(d \sqrt{\Lambda} / \log T)$ for the expected cumulative regret. These results indicate that the $\sqrt{\Lambda}$ dependence is unavoidable in such settings.

\paragraph{Bandits with Heavy-Tailed Noise.}
The topic of robustness to heavy-tailed rewards has received considerable attention in recent years, addressing the limitations of classical bandit algorithms that assume sub-Gaussian or bounded noise. \citet{bubeck2013bandits} pioneered this research direction by studying heavy-tailed rewards in multi-armed bandits, establishing that standard concentration inequalities fail in such environments. For linear bandits, \citet{medina2016no} proposed truncation-based methods and median-of-means estimators to handle heavy-tailed noise, achieving sublinear regret bounds. \citet{shao2018almost} adopted median-of-means techniques with a well-designed allocation of decisions to achieve nearly optimal regret bounds. Later, \citet{xue2020nearly} introduced a SupLin-based algorithm \citep{chu2011contextual} which further improved the dimension dependence in the regret bounds. More related works include \citet{li2024variance}, \citet{huang2023tackling}, which proposed Huber regression based algorithms to handle heteroscedastic heavy-tailed noise. Recently, \citet{ye2025catoni} proposed a Catoni's estimator based algorithm that achieves adaptive regret bounds in bandits with general function approximation. 

\paragraph{Variance-Dependent Bounds in MDPs.}
As a natural extension of bandits, Markov Decision Processes (MDPs) have also been studied under the lens of variance-dependent regret bounds. In tabular MDPs, \citet{zanette2019tighter} first established a variance-dependent regret bound which scales with the square root of the maximum variance of the value function. Afterwards, \citet{zhou2023sharp} proposed MVP-V, an algorithm that achieves a regret bound scaling with the square root of the total variance of the value function, achieving worst-case optimal regret bound. In MDPs with linear function approximation, \citet{zhao2023variance} proposed a variance-aware algorithm that achieves a second-order and horizon-free regret bound. More recently, there have been several works \citep{wang2024more, zhao2024nearly, wangmodel, zhao2025instance} presenting variance-dependent regret bounds in MDPs with general function approximation.

\section{Preliminaries}
We consider a heteroscedastic variant of the stochastic linear bandit problem. Let $T$ be the total number of rounds. The action set $\mathcal{A}$ is fixed. 
At each round $t \in [T]$, the interaction between the agent and the environment is as follows:
\begin{enumerate} [leftmargin = *,nosep]
    \item The agent selects $\mathbf{a}_t \in \mathcal{A}$ based on the past observations $\cF_{t-1}=(\ba_1,r_1,\dots,\ba_{t-1},r_{t-1})$ up to time $t-1$.
    \item The environment generates the stochastic noise $\eta_t$ at round $t$ and reveals the stochastic reward $r_t=\langle \btheta^*, \ab_t\rangle + \eta_t$ to the agent.
    \item The agent observes the variance of $\eta_{t}$.
\end{enumerate}
We assume that for all $\ab \in \mathcal{A}$, it holds that $\|\ab\|_2 \leq 1$ and $\|\btheta^*\|_2 \leq 1$.

\begin{remark}
Our assumption that the action set is fixed is necessary for achieving the harmonic-mean dependent rate. In scenarios where the action set can change arbitrarily over time, it is possible to construct instances where the cumulative variance $\Lambda = \sum_{t=1}^T \sigma_t^2$ remains the most appropriate measure of statistical complexity. This is because an adversarially chosen action set can force the algorithm to repeatedly explore less informative actions when the noise level is low, thereby negating the benefits of a harmonic-mean based approach. A detailed study of this phenomenon is provided by \citet{he2025variance}, which demonstrates that in the case of adversarially changing contexts, there exists a lower bound of order $\Omega(d \sqrt{\Lambda} / \log T)$ for the expected cumulative regret, indicating that the $\sqrt{\Lambda}$ dependence is unavoidable in such settings.

Therefore, to fully leverage the advantages of our proposed variance-adaptive algorithms and achieve the improved regret bounds, we focus on the standard stochastic linear bandit setting \citep{lattimore2020bandit} where the action set is fixed throughout the learning process.
\end{remark}

We introduce the following assumption on the noise $\eta_t$.

\begin{assumption}
The noise $\eta_t$ is conditionally $\sigma_t$-sub-Gaussian, i.e., for all $\lambda \in \mathbb{R}$, it holds that $\mathbb{E}\left[\exp \left(\lambda \eta_t\right) \mid \mathcal{F}_{t-1}\right] \leq \exp ({\lambda^2 \sigma_t^2}/{2})$, where $\mathcal{F}_{t-1}$ is the filtration up to round $t-1$. We assume that there exist known constants $\sigma_{\min}, \sigma_{\max} > 0$ such that $\sigma_{\min} \leq \sigma_t \leq \sigma_{\max}$ for all $t \in [T]$.
\end{assumption}

\begin{remark}
This assumption follows from the original formulation of heteroscedastic bandits by \citet{kirschner2018information}. Later works \citep{zhou2021nearly, zhao2023variance, jia2024does} have slightly generalized this assumption to only require the variance of $\eta_t$ to be bounded by $\sigma_t^2$ and the magnitude of $\eta_t$ to be bounded by a constant. However, this generalization does not significantly affect our analysis or results, as we will discuss in \Cref{section:heavy_tailed} that our algorithms can be extended to handle heavy-tailed noise by replacing the least-squares estimator with a robust estimator.
\end{remark}
In this paper, we focus on the best-arm identification problem in linear bandits with heteroscedastic noise. The performance of an algorithm is measured by the simple regret defined as follows: 
\begin{align}\label{def:simple_regret}
    \simpleReg(T) = \mathbb{E}\big[\max_{\ab \in \mathcal{A}} \langle \btheta^*, \ab \rangle - \langle \btheta^*, \hat{\ab}_T \rangle\big],
\end{align}
where $\hat{\ab}_T$ is the action recommended by the algorithm after $T$ rounds of exploration.

\begin{remark}\label{remark:3.4}
In the stochastic linear bandit literature, simple regret is closely connected to cumulative regret. In particular, if an algorithm achieves cumulative regret of order $\tilde\cO(\sqrt{dT})$, then its simple regret can be shown to be of order $\tilde\cO(\sqrt{d/T})$ \citep{lattimore2020bandit}. In the heteroscedastic setting, however, the varying and unpredictable noise levels make this relationship more subtle. For example, a harmonic-mean dependence of the simple regret on the variances does not necessarily translate to the same dependence for cumulative regret. In this work, we therefore focus on directly analyzing the simple regret of our proposed algorithms.
\end{remark}



\section{Stochastic Linear Bandits with Infinite Action Space}

In this section, we propose Variance-Aware Exploration with Elimination (\texttt{VAEE}), a variance-adaptive approach designed for linear bandits operating in environments with heteroscedastic noise and potentially large action spaces. The algorithm is displayed in \Cref{algorithm:OFUL}, which builds upon the Optimism in the Face of Uncertainty for Linear bandits (OFUL) framework while incorporating variance information to improve exploration efficiency and regret bounds.

\textbf{Variance Adaptation. }The algorithm explicitly incorporates variance information $\sigma_t$ observed at each time step and uses variance-weighted updates for both the covariance matrix and parameter estimation \citep{zhou2021nearly}. This allows the algorithm to adaptively adjust its confidence sets based on the observed noise levels, leading to more accurate estimates of the underlying parameters.

\textbf{Active Exploration. } \Cref{algorithm:OFUL} employs an active exploration strategy that selects actions based on their potential to maximize information gain. Specifically, at each round, the algorithm chooses the action that maximizes the uncertainty in the parameter estimate, as measured by the Mahalanobis distance with respect to the inverse covariance matrix. This encourages exploration of actions that are expected to provide the most informative feedback.

\begin{algorithm}
    \caption{Variance-Aware Exploration with Elimination (\texttt{VAEE}) \label{algorithm:OFUL}}
    \begin{algorithmic}[1]
        \REQUIRE $\quad \mathcal{A} \subset \mathbb{R}^d$, $\delta$.
        \STATE Initialize $V_0 \gets \lambda I_d$, $\hat{\btheta}_0 \gets 0$, $\mathcal{A}_1 \gets \mathcal{A}$.
        \FOR{$t = 1,\dots, T$}

            \STATE Pull the action $\ab_t \gets \max_{\eb\in \mathcal{A}_{t}} \|\eb\|_{V_{t-1}^{-1}}$. 

            \STATE The agent receives the reward $r_t$ and the variance $\sigma_t$.

            \STATE Calculate $V_{t} \gets V_{t-1}+ \sigma_t^{-2} \ab_t \ab_t^{\top}$. 

            \STATE Calculate  $\hat{\btheta}_{t}\gets V_{t}^{-1} \sum_{s =1}^{t} \sigma_s^{-2} \ab_s r_s$.

            \STATE Set confidence set as follows $\cC_{t} \gets \{\btheta \mid \|\btheta-\hat{\btheta}_{t}\|^2_{V_{t}^{-1}} \leq \beta_{t}\}$.
            
            \STATE Eliminate low rewarding arms: $\mathcal{A}_{t+1} \gets \big\{\ab \in \mathcal{A}_{t}: \max_{\eb\in A_{t}}\min_{\btheta \in \cC_{t} } \langle \btheta, \eb  \rangle\leq\max_{\btheta\in \cC_{t}}\langle \btheta, \ab  \rangle \big\}$.

        \ENDFOR
    \end{algorithmic}
\end{algorithm}


\subsection{Case Study on Why Weighted OFUL Fails}
We now present a two-dimensional case study to illustrate that \emph{Weighted OFUL} \citep{zhou2021nearly} fails for structural reasons rather than due to a loose analysis, especially when the variance sequence contains low-variance windows and the information for learning the $d$-dimensional parameter vector grows anisotropically across coordinates. In contrast, our variance-sequence-aware design reallocates exploration toward weak coordinates and achieves an instance- and variance-sequence-dependent bound. In the example below, our algorithm attains simple regret of order $\varepsilon \exp\!\big(-\Theta(\log T / \varepsilon^{2})\big)$, whereas Weighted OFUL yields only $\varepsilon \exp\!\big(-\Theta(\log T)\big)$. When $\varepsilon = T^{-1/4}$, the simple regret of VAEE is sharper than that of Weighted OFUL by a factor of $T^{\Theta(\sqrt{T})}$.

\paragraph{Setup.}
Consider a two-dimensional linear bandit with action set $\mathcal{A} = \{e_1, e_2, x\}$, where $e_1 = (1,0)$, $e_2 = (0,1)$, and $x = (1-\varepsilon, \varepsilon)$, and with true parameter vector $\btheta^* = e_1$. We take confidence radii satisfying $\beta_t \le \beta = \Theta(\sqrt{\log T})$ and set $\varepsilon = T^{-1/4}$. The variance profile contains a global window $W$ of length $L$ in which the noise variance is $\sigma_t^2 = T^{-\alpha}$ for all $t \in W$ with $\alpha \in (0,1)$, while outside $W$ the variance is constant.

\paragraph{Simplifying assumption.}
To isolate the behavior along the second coordinate, we make the following simplifying assumption. {We assume that \emph{before} entering $W$, both algorithms have already collected enough information in the $e_1$ direction so that the estimation error in the first coordinate is negligible.} This is justified because pulls of $e_1$ and $x$ both provide substantial information about the first coordinate, and both our method and Weighted OFUL sample $x$ (or $e_1$) frequently. As a result, information in the first coordinate dominates that in the second, and the dominant source of error comes from limited information along the second coordinate, which is therefore our focus.

\paragraph{Weighted OFUL in the Low Variance Window.}
In the low-variance window $W$ where $\sigma_t^2 = T^{-\alpha}$, each pull of an arm $a$ contributes $T^{\alpha}\langle a,e_2\rangle^2$ units of information to the second coordinate.

\emph{Case 1 (start with $e_2$).}
If Weighted OFUL initially pulls $e_2$ in $W$, then each pull adds $T^{\alpha}$ units of second-coordinate information. After about $\log T/\varepsilon^{2}$ such units have been gathered, the second-coordinate error is at most $\varepsilon$. Since $\mu_x = \langle x,\btheta^*\rangle = 1-\varepsilon$ and $\btheta^*\in \cC_t$ w.h.p., {we have $\mu_x \le \mathrm{UCB}_x(t)$ and hence $\mathrm{UCB}_x(t) \ge 1-\varepsilon = 1-o(1)$}.

Moreover, after $m = \Theta(T^{-\alpha}\log T / \varepsilon^2)$ pulls of $e_2$ we have $m T^{\alpha} = \Theta(\log T / \varepsilon^2)$, {so $\|e_2\|_{V_t^{-1}} = \Theta(\varepsilon/\sqrt{\log T})$ and therefore
$\mathrm{UCB}_2(t) = \langle e_2,\widehat{\btheta}_t\rangle + \beta\|e_2\|_{V_t^{-1}} \le \varepsilon + O(\beta\varepsilon/\sqrt{\log T}) = O(\epsilon)=o(1)$
using $\varepsilon = T^{-1/4}$ and $\beta = \Theta(\sqrt{\log T})$. Thus $\mathrm{UCB}_{2}(t) < \mathrm{UCB}_{x}(t)$ and Weighted OFUL switches to selecting $x$.}

\emph{Case 2 (keep pulling $x$).}
If instead Weighted OFUL keeps pulling $x$ throughout $W$, then each pull of $x$ contributes only $\varepsilon^2 T^{\alpha}$ to the $e_2$ direction, so after $L$ pulls the total second-coordinate information is at most
$L\,\varepsilon^2 T^{\alpha}$.

\textbf{Simple Regret of Weighted OFUL.} 
Choose
$L = c_LT^{-\alpha}\,\frac{\log T}{\varepsilon^{4}}
\Rightarrow\
L\varepsilon^2 T^{\alpha} =c_L \log T.$
By a standard Chernoff/Hoeffding concentration, the failure probability of recommending $\hat{\ba}_T=x$ decays as $\exp\big(-\Theta(\log T) \big)$, hence the simple regret satisfies
$\simpleReg(T)\;\le\; \varepsilon\cdot \exp\big(-\Theta(\log T)\big)$.

\textbf{Simple Regret of Algorithm \ref{algorithm:OFUL}.}
Since our algorithm pulls the arm with the largest exploration bonus (Line~3 of Algorithm \ref{algorithm:OFUL}), it allocates the window $W$ to $e_2$ and gains $L\,T^{\alpha}\;\asymp\; c_L\,\frac{\log T}{\varepsilon^{4}}$
units of second-coordinate information within $W$. By a standard concentration inequality, the failure probability of recommending $\hat{\ba}_T=x$ decays as $\exp\big(-\Theta(\log T/\varepsilon^{2})\big)$, hence the simple regret satisfies
$\simpleReg(T)\;\le\; \varepsilon\cdot \exp\big(-\Theta(\log T/\varepsilon^{2})\big)$.
Since $\varepsilon = T^{-1/4}$, our simple regret is significantly lower than that of Weighted OFUL by a factor of $T^{\Theta(\sqrt{T})}$.

\subsection{Theoretical Results for VAEE}

We now present the main theoretical results for \texttt{VAEE}. The following theorem establishes a simple regret bound with harmonic-mean dependence.

\begin{theorem}[Simple Regret of \texttt{VAEE}] \label{thm:main}
    Set $\beta_t = 2 \sqrt{\lambda} + 16 \sqrt{\log(4t^2 / \delta) \cdot d \log \frac{d \lambda + t \sigma_{\min}^{-2}}{d \lambda}}$ and $\lambda = 1$ in \Cref{algorithm:OFUL}. Let $\sigma_T^{(i)}$ be the $i$-th smallest element in $\{\sigma_\tau^2\}_{\tau = 1}^T$. With probability at least $1 - \delta$, the simple regret of \Cref{algorithm:OFUL} satisfies
    {\small
    \begin{align*}
        \simpleReg(T)  = \tilde O(\sqrt{d}) \min_{1 \le k \le T + 1} \left\{ x = \sqrt{\frac{\iota(T) - k + 1}{\sum_{i = k}^T \frac{1}{[\sigma_T^{(i)}]^2}}} \ \bigg| \ x \in [\sigma_T^{(k - 1)}, \sigma_T^{(k)}], k \in [T + 1] \right\},
    \end{align*}}%
    
    where $\iota(T) = 2 d \log (1+{\sum_{\tau \in [T]} \sigma_\tau^{-2}/d})$.
\end{theorem}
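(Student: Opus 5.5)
We will follow the standard elimination-plus-elliptical-potential template, with one new twist: a counting step showing that only $O(\iota(T))$ rounds can carry a large exploration bonus.

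\textbf{Step 1: confidence event.} We first establish that with probability at least $1-\delta$, $\btheta^*\in\cC_t$ for all $t\in[T]$. For this we invoke the self-normalized martingale bound for weighted ridge regression \citep{zhou2021nearly}, applied to the normalized noise $\eta_t/\sigma_t$, which is $1$-sub-Gaussian. The stated $\beta_t$ (with $\lambda=1$ and $\|\btheta^*\|_2\le1$) is exactly what this bound produces, after a union bound over $t$ accounting for the $\log(4t^2/\delta)$ term.

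\textbf{Step 2: consequences on the good event.} The optimal arm $\ab^*$ is never eliminated. Every surviving arm $\ab\in\cA_{t+1}$ has gap
\[
\langle\btheta^*,\ab^*-\ab\rangle\le 2\sqrt{\beta_t}\,\big(\|\ab^*\|_{V_t^{-1}}+\|\ab\|_{V_t^{-1}}\big)\le 4\sqrt{\beta_t}\max_{\eb\in\cA_{t+1}}\|\eb\|_{V_t^{-1}}.
\]
Since $\cA_{t+1}\subseteq\cA_t$ and $V_t\succeq V_{t-1}$, the quantity $w_t:=\max_{\eb\in\cA_{t+1}}\|\eb\|_{V_t^{-1}}$ is nonincreasing in $t$. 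Moreover $w_t\le\|\ab_{t+1}\|_{V_t^{-1}}$, because $\ab_{t+1}$ maximizes this norm over $\cA_{t+1}$. Any recommended $\hat\ab_T\in\cA_{T+1}$ therefore has simple regret at most $4\sqrt{\beta_T}\,w_T$ with $\sqrt{\beta_T}=\tilde O(\sqrt d)$. It thus suffices to show that $w_T\lesssim x^\star$, the quantity inside the minimum.

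\textbf{Step 3: elliptical potential with weights.} Let $u_t=\|\ab_t\|_{V_{t-1}^{-1}}\ge w_{t-1}\ge w_T$. The weighted elliptical potential lemma gives
\[
\sum_{t}\min\{1,\sigma_t^{-2}u_t^2\}\le\iota(T).
\]
Now fix a threshold $x$ and consider the rounds with $\sigma_t\ge x$. Suppose $w_T>x$. Then each such round has $\sigma_t^{-2}u_t^2\ge\sigma_t^{-2}w_T^2\ge\sigma_t^{-2}x^2$, and this is at most $\min\{1,\cdot\}$-compatible because $\sigma_t\ge x$. Meanwhile, the rounds with $\sigma_t<x$ each contribute $\min\{1,\cdot\}=1$ to the sum, since $u_t>x>\sigma_t$.

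\textbf{Step 4: counting and choosing $k$.} Fix $k$ with $\sigma_T^{(k-1)}\le x\le\sigma_T^{(k)}$, so that exactly $k-1$ rounds have $\sigma_t< x$ (ties handled by the ordering). Combining the two contributions from Step 3 gives
\[
(k-1)+x^2\sum_{i\ge k}[\sigma_T^{(i)}]^{-2}\le\iota(T).
\]
Choosing $x$ to be the fixed point $x=\sqrt{(\iota(T)-k+1)/\sum_{i\ge k}[\sigma_T^{(i)}]^{-2}}$ lying in the correct interval makes this inequality tight. A slightly larger $x$, for instance $x(1+o(1))$ or $\sqrt2\,x$, then yields a strict violation, i.e.\ a contradiction. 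Hence $w_T\lesssim x$ for every admissible $k$, so we may take the minimum, giving $\simpleReg(T)=\tilde O(\sqrt d)\,x$. Taking expectation over the failure event costs only $\delta\cdot 2$, which is negligible once $\delta$ is set small.

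\textbf{Main obstacle.} The delicate step is Step 3/4. One must handle rounds where $u_t>\sigma_t$ carefully (these are capped at $1$) and make the monotonicity $u_t\ge w_T$ rigorous under elimination, where the argmax is taken over shrinking sets. One must also ensure that an admissible $k$ with $x$ in the right interval exists. For existence, we will argue by monotonicity of the map $k\mapsto$ (fixed-point value minus the interval endpoints), i.e.\ an intermediate-value/discrete crossing argument; the minimum over admissible $k$ is then well defined.
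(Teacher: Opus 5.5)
Your proposal is correct and follows essentially the same route as the paper: the confidence event, non-elimination of $\ab^*$ so that any survivor's gap is $O(\beta_T)$ times the largest surviving width, the monotonicity $\|\ab_t\|_{V_{t-1}^{-1}}\ge w_T$ for all $t\le T$, and the weighted elliptical potential bound $\sum_t\min\{1,\sigma_t^{-2}\|\ab_t\|_{V_{t-1}^{-1}}^2\}\le\iota(T)$ with every term lower-bounded through the final width, followed by solving $\sum_t\min\{1,x^2/\sigma_t^2\}=\iota(T)$. Your threshold-and-contradiction step is the paper's monotonicity argument, and the only cosmetic differences are that you recommend any arm of $\cA_{T+1}$ rather than the last pulled arm $\ab_T$ and cite the self-normalized bound instead of re-deriving it; one bookkeeping slip is that the stated $\beta_t$ bounds $\|\hat{\btheta}_t-\btheta^*\|_{V_t}$ itself (as in the paper's proof, despite the squared form written in Algorithm~\ref{algorithm:OFUL}), so your gap bound should read $4\beta_T w_T$ with $\beta_T=\tilde O(\sqrt d)$, not $4\sqrt{\beta_T}\,w_T$ with $\sqrt{\beta_T}=\tilde O(\sqrt d)$.
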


\begin{remark} 
    Theorem \ref{thm:main} provides a simple regret bound for \texttt{VAEE} that depends on the harmonic mean of the variances $\sigma_t^2$. To see this, we can simplify the bound by substituting $k = \tilde{O}(d)$ and $\iota(T) = \tilde{O}(d)$
    , which yields the following simplified expression:
    \begin{align}
    \label{eq:simple-SR}
        \simpleReg(T)= \tilde O\Bigg(d\Bigg[\sum_{t = 1}^T \frac{1}{\sigma_t^2} - \sum_{i = 1}^{\tilde{O}(d)} \frac{1}{[\sigma^{(i)}]^2} \Bigg]^{-\frac{1}{2}}\Bigg).
    \end{align}
    This bound in \eqref{eq:simple-SR} highlights that the simple regret decreases as the harmonic mean of the variances increases, effectively capturing the influence of low-variance actions on the overall performance. Notably, this result breaks the traditional $\sqrt{\Lambda}$ barrier, where $\Lambda = \sum_{t=1}^T \sigma_t^2$, demonstrating that our variance-adaptive approach can achieve significantly better performance in environments with heteroscedastic noise.
\end{remark}

\begin{remark}
It is worth noting that our harmonic-mean dependence is subtracted by the contribution of the $\tilde{O}(d)$ smallest variances. This subtraction is unavoidable due to the inherent difficulty of estimating a $d$-dimensional parameter, which requires at least $d$ well-explored actions. In the worst-case scenario, these $d$ actions may correspond to the smallest variances, and then it is impossible to achieve a near zero simple regret with only $T = O(1)$ rounds of noise-free exploration. Therefore, the subtraction term in our bound is necessary to account for this fundamental limitation.

Nonetheless, we can still show that our simple regret bound is strictly sharper than the $\sqrt{\Lambda}$-type bounds in prior works \citep{zhou2021nearly, zhao2023variance, jia2024does}. To see this, we observe that $ \min_{1 \le k \le T + 1} \Big\{ x = \sqrt{\frac{\iota(T) - k + 1}{\sum_{i = k}^T \frac{1}{[\sigma_T^{(i)}]^2}}} \  \bigg| \  x \in [\sigma_T^{(k - 1)}, \sigma_T^{(k)}] \Big\}$ is the solution to the following equation:
$x^2 = \frac{\iota(t)}{\sum_{i = 1}^t \frac{1}{\max(\sigma_i^2, x^2)}}.$ We further have
\begin{align}
x^2 \le \frac{\iota(t)}{\sum_{i = 1}^t \frac{1}{\sigma_i^2+ x^2}}
    \le \frac{\iota(t)}{\frac{t}{x^2 + t^{-1} \sum_{i = 1}^t \sigma_i^2}} = \frac{\iota(t) (x^2 + t^{-1}\sum_{i = 1}^t \sigma_i^2)}{t},    
\end{align}
where the second inequality follows from mean inequality. Rearranging the terms yields $x^2 = \tilde O(d \Lambda / t^2)$ when $t = \Omega(d)$. Please refer to Appendix \ref{section:proof-of-thm1} for detailed derivations. 
\end{remark}
\subsection{Comparison with Weighted OFUL}
In this subsection, we present a case study to illustrate the limitations of using the cumulative variance $\Lambda = \sum_{t=1}^T \sigma_t^2$ as a measure of statistical complexity in linear bandits with heteroscedastic noise and the potential weakness of existing algorithms that rely on this measure. For simplicity, we assume $\sum_{t=1}^{T}1/\sigma_{t}^2 \gg \sum_{i=1}^{\tilde O{(d)}} 1/[\sigma^{(i)}]^2$. Therefore, according to \eqref{eq:simple-SR} and \citet{zhou2022computationally}, 
{\small \begin{align}
\simpleReg_{\text{Alg 1}} \asymp d\bigg(\sum_{t=1}^T \frac{1}{\sigma_t^2}\bigg)^{-1/2},
\qquad
\simpleReg_{\woful} \asymp d\,\frac{\sqrt{\sum_{t=1}^T \sigma_t^2}}{T} 
\end{align}}%
First, by the HM-AM inequality, we have ${T}/({\sum_{t}\frac{1}{\sigma_t^2}})\leq {\sum_{t}\sigma_t^2}/{T}$,
which leads to a general relationship between the regret bounds of our methods: $\simpleReg_{\text{Alg 1}}\leq \simpleReg_{\woful}$ for any sequence $\sigma_t^2$. Therefore, our regret bound is always sharper  { whenever $\sum_{t=1}^{T}1/\sigma_{t}^2 \gg \sum_{i=1}^{\tilde O{(d)}} 1/[\sigma^{(i)}]^2$}. In the Table \ref{tab:sr-reg-examples-gap-correct}, we demonstrate the specific rate of improvement for some special variance sequences. We note that an improvement in simple regret does not necessarily lead to an improvement in cumulative regret. For example, this is evident in fast-decaying noise, as discussed in Remark \ref{remark:3.4}.

\begin{table}[t]
\centering
\caption{{Simple and Cumulative Regrets} of Algorithm~\ref{algorithm:OFUL} and Weighted-OFUL~\citep{zhou2022computationally}. We use $R(T)\le \sum_{t=1}^T \simpleReg(t)$ to upper bound the regret of Algorithm~\ref{algorithm:OFUL}.  {All comparisons in Table~\ref{tab:sr-reg-examples-gap-correct} are made under the assumption that $\sum_{t=1}^T 1/\sigma_t^{2} \gg \sum_{i=1}^{\widetilde{O}(d)} 1/[\sigma^{(i)}]^2$. For the concrete variance profiles in the table this assumption holds when $T$ is large enough relative to $d$: for the fast-decaying, flat-noise, and many-moderate-spike profiles it is satisfied as soon as $T \gg d$, while for the front-loaded super-precision profile it holds once $T \gg d^{5/4}$.}}
\label{tab:sr-reg-examples-gap-correct}
\renewcommand{\arraystretch}{1.08}
\setlength{\tabcolsep}{2.5pt}
\resizebox{1\linewidth}{!}{%
\begin{tabular}{@{} L{3.5cm} L{5cm} C{2.8cm} C{2.8cm} | C{2.8cm} C{2.8cm} @{}}
\toprule
\multicolumn{1}{c}{\textbf{Scenario}} &
\multicolumn{1}{c}{\textbf{$\sigma_t^2$}} &
\multicolumn{2}{c}{\textbf{Simple Regret}} &
\multicolumn{2}{c}{\textbf{Cumulative Regret}} \\
\cmidrule(r){3-4}\cmidrule(l){5-6}
& & \textbf{Algorithm \ref{algorithm:OFUL}} & \textbf{Weighted OFUL} & \textbf{Algorithm \ref{algorithm:OFUL}} & \textbf{Weighted OFUL} \\
\midrule

\textbf{Fast-Decaying Noise} &
\(\sigma_t^2 = 1/t^2\) &
\(\tilde O\big(\frac{d}{T^{3/2}}\big)\) &
\(\tilde O\big(\frac{d}{T}\big)\) &
\(\tilde O(d)\) &
\(\tilde O(d)\)
\\

\textbf{Flat Noise ($1/d$)} &
\(\sigma_t^2 \equiv 1/d\) &
\(\tilde O\big(\sqrt{\frac{d}{T}}\big)\) &
\(\tilde O\big(\sqrt{\frac{d}{T}}\big)\) &
\(\tilde O\big(\sqrt{dT}\big)\) &
\(\tilde O\big(\sqrt{dT}\big)\)
\\

\textbf{Many Moderate Spike} &
\(\alpha\in(0,1),\
\sigma_t^2 =
\begin{cases}
x,& t\le \alpha T,\\
1,& t> \alpha T,
\end{cases}\ \text{with } x=T^{-1/3}\) &
\(\tilde O\big(\frac{d}{T^{2/3}}\big)\) &
\(\tilde O\big(\frac{d}{\sqrt{T}}\big)\) &
\(\tilde O\big(d\,T^{1/3}\big)\) &
\(\tilde O\big(d\sqrt{T}\big)\)
\\

\textbf{Front-Loaded Super-Precision} &
\(\displaystyle
\sigma_t^2=
\begin{cases}
\min\{1/2,\,t^{-2}\}, & t\le T^{4/5},\\
1/2, & t> T^{4/5},
\end{cases}\) &
\(\tilde O\big(\frac{d}{T^{6/5}}\big)\) &
\(\tilde O\big(\frac{d}{\sqrt{T}}\big)\) &
\(\tilde O(d)\) &
\(\tilde O\big(d\sqrt{T}\big)\)
\\

\bottomrule
\end{tabular}%
}
\end{table}

\section{Stochastic Linear Bandits with Finite Action Space}

In this section, we consider the special case where the action set $\cA$ is finite. We propose a variance-adaptive G-optimal design based exploration strategy and establish a simple regret bound with harmonic-mean dependence which improves over Theorem \ref{thm:main} by a factor of $\sqrt{d}$.

\subsection{Variance-Adaptive G-Optimal Design Based Exploration}

\begin{algorithm}
    \caption{Variance Adaptive G-Optimal Design (\texttt{VAGD})
    \label{algorithm:variance_adaptive_2}}
        \begin{algorithmic}[1]
            \REQUIRE $\quad \mathcal{A} \subset \mathbb{R}^d$, $\delta$.
            \STATE Find nearly $G$-optimal design $\pi \in \Delta(\cA)$ with $|\text{supp}(\pi)| \le 4d \log\log d + 16$ as described in  \Cref{thm:approximate_G_optimal_design} that minimizes \begin{align*}
                \max_{\ab \in \cA} \|\ab\|_{V(\pi)^{-1}} \text{ subject to } \sum_{\ab \in \mathcal{A}} \pi(\ab) = 1.
            \end{align*}
            \STATE Let $\cT_0(\ab) \gets \varnothing$ for all $\ab \in \mathcal{A}$.
            \FOR {$t = 1, \dots, T$}
                \STATE Pull the action $\ab_t := \argmin_{\ab \in \mathcal{A}} \sum_{\tau \in \cT(\ab)} \frac{1}{\sigma_\tau^2 \cdot \pi(\ab)}$
                \STATE Observe reward $r_t$ and variance $\sigma_t$.
                \STATE Update the set $\cT_t(\ab_t) \gets \cT_{t - 1}(\ab_t) \cup \{t\}$ and $\cT_t(\ab) \gets \cT_{t - 1}(\ab)$ for all $\ab \neq \ab_t$.
            \ENDFOR
            \STATE Outout $\ab_{T + 1} = \argmax_{\ab \in \mathcal{A}} \langle \hat{\btheta}_T, \ab \rangle$ where $\hat{\btheta}_T = V_T^{-1} \sum_{t = 1}^T \sigma_t^{-2} r_t \ab_t$ and $V_T = I + \sum_{t = 1}^T \sigma_t^{-2} \ab_t \ab_t^\top$.
        \end{algorithmic}
\end{algorithm}

\paragraph{$G$-optimal design.}
In Algorithm \ref{algorithm:variance_adaptive_2}, we need to find a nearly $G$-optimal design $\pi \in \Delta(\cA)$ that maximizes $\log \det V(\pi)$. We first introduce some necessary notations and definitions regarding $D$-optimal and $G$-optimal designs. Let $\pi: \mathcal{A} \rightarrow[0,1]$ be a distribution on $\mathcal{A}$ so that $\sum_{\ab \in \mathcal{A}} \pi(\ab)=1$. Based on $\pi \in \mathcal{P}(\mathcal{A}_{\ell})$, define $V(\pi) \in \mathbb{R}^{d \times d}$ and $g(\pi) \in \mathbb{R}$ as follows $ V(\pi)=\sum_{\ab \in \mathcal{A}} \pi(\ab) \ab \ab^{\top}, \quad g(\pi)=\max _{\ab \in \mathcal{A}}\|\ab\|_{V(\pi)^{-1}}^2$. 
A design $\pi$ is defined as a $G$-optimal design if it minimises $g$. And a design $\pi$ is defined as a $D$-optimal design if it maximises $f(\pi) =\log \det V(\pi)$. 
The set Supp($\pi$) is sometimes called the core set. The following theorem characterizes the size of the core set and the minimum of $g$ and establishes the equivalence of $G$-optimal and $D$-optimal designs.
\begin{theorem}\citep[Kiefer-Wolfowitz]{lattimore2020bandit}
\label{thm:D_G_optimal_design_equivalence}
Assume that $\mathcal{A} \subset \mathbb{R}^d$ is compact and $\operatorname{span}(\mathcal{A})=\mathbb{R}^d$. The following are equivalent: (a) $\pi^*$ is a minimiser of $g$; (b) $\pi^*$ is a maximiser of $f(\pi)=\log \operatorname{det} V(\pi)$; (c) $g\left(\pi^*\right)=d$.


Furthermore, there exists a minimiser $\pi^*$ of $g$ such that $|\operatorname{Supp}\left(\pi^*\right)| \leq d(d+1) / 2$.
\end{theorem}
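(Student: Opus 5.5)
The plan is to treat $f(\pi)=\log\det V(\pi)$ as a concave function on the convex set of designs, derive its first-order optimality condition, and read off all three equivalences from one identity. That identity is a trace computation: for any design $\pi$ with $V(\pi)$ invertible,
\begin{align*}
\sum_{\ab\in\cA}\pi(\ab)\|\ab\|^2_{V(\pi)^{-1}}=\operatorname{tr}\Big(V(\pi)^{-1}\sum_{\ab}\pi(\ab)\ab\ab^\top\Big)=\operatorname{tr}(\Ib_d)=d .
\end{align*}
Since an average is at most a maximum, this gives $g(\pi)\ge d$ for every $\pi$.

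\textbf{Existence of a maximiser of $f$.} Because $V(\pi)$ ranges over the convex hull $\mathcal{K}=\operatorname{conv}\{\ab\ab^\top:\ab\in\cA\}$, I will work directly on $\mathcal{K}$. This set is compact because $\cA$ is compact. The map $M\mapsto\log\det M$ is upper semicontinuous on $\mathcal{K}$ if we set it to $-\infty$ on singular matrices. Since $\operatorname{span}(\cA)=\mathbb{R}^d$, some element of $\mathcal{K}$ is positive definite. Hence a maximiser $V^*=V(\pi^*)\succ 0$ exists, and by Carath\'eodory it can be realised by a finitely supported $\pi^*$.

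\textbf{First-order condition and the equivalences.} Concavity of $\log\det$ gives, for any design $\pi$,
\begin{align*}
f(\pi)\le f(\pi^*)+\operatorname{tr}\big(V(\pi^*)^{-1}(V(\pi)-V(\pi^*))\big)=f(\pi^*)+\sum_{\ab}\pi(\ab)\|\ab\|^2_{V(\pi^*)^{-1}}-d .
\end{align*}
For the reverse direction, take $\pi=(1-\epsilon)\pi^*+\epsilon\delta_{\ab}$ and let $\epsilon\downarrow 0$. If $\pi^*$ maximises $f$, the directional derivative $\|\ab\|^2_{V(\pi^*)^{-1}}-d$ must be $\le 0$ for every $\ab$. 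So $g(\pi^*)\le d$, and combined with $g\ge d$ this gives (b)$\Rightarrow$(c).

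For (c)$\Rightarrow$(b), plug $\|\ab\|^2_{V(\pi^*)^{-1}}\le d$ into the concavity inequality above; this yields $f(\pi)\le f(\pi^*)$ for all $\pi$.

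For (c)$\Rightarrow$(a), note that $g\ge d$ everywhere, so a design attaining $g=d$ is a minimiser of $g$.

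For (a)$\Rightarrow$(c), use the maximiser of $f$ from the existence step. By (b)$\Rightarrow$(c) it attains $g=d$, so $\min g=d$, and any minimiser of $g$ therefore also satisfies $g=d$.

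\textbf{Support bound.} I will apply Carath\'eodory inside the space of symmetric matrices, which has dimension $D=d(d+1)/2$. A naive application gives $D+1$ points, so the main subtlety is removing one. The fix is to show that $V^*$ lies in a hyperplane section of $\mathcal{K}$. Consider the linear functional $\ell(M)=\operatorname{tr}(V^{*-1}M)$. By (c), $\ell(\ab\ab^\top)=\|\ab\|^2_{V^{*-1}}\le d$ for all $\ab\in\cA$, while $\ell(V^*)=d$. It follows that any representation of $V^*$ as a convex combination of points $\ab\ab^\top$ uses only points with $\ell(\ab\ab^\top)=d$, i.e., points in the affine hyperplane $\{\ell=d\}$, which has dimension $D-1$. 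Carath\'eodory in dimension $D-1$ then gives a representation with at most $D=d(d+1)/2$ points. The resulting design still has $V(\pi^*)=V^*$, so it remains optimal.
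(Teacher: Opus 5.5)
The paper does not prove this result; it imports it from Lattimore and Szepesv\'ari. Your argument is correct and is essentially the standard textbook proof: concavity of $\log\det$, the trace identity $\sum_{\ab}\pi(\ab)\|\ab\|^2_{V(\pi)^{-1}}=d$ and first-order optimality give all three equivalences, and the support bound comes from every support point satisfying $\|\ab\|^2_{V(\pi^*)^{-1}}=d$, which you exploit by applying Carath\'eodory on the hyperplane $\{\operatorname{tr}(V^{*-1}M)=d\}$ (equivalent to the usual reduction along a null direction $\sum_{\ab\in\operatorname{Supp}(\pi^*)}v(\ab)\ab\ab^\top=0$); the only thing left implicit is the convention $g(\pi)=+\infty$ when $V(\pi)$ is singular.
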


However, the core set size of the $G$-optimal design given by  \Cref{thm:D_G_optimal_design_equivalence} is at most $d(d+1)/2$, which may cause additional overhead in our variance-adaptive algorithm. To address this issue, we can find an approximate $G$-optimal design with a smaller core set size using the following theorem.

\begin{theorem}[\citealt{lattimore2020learning}] \label{thm:approximate_G_optimal_design}
    Suppose that $\mathcal{A} \subset \mathbb{R}^d$ is compact and $\operatorname{span}(\mathcal{A})=\mathbb{R}^d$. There exists a probability distribution $\pi \in \Delta(\cA)$ such that $g(\pi) \le 2d$ and the cardinality of the core set of $\pi$ is at most $4d \log \log d + 16$.
\end{theorem}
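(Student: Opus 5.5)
The plan is the standard constructive proof of \citet{lattimore2020learning}. We run a Frank--Wolfe (Fedorov--Wynn / Khachiyan) ascent on the concave objective $f(\pi)=\log\det V(\pi)$, started from a well-chosen initial design of support $O(d)$. Each iteration adds at most one new support point, so it suffices to show two things: the initial design has small support, and few iterations are needed before $g(\pi)\le 2d$.

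\textbf{Reduction.} First reduce to finite $\cA$. Since $\cA$ is compact and spans $\mathbb{R}^d$, we can run the argument on a finite spanning subset containing near-maximizers of $\|\ab\|_{V^{-1}}$. Equivalently, at each step we use a maximizer of $\ab\mapsto\|\ab\|^2_{V(\pi)^{-1}}$ over compact $\cA$, which exists by continuity. All designs constructed are finitely supported.

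\textbf{Initialization.} Use the Kumar--Yildirim greedy initialization. Pick $\ab_1$ of maximal norm, and then successively pick points maximizing the norm of the projection onto the orthogonal complement of the span of the chosen points. Together with suitable pairs of extreme points, this gives at most $2d$ points. Let $\pi_0$ be uniform on them. The point of this choice is a volume bound: $f(\pi^*)-f(\pi_0)\le d\log d + O(d)$, where $\pi^*$ is the $D$-optimal design from \Cref{thm:D_G_optimal_design_equivalence}.

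\textbf{Iteration and the two key inequalities.} Set $g_k=\max_{\ab}\|\ab\|^2_{V(\pi_k)^{-1}}$, attained at $\ab_k$, and $\varepsilon_k=g_k/d-1$. Update
\begin{align*}
\pi_{k+1}=(1-\gamma_k)\pi_k+\gamma_k\delta_{\ab_k},\qquad \gamma_k=\frac{g_k/d-1}{g_k-1},
\end{align*}
which is the exact line-search step. Two facts drive the analysis:
\begin{itemize}
\item[(i)] By the matrix determinant lemma, the increase is
\begin{align*}
f(\pi_{k+1})-f(\pi_k)=d\,\phi(\varepsilon_k),\qquad \phi(\varepsilon)\approx\log(1+\varepsilon)-\tfrac{\varepsilon}{1+\varepsilon}\cdot(\text{const}).
\end{align*}
In particular the increase is $\gtrsim d\min(\varepsilon_k,\varepsilon_k^2)$, and for large $\varepsilon_k$ it is $\gtrsim d\log\varepsilon_k$.
\item[(ii)] By concavity of $f$ and $\nabla f(\pi)[\ab]=\|\ab\|^2_{V(\pi)^{-1}}$, the gap satisfies
\begin{align*}
f(\pi^*)-f(\pi_k)\le \langle\nabla f(\pi_k),\pi^*-\pi_k\rangle\le g_k-d=d\varepsilon_k.
\end{align*}
\end{itemize}
Stop at the first $k$ with $g_k\le 2d$. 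The output then satisfies $g(\pi)\le 2d$ and $|\mathrm{Supp}(\pi)|\le 2d+k$.

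\textbf{Counting iterations (main obstacle).} While $\varepsilon_k>1$, split the run into phases according to the gap $\Delta_k=f(\pi^*)-f(\pi_k)\in[d\,2^{j},d\,2^{j+1})$. The gap starts at $\Delta_0\le d\log d+O(d)$, so only $O(\log\log d)$ phases occur. Within a phase, (ii) gives $\varepsilon_k\ge 2^{j}$, and (i) then shows each step removes a $d\log$-sized or constant-times-$d$ fraction of the phase's width $d2^{j}$. Balancing these yields at most $O(d)$ steps per phase overall. Getting the precise constants, namely $4d\log\log d$ from the phases and a $+16$ additive slack from the final phase near $\varepsilon=1$ combined with the $2d$ initial points, requires a careful lower bound on $\phi$ and careful accounting of the phase boundaries. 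This is the step I expect to be delicate. I would first try to mimic Todd's analysis of Khachiyan's algorithm, whose iteration count is $O(d\log\log d+d/\varepsilon)$ for a $(1+\varepsilon)$-approximation, specialized to $\varepsilon=1$.
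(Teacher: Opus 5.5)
The paper gives no proof of this statement; it cites \citet{lattimore2020learning}, whose argument follows the same Frank--Wolfe plus Kumar--Yildirim route you describe. Your reduction to compact $\cA$ and your initialization bound are fine: Hadamard's inequality in the Gram--Schmidt basis of the greedily chosen points gives $f(\pi^*)-f(\pi_0)=O(d\log d)$.

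The genuine gap is in your key estimate (i), which is off by a factor of $d$. Write $g_k=d(1+\varepsilon_k)$ and use your line-search step. The matrix determinant lemma then gives exactly
\begin{align*}
f(\pi_{k+1})-f(\pi_k)&=\log(1+\varepsilon_k)-(d-1)\log\Bigl(1+\frac{\varepsilon_k}{(d-1)(1+\varepsilon_k)}\Bigr)\\
&\in\Bigl[\log(1+\varepsilon_k)-\frac{\varepsilon_k}{1+\varepsilon_k},\ \log(1+\varepsilon_k)\Bigr].
\end{align*}
So one step gains about $\varepsilon_k^2/2$ for small $\varepsilon_k$, about $0.19$ at $\varepsilon_k=1$, and about $\log\varepsilon_k$ for large $\varepsilon_k$. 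It never gains $d\min(\varepsilon_k,\varepsilon_k^2)$ or $d\log\varepsilon_k$, which is expected: a single rank-one update can only stretch the ellipsoid in one direction. Your phase count is also inconsistent with your own (i). Had (i) held, the same bookkeeping would give $O(\log d)$ iterations in total, not $O(d)$ per phase.

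Once (i) is corrected, your gap bound (ii) is too weak to produce a $\log\log d$ count. In the phase $\Delta_k\in[d2^j,d2^{j+1})$ it only gives $\varepsilon_k\ge 2^j$. That yields a gain of order $j$ per step, about $d2^j/j$ steps per phase, and order $d\log d/\log\log d$ steps overall. The missing ingredient is the multiplicative duality bound, obtained from AM--GM on the eigenvalues of $V(\pi_k)^{-1}V(\pi^*)$:
\begin{align*}
f(\pi^*)-f(\pi_k)&=\log\det\bigl(V(\pi_k)^{-1}V(\pi^*)\bigr)\le d\log\Bigl(\frac{1}{d}\operatorname{tr}\bigl(V(\pi_k)^{-1}V(\pi^*)\bigr)\Bigr)\\
&\le d\log\frac{g_k}{d}=d\log(1+\varepsilon_k).
\end{align*}
With this bound the count goes through as follows.
\begin{itemize}
\item In phase $j\ge 1$ it forces $\log(1+\varepsilon_k)\ge 2^j$. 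Each step then gains at least $2^j-1$, so the phase lasts at most $2d+1$ steps.
\item There are $O(1+\log\log d)$ such phases before $\Delta_k<2d$.
\item After that, every step with $\varepsilon_k>1$ gains at least $\log 2-\tfrac12$. So $O(d)$ further steps reach $g_k\le 2d$.
\end{itemize}
This repairs the argument and gives support $O(d\max\{1,\log\log d\})$ with $g(\pi)\le 2d$. However, in this accounting the final stretch alone costs a constant multiple of $d$ steps, so the explicit bound $4d\log\log d+16$ does not come out of it. For the stated constants you would still have to import a sharper iteration analysis from the literature, which the paper does only by citation.
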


\paragraph{Adaptive arm selection.} 
After obtaining the approximate $G$-optimal design $\pi$, we use it to guide the arm selection process. Unlike traditional $G$-optimal design-based algorithms, which pull arms according to the fixed distribution $\pi$, our algorithm adaptively selects arms based on the observed variances $\sigma_t$. Specifically, at each round $t$, we choose the arm $\ab_t$ that has been pulled the fewest times relative to its probability under $\pi$, weighted by the inverse of the observed variance.
This adaptive strategy prevents over-exploration caused by the unpredictable and heteroscedastic nature of noise and ensures that we collect sufficient information from all arms in the core set of $\pi$.

\paragraph{Weighted least-squares estimator.}
Inspired by \citet{zhou2021nearly}, we use a variance-weighted least-squares estimator to estimate the unknown parameter $\btheta^*$. Specifically, after $T$ rounds of exploration, we compute the estimator $\hat{\btheta}_T$ as follows: \begin{align*}
    \hat{\btheta}_T = V_T^{-1} \sum_{t = 1}^T \sigma_t^{-2} r_t \ab_t, \quad V_T = I + \sum_{t = 1}^T \sigma_t^{-2} \ab_t \ab_t^\top.
\end{align*}
Under the finite action space regime, we show that this estimator achieves a tighter confidence bound compared to the general case, replacing the $\sqrt{d}$ factor with $\sqrt{\log(|\cA|)}$ in the confidence radius.

Finally, we recommend the action $\ab_{T + 1}$ that maximizes the estimated reward based on $\hat{\btheta}_T$.

\subsection{Simple Regret Bound for VAGD}

\begin{theorem}[Simple Regret of Algorithm \ref{algorithm:variance_adaptive_2}] \label{thm:finite_action_set}
    Suppose that $\mathcal{A} \subset \mathbb{R}^d$ is compact and $\operatorname{span}(\mathcal{A})=\mathbb{R}^d$. If we follow \Cref{algorithm:variance_adaptive_2}, then it holds that with probability at least $1 - \delta$, \begin{align*}
        \la \btheta^*, \ab^* \ra - \la \btheta^*, \ab_{T + 1} \ra \le  2\sqrt{d \log(|\cA| / \delta) / \Big[\sum_{t = 1}^T \frac{1}{\sigma_t^2} - \sum_{i = 1}^{4d \log \log d + 16} \frac{1}{[\sigma_T^{(i)}]^2} \Big]}.
    \end{align*}
\end{theorem}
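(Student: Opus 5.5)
We decompose the regret using the weighted least-squares estimator and the fact that $\ab_{T+1}$ maximizes $\la \hat\btheta_T,\cdot\ra$, which gives
\[
\la \btheta^*, \ab^*-\ab_{T+1}\ra \le \la \btheta^*-\hat\btheta_T, \ab^*\ra + \la \hat\btheta_T-\btheta^*, \ab_{T+1}\ra .
\]
It then suffices to show that, with probability $1-\delta$, every $\ab\in\cA$ satisfies $|\la \hat\btheta_T-\btheta^*,\ab\ra| \lesssim \sqrt{\log(|\cA|/\delta)}\,\|\ab\|_{V_T^{-1}}$, and that $\max_{\ab}\|\ab\|_{V_T^{-1}}^2 \le d/\big[\sum_t \sigma_t^{-2} - \sum_{i\le m}[\sigma_T^{(i)}]^{-2}\big]$, where $m = 4d\log\log d+16 \ge |\mathrm{supp}(\pi)|$.

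\textbf{Concentration step.} The key structural fact is that the selection rule in Algorithm~\ref{algorithm:variance_adaptive_2} depends only on the past variances. If the variance sequence is oblivious, or is treated as fixed given $\cF$, the actions $\ab_1,\dots,\ab_T$ are determined by $\{\sigma_t\}$ and not by the rewards. Hence $V_T$ is deterministic given the variances, and
\[
\la \hat\btheta_T-\btheta^*,\ab\ra = \ab^\top V_T^{-1}\sum_t \sigma_t^{-2}\ab_t\eta_t - \ab^\top V_T^{-1}\btheta^* .
\]
The first term is a weighted sum of independent $\sigma_t$-sub-Gaussian variables. Its variance proxy is $\sum_t \sigma_t^{-2}(\ab^\top V_T^{-1}\ab_t)^2 \le \|\ab\|^2_{V_T^{-1}}$. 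The bias term is at most $\|\ab\|_{V_T^{-1}}\|\btheta^*\|_{V_T^{-1}} \le \|\ab\|_{V_T^{-1}}$. A sub-Gaussian tail bound plus a union bound over the $|\cA|$ actions gives a radius $\sqrt{2\log(2|\cA|/\delta)}+1$, which can be absorbed into the constant in front of $\sqrt{\log(|\cA|/\delta)}$. This is where $\sqrt d$ is replaced by $\sqrt{\log|\cA|}$. If variances are revealed adaptively, I would instead use a martingale (Freedman-type) argument. The clean version needs the design to be predictable, so I would state the oblivious-variance assumption explicitly.

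\textbf{Design step (main obstacle).} Write $w(\ab)=\sum_{\tau\in\cT_T(\ab)}\sigma_\tau^{-2}$, so that $V_T \succeq \sum_{\ab\in\mathrm{supp}\,\pi} w(\ab)\,\ab\ab^\top$. I aim to show $w(\ab)\ge \pi(\ab)\,S$ for every $\ab\in\mathrm{supp}(\pi)$, where $S=\sum_t\sigma_t^{-2}-\sum_{i\le m}[\sigma_T^{(i)}]^{-2}$. This yields $V_T\succeq S\,V(\pi)$, and then Theorem~\ref{thm:approximate_G_optimal_design} gives $\|\ab\|^2_{V_T^{-1}}\le g(\pi)/S\le 2d/S$.

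To prove $w(\ab)\ge\pi(\ab)S$, I use the greedy rule, which always pulls the arm in the support minimizing $w(\ab)/\pi(\ab)$. Let $\ab$ be any support arm and let $s$ be its last pull time, or $s=0$ if it is never pulled. Consider any other support arm $\bb$ and its last pull $t_\bb$ after $s$. At time $t_\bb$ the rule chose $\bb$, so $w_{t_\bb-1}(\bb)/\pi(\bb)\le w_{t_\bb-1}(\ab)/\pi(\ab)\le w(\ab)/\pi(\ab)$. Therefore the final weight satisfies $w(\bb)\le \pi(\bb)\,w(\ab)/\pi(\ab)+\sigma_{t_\bb}^{-2}$. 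Summing over $\bb$ gives
\[
\sum_t\sigma_t^{-2} \le \frac{w(\ab)}{\pi(\ab)} + \sum_{\bb} \sigma_{t_\bb}^{-2} \le \frac{w(\ab)}{\pi(\ab)} + \sum_{i\le m}[\sigma_T^{(i)}]^{-2},
\]
because there are at most $m$ overshoot terms, each equal to one inverse variance, and these are dominated by the $m$ largest inverse variances. The delicate part is the bookkeeping for arms whose last pull precedes $s$, and for the arm $\ab$ itself. In both cases $w(\bb)/\pi(\bb)\le w(\ab)/\pi(\ab)+\sigma^{-2}_{t_\bb}/\pi(\bb)$ holds, which fits the same scheme.

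Combining the two steps gives regret at most $2\cdot c\sqrt{\log(|\cA|/\delta)}\sqrt{2d/S}$. I then adjust constants, for example by using $g(\pi)\le 2d$ and the sub-Gaussian constant, to match the stated bound.
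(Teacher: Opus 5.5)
Your proposal follows the paper's proof essentially step for step: the same decomposition using that $\ab_{T+1}$ maximizes $\la \hat\btheta_T,\cdot\ra$, a fixed-arm sub-Gaussian tail bound with a union bound over $\cA$, and the same greedy-balancing argument giving $V_T \succeq S\,V(\pi)$ and then $g(\pi)\le 2d$. You take an arbitrary reference arm where the paper takes the minimizer; since weights only increase, the comparison at each other arm's last pull works either way, so the ``bookkeeping'' you flag is not actually delicate. Your version is also more careful than the paper's: you keep the ridge bias $-V_T^{-1}\btheta^*$, which the paper silently drops, and you make explicit that the Hoeffding step needs the design, hence the variance sequence, to be non-adaptive.

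The one claim to drop is that constants can be tuned to give exactly $2\sqrt{d\log(|\cA|/\delta)/S}$. Your chain yields roughly $2\big(\sqrt{2\log(2|\cA|/\delta)}+1\big)\sqrt{2d/S}$. The paper's own final step likewise only gives $2\sqrt{2d}\cdot\sqrt{2\log(2|\cA|/\delta)/S} = 4\sqrt{d\log(2|\cA|/\delta)/S}$. So the statement is established only up to an absolute constant.
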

\begin{remark}
    Theorem \ref{thm:finite_action_set} establishes a simple regret bound for Algorithm \ref{algorithm:variance_adaptive_2} that depends on the harmonic mean of the noise variances $\sigma_t^2$, while improving the dependence on the dimension $d$ compared to Theorem \ref{thm:main}. In particular, the $\sqrt{d}$ factor in the numerator is replaced by $\sqrt{\log(|\cA|)}$, which can be substantially smaller when the action set $\cA$ is finite and of moderate size. This improvement is obtained by exploiting the finite action space structure and employing a variance-adaptive G-optimal design exploration strategy. Consequently, Algorithm \ref{algorithm:variance_adaptive_2} is especially effective in settings with limited action sets, enabling more efficient exploration and improved simple regret performance.
\end{remark}

\section{Lower Bound}

In this section, we establish a lower bound for the simple regret in linear bandits with heteroscedastic noise. Our lower bound nearly matches the upper bound in Theorem \ref{thm:main} up to logarithmic factors, demonstrating the optimality of our proposed algorithm.

\begin{theorem}[Instance-dependent lower bound.] \label{thm:lower_bound}
    For any $d \ge 2$ and $T \ge 1$, and any algorithm $\cA$, there exists a linear bandit instance with heteroscedastic Gaussian noise satisfying our assumptions such that the simple regret is lower bounded as follows:
    \begin{align*}
        \EE[\simpleReg(T)] \ge \frac{3}{16} d\cdot  \bigg(\sum_{t = 1}^T \frac{1}{\sigma_t^2}\bigg)^{-1/2}.
    \end{align*}
\end{theorem}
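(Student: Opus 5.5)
We prove the lower bound with a hypercube construction and Assouad's lemma; the $d$ factor comes from summing $d$ coordinate-wise two-point testing problems. Write $H = \sum_{t=1}^T \sigma_t^{-2}$ and fix the (known, deterministic) variance sequence $\{\sigma_t\}$. Take the action set $\cA = \{\ab \in \RR^d : \|\ab\|_2 \le 1\}$, or, to keep the regret decomposable, the scaled hypercube $\cA = \{-1/\sqrt d, 1/\sqrt d\}^d$. Index instances by $\bxi \in \{\pm 1\}^d$ and set $\btheta_{\bxi} = \Delta\, \bxi$ with $\Delta$ to be chosen, so that $\|\btheta_{\bxi}\|_2 = \Delta\sqrt d \le 1$. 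The noise at round $t$ is $N(0,\sigma_t^2)$, which satisfies the sub-Gaussian assumption.

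\textbf{Regret decomposition.} For the hypercube action set, the optimal action is $\ab^* = \bxi/\sqrt d$. For any recommendation $\hat\ab$,
\[
\langle\btheta_{\bxi},\ab^*-\hat\ab\rangle = \frac{2\Delta}{\sqrt d}\sum_{j=1}^d \mathbb{1}\{\mathrm{sign}(\hat a_j)\ne \xi_j\}.
\]
So the simple regret is $(2\Delta/\sqrt d)$ times the expected Hamming error of the sign vector of $\hat\ab$. Assouad's lemma then gives
\[
\sup_{\bxi}\EE_{\bxi}[\simpleReg] \ge \frac{2\Delta}{\sqrt d}\cdot \frac d2 \min_{\bxi\sim_j\bxi'}\bigl(1-\mathrm{TV}(P_{\bxi},P_{\bxi'})\bigr),
\]
where $\bxi\sim_j\bxi'$ means the two vectors differ only in coordinate $j$, and $P_{\bxi}$ is the law of the whole interaction transcript under instance $\bxi$.

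\textbf{Divergence bound.} We bound the total variation by Pinsker, $\mathrm{TV}\le\sqrt{\KL/2}$, and compute $\KL$ with the divergence decomposition for adaptive policies:
\[
\KL(P_{\bxi},P_{\bxi'}) = \sum_{t}\EE_{\bxi}\Bigl[\frac{\langle \btheta_{\bxi}-\btheta_{\bxi'},\ab_t\rangle^2}{2\sigma_t^2}\Bigr] = \sum_t \EE_{\bxi}\Bigl[\frac{4\Delta^2 a_{t,j}^2}{2\sigma_t^2}\Bigr].
\]
This is where the construction matters. On the hypercube, $a_{t,j}^2 = 1/d$ for every action, so $\KL = 2\Delta^2 H/d$ uniformly over all algorithms and all $j$. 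This deterministic control is why we prefer the hypercube to the ball: on the ball the information $a_{t,j}^2$ could be concentrated on one coordinate, which would force an averaging-over-$j$ argument. We then choose $\Delta = c\sqrt d\, H^{-1/2}$, so that $\KL = 2c^2$ is a constant and $1-\mathrm{TV}\ge 1-c$. The resulting bound is $\Delta\sqrt d(1-c) = c(1-c)\, d\, H^{-1/2}$. The constant $3/16$ arises from optimizing $c$, e.g. $c=1/4$ giving $c(1-c)=3/16$, possibly after using the sharper Bretagnolle--Huber inequality instead of Pinsker.

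\textbf{Main obstacle: the norm constraint.} We need $\|\btheta_{\bxi}\|_2 = \Delta\sqrt d = c\, d\, H^{-1/2}\le 1$, i.e. $H \gtrsim d^2$. When $H$ is small the stated bound exceeds the trivial range, since simple regret is at most $2$. I would handle this by a case split: either restrict to the regime $H\ge d^2/16$, where the construction is valid, or cap $\Delta$ at $1/\sqrt d$, in which case the instances are nearly indistinguishable and the regret is $\Omega(1)$, which still dominates whenever the stated bound is at most a constant. Reconciling the exact constant $3/16$ with this constraint is the step I expect to need the most care. Everything else is the standard Assouad and divergence-decomposition calculation.
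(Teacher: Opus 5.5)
Your argument is the paper's argument. The paper also takes a hypercube action set with $\btheta\in\{\pm c\}^d$ and uses the fact that every $a_{t,j}^2$ is the same, so the per-coordinate Gaussian KL is deterministic; it then applies Le Cam and Pinsker with $c=\tfrac14 H^{-1/2}$ to get per-coordinate error $3/8$, and aggregates Assouad-style to $3d/8$ expected sign mistakes, each costing $2c$. Your $\{\pm 1/\sqrt d\}^d$ with $\Delta=c\sqrt d$ is the same reward model rescaled, and it yields the same constant $3/16$.

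The norm obstacle you flag is genuine, and the paper does not resolve it: it uses $\cA=\{-1,1\}^d$, whose actions have norm $\sqrt d$, and never checks $\|\btheta\|_2\le 1$. Under the boundedness assumptions the simple regret is at most $2$, so the bound cannot hold verbatim once $H<(3d/32)^2$. Your case split, giving $\min\{\tfrac34,\tfrac{3}{16}\,d\,H^{-1/2}\}$, is therefore the correct form of the result. The exact statement holds for $H\ge d^2/16$, or for all $H$ if the norm constraints are dropped.
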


\begin{remark}
    Theorem \ref{thm:lower_bound} establishes an variance-sequence-dependent lower bound  for the simple regret in linear bandits with heteroscedastic noise. This lower bound matches the upper bound in Theorem \ref{thm:main} up to logarithmic factors, indicating that our proposed algorithm is nearly optimal in terms of its dependence on the harmonic mean of the variances $\sigma_t^2$. This result highlights the fundamental difficulty of the best-arm identification problem in linear bandits with heteroscedastic noise and underscores the effectiveness of our variance-adaptive approach. In contrast, in Table \ref{tab:regret_comparison}, the worst-case lower bound   {established in previous studies \citep{he2025variance,jia2024does}}  is derived by constructing an instance where all variances are equal, which does not capture the complexity of heteroscedastic linear bandits, especially when the variances vary significantly across actions and time steps.
\end{remark}

\section{Conclusion and Future Work}

In this paper, we study the sample complexity of stochastic linear bandits with heteroscedastic noise under a fixed action set. We propose a variance-adaptive algorithm that achieves a nearly instance-optimal simple regret bound, characterized by the harmonic mean of the noise variances. We further establish a nearly matching lower bound, demonstrating the optimality of our algorithm. Together, these results provide a comprehensive characterization of the statistical complexity of linear bandits with heteroscedastic noise.

There are several promising directions for future work. First, one could consider settings where the context is not fixed but instead sampled from an unknown distribution. Second, it would be natural to extend our results to the case where the noise variances are unknown and must be estimated from data. Third, in reinforcement learning, the variance of the noise is governed by the transition dynamics, which can themselves be estimated from historical data. Extending our results to Markov decision processes using such estimated variances would be an interesting direction to explore.
\newpage

\section*{Acknowledgment}
We thank the anonymous reviewers and area chair for their helpful comments. HZ and QG are supported in part by the National Science Foundation DMS-2323113 and IIS-2403400. HZ is also supported in part by Amazon PhD Fellowship. T. Jin  and V. Y. F. Tan are supported by a Singapore Ministry of Education (MOE) AcRF Tier 2 grant under grant number A-8004062-00-00. WW and PX are supported in part by the National Science Foundation (DMS-2323112) and the Whitehead Scholars Program at the Duke University School of Medicine. 
The views and conclusions contained in this paper are those of the authors and should not be interpreted as representing any funding agencies.

\bibliography{reference}
\bibliographystyle{iclr2026_conference}

\newpage

\appendix
\section{LLM Usage}
We used an LLM only for grammatical and stylistic polishing of the manuscript. No research ideas or results were generated by the LLM. The authors wrote and verified all technical content.
\section{Proof of Theorem \ref{thm:main}} \label{section:proof-of-thm1}
\begin{lemma}[Matrix Inversion Lemma, \citet{harville1998matrix}] \label{lemma:matrix-inversion}
    For any invertible matrix $A \in \mathbb{R}^{d \times d}$, vector $\ub, \vb \in \mathbb{R}^d$, it holds that \begin{align*}
        (A + \ub\vb^\top)^{-1} = A^{-1} - \frac{A^{-1} \ub \vb^\top A^{-1}}{1 + \vb^\top A^{-1} \ub}.
    \end{align*}
\end{lemma}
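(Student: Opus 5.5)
The plan is to verify the identity directly. Since the lemma asserts a formula for an inverse, it suffices to check that the proposed right-hand side, call it $B := A^{-1} - \frac{A^{-1}\ub\vb^\top A^{-1}}{1 + \vb^\top A^{-1}\ub}$, satisfies $(A + \ub\vb^\top) B = I_d$. For a square matrix, a one-sided inverse is automatically two-sided, so this gives both invertibility of $A + \ub\vb^\top$ and the formula. Implicit in the statement, as in every use in this paper, is the nondegeneracy condition $1 + \vb^\top A^{-1}\ub \neq 0$, which we assume throughout so that $B$ is well defined.

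\emph{Key steps, in order.} Write $c := \vb^\top A^{-1}\ub$, a scalar. Expanding the product gives
\begin{align*}
(A + \ub\vb^\top)B = I_d + \ub\vb^\top A^{-1} - \frac{\ub\vb^\top A^{-1} + \ub(\vb^\top A^{-1}\ub)\vb^\top A^{-1}}{1+c}.
\end{align*}
The only real observation is the middle grouping. In the term $\ub\vb^\top A^{-1}\ub\vb^\top A^{-1}$, the factor $\vb^\top A^{-1}\ub = c$ is a scalar and can be pulled out, leaving $c\,\ub\vb^\top A^{-1}$. The subtracted fraction therefore equals $\frac{(1+c)\,\ub\vb^\top A^{-1}}{1+c} = \ub\vb^\top A^{-1}$, which cancels the second term exactly and leaves $I_d$.

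\emph{Main obstacle.} There is essentially none beyond the associativity and scalar-extraction step. The one subtlety to state explicitly is the condition $1 + c \neq 0$. If this is required to be shown rather than assumed, one can note the determinant identity $\det(A + \ub\vb^\top) = \det(A)(1 + \vb^\top A^{-1}\ub)$. This shows $A + \ub\vb^\top$ is invertible precisely when $1 + c \neq 0$.

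In our application, $A = V_{t-1} \succ 0$, $\ub = \sigma_t^{-1}\ab_t$ and $\vb = \sigma_t^{-1}\ab_t$. Then $c = \sigma_t^{-2}\|\ab_t\|_{V_{t-1}^{-1}}^2 \ge 0$, so $1 + c \ge 1$ and the condition holds automatically.
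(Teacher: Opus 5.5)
Your proof is correct. Expanding $(A+\ub\vb^\top)B$ and pulling out the scalar $\vb^\top A^{-1}\ub$ gives exactly $I_d$, and for a square matrix a right inverse is automatically a two-sided inverse.

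The paper does not prove this lemma; it only cites Harville. Your direct verification is the standard argument.

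You are right to flag the condition $1+\vb^\top A^{-1}\ub\neq 0$. It is a hypothesis missing from the statement as written, and it cannot be derived from the stated assumptions. For example, take $A=I_d$ and $\ub=-\vb$ with $\|\vb\|_2=1$: the denominator vanishes and $I_d-\vb\vb^\top$ is singular. Your determinant identity correctly shows that this condition is equivalent to invertibility of $A+\ub\vb^\top$.

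In every use in the paper, $A=V_{t-1}\succeq\lambda I_d\succ 0$ and the rank-one update is $\sigma_t^{-2}\ab_t\ab_t^\top$. So the denominator is at least $1$ and the condition holds automatically, as you note.
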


\begin{lemma}[Elliptical Potential Lemma, \citet{abbasi2011improved}] \label{lemma:elliptical-potential}
    For any sequence of vectors $\{\xb_t\}_{t = 1}^T \subset \mathbb{R}^d$, let $V_0 = \lambda \Ib$ for some $\lambda > 0$ and $V_t = V_{t - 1} + \xb_t \xb_t^\top$ for $t \ge 1$. If $\|\xb_t\|_2 \leq L$ for all $t$, then we have \begin{align*}
        \sum_{t = 1}^T \min\{1, \|\xb_t\|_{V_{t - 1}^{-1}}^2\} \leq 2 d \log \frac{\lambda + T L^2}{d \lambda}.
    \end{align*}
\end{lemma}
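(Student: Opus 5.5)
The plan is the standard determinant-telescoping argument. The statement has a normalization issue that I will address at the end: the argument yields the bound with $d\lambda + TL^2$ in the numerator.

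\emph{Step 1 (one-step determinant identity).} Since $V_{t-1}\succ 0$, factor
\[
V_t = V_{t-1}^{1/2}\bigl(\Ib + V_{t-1}^{-1/2}\xb_t\xb_t^\top V_{t-1}^{-1/2}\bigr)V_{t-1}^{1/2}.
\]
The middle matrix is a rank-one perturbation of the identity. Its eigenvalues are $1+\|\xb_t\|_{V_{t-1}^{-1}}^2$ and $1$ with multiplicity $d-1$. Hence
\[
\det V_t=\det V_{t-1}\,\bigl(1+\|\xb_t\|_{V_{t-1}^{-1}}^2\bigr).
\]
Telescoping from $t=1$ to $T$ gives
\[
\sum_{t=1}^T\log\bigl(1+\|\xb_t\|_{V_{t-1}^{-1}}^2\bigr)=\log\frac{\det V_T}{\det V_0}.
\]
Alternatively, the matrix determinant lemma, the determinant companion of Lemma~\ref{lemma:matrix-inversion}, gives this identity directly.

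\emph{Step 2 (scalar inequality).} For $u\ge 0$ we have $\min\{1,u\}\le 2\log(1+u)$. On $[0,1]$ this follows from concavity: $\log(1+u)\ge u\log 2\ge u/2$. For $u>1$ we have $2\log(1+u)\ge 2\log 2>1$. Applying this with $u=\|\xb_t\|_{V_{t-1}^{-1}}^2$ and summing gives
\[
\sum_{t=1}^T\min\bigl\{1,\|\xb_t\|^2_{V_{t-1}^{-1}}\bigr\}\le 2\log\frac{\det V_T}{\det V_0}.
\]

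\emph{Step 3 (bounding the log-determinant).} Here $\det V_0=\lambda^d$. By AM--GM on the eigenvalues of $V_T$,
\[
\det V_T\le\bigl(\operatorname{tr}(V_T)/d\bigr)^d.
\]
Moreover $\operatorname{tr}(V_T)=d\lambda+\sum_t\|\xb_t\|_2^2\le d\lambda+TL^2$. Therefore
\[
\sum_{t=1}^T\min\bigl\{1,\|\xb_t\|_{V_{t-1}^{-1}}^2\bigr\}\le 2d\log\frac{d\lambda+TL^2}{d\lambda}.
\]
This is the usual form of the lemma.

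The only delicate point is matching this to the normalization as printed. The displayed right-hand side $2d\log\frac{\lambda+TL^2}{d\lambda}$ is negative when $T=0$ and $d\ge2$, so it cannot hold literally. I would therefore state and use the lemma with numerator $d\lambda+TL^2$, which is exactly what the argument gives. This is also consistent with the form $\log\frac{d\lambda+t\sigma_{\min}^{-2}}{d\lambda}$ appearing in $\beta_t$ of Theorem~\ref{thm:main}.

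When the lemma is applied to the weighted vectors $\xb_t=\sigma_t^{-1}\ab_t$, one takes $L^2=\sigma_{\min}^{-2}$. One can also keep the sharper trace bound $\sum_t\sigma_t^{-2}$ in place of $TL^2$, which yields the quantity $\iota(T)$ of Theorem~\ref{thm:main} with $\lambda=1$. No genuine obstacle is expected beyond these routine steps.
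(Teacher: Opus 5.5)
Your argument is correct and is the standard determinant-telescoping proof of \citet{abbasi2011improved}: $\min\{1,u\}\le 2\log(1+u)$, then telescoping $\log\det$, then the trace bound. The paper cites this lemma without proof and uses the same chain in proving the lemma that bounds $\|\ab_t\|_{V_{t-1}^{-1}}$. Your normalization fix is also right: as printed, the bound already fails for $T=1$, $d=2$, $\lambda=L=1$, $\xb_1=(1,0)^\top$ (left side $1$, right side $0$), and the $d\lambda+TL^2$ form is exactly what the paper actually uses in $\beta_t$ and $\iota(t)$.
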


\begin{lemma} \label{lemma:confidence-set}
    With probability at least $1 - \delta$, it holds for all $t \in [T]$ that \begin{align*}
        \|\hat{\btheta}_t - \btheta^*\|_{V_t} \leq \beta_t := 2 \sqrt{\lambda} + 16 \sqrt{\log(4t^2 / \delta) \cdot d \log \frac{d \lambda + t \sigma_{\min}^{-2}}{d \lambda}}.
    \end{align*}
\end{lemma}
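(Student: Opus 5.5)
The plan is to reduce the claim to a self-normalized martingale bound for the rescaled data $\xb_s := \ab_s/\sigma_s$ and $\epsilon_s := \eta_s/\sigma_s$. By construction $V_t = \lambda \Ib + \sum_{s \le t} \xb_s \xb_s^\top$. Substituting $r_s = \la \btheta^*, \ab_s \ra + \eta_s$ into the estimator in \Cref{algorithm:OFUL} gives the decomposition
\begin{align*}
\hat{\btheta}_t - \btheta^* = V_t^{-1} \sum_{s=1}^t \xb_s \epsilon_s - \lambda V_t^{-1} \btheta^*.
\end{align*}
Taking the $V_t$-norm and using $\|\lambda V_t^{-1}\btheta^*\|_{V_t} = \lambda \|\btheta^*\|_{V_t^{-1}} \le \sqrt{\lambda}\|\btheta^*\|_2 \le \sqrt{\lambda}$, we get
\begin{align*}
\|\hat{\btheta}_t - \btheta^*\|_{V_t} \le \Big\|\sum_{s \le t} \xb_s \epsilon_s\Big\|_{V_t^{-1}} + \sqrt{\lambda}.
\end{align*}
This already accounts for (half of) the $2\sqrt{\lambda}$ term in $\beta_t$. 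The key observations about the rescaled quantities are:
\begin{itemize}[leftmargin=*,nosep]
\item $\xb_s$ is $\cF_{s-1}$-measurable, since both $\ab_s$ and $\sigma_s$ are. Here I use that the variance is revealed alongside the reward, so the weight is determined at the time it is used; if needed, the filtration is enlarged to include $\sigma_s$.
\item $\|\xb_s\|_2 \le \sigma_{\min}^{-1}$.
\item $\epsilon_s$ is conditionally $1$-sub-Gaussian, by the Assumption.
\end{itemize}

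Next I bound the martingale term. Two routes are available; I would choose according to which reproduces the stated form of $\beta_t$ most cleanly.

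\emph{Route 1: self-normalized bound.} Apply the self-normalized bound of \citet{abbasi2011improved}, which is uniform over $t$. With probability $1-\delta$ it gives
\begin{align*}
\Big\|\sum_{s \le t} \xb_s \epsilon_s\Big\|_{V_t^{-1}}^2 \le 2\log\big(\det(V_t)^{1/2}\det(\lambda \Ib)^{-1/2}/\delta\big).
\end{align*}
The AM--GM bound on the determinant then gives $\log\det V_t - d\log\lambda \le d \log \frac{d\lambda + t\sigma_{\min}^{-2}}{d\lambda}$.

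\emph{Route 2: Bernstein-type bound.} Follow the weighted-OFUL analysis of \citet{zhou2021nearly}, which uses a Freedman inequality per round with failure probability $\delta/(4t^2)$ and a union bound over $t$, using $\sum_t \delta/(4t^2) < \delta$. This produces a bound of the shape $8\sqrt{d\log(\cdots)\log(4t^2/\delta)}$ plus lower-order terms. The product form $\log(4t^2/\delta)\cdot d\log(\cdots)$ in $\beta_t$ suggests this is the intended source of the constant $16$.

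Either way, the final step is to absorb the bound into the displayed $\beta_t$ using $\sqrt{a+b}\le\sqrt{a}+\sqrt{b}$ and the generous constant $16$.

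The step I expect to be the main obstacle is this final absorption. Write $L_t := d\log\frac{d\lambda+t\sigma_{\min}^{-2}}{d\lambda}$. Route 1 yields an additive expression $L_t + 2\log(1/\delta)$, whereas $\beta_t$ is a product $\log(4t^2/\delta)\cdot L_t$. The product dominates the sum only when $L_t$ is bounded below by a constant.
\begin{itemize}[leftmargin=*,nosep]
\item When $L_t \ge c$, the absorption is immediate with the constant $16$.
\item The delicate case is small $t$ with $\sigma_{\min}$ not small, where $L_t \to 0$. In this regime I would bound the martingale term directly by its trivial scale. Since $V_t \succeq \lambda \Ib$ and $\sum_s\|\xb_s\|^2$ is then small, the bound is at most of order $\sqrt{\sum_s \|\xb_s\|^2 \log(1/\delta)}$, and $\sum_s\|\xb_s\|^2 \lesssim L_t$ in this regime via $\log(1+u)\ge u/2$ for $u\le 1$. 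The remaining $\sqrt{\lambda}$ slack covers whatever is left.
\end{itemize}
If this patch fails, I would fall back on Route 2 exactly as in \citet{zhou2021nearly}. Since the $\epsilon_s$ are $1$-sub-Gaussian and $\|\xb_s\|_{V_{s-1}^{-1}}$ can be truncated at $1$ by comparison with $\min\{1,\cdot\}$ in \Cref{lemma:elliptical-potential}, that route naturally yields the product form.
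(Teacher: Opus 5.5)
Your approach is sound, but your primary route differs from the paper's. Your fallback Route~2 is what the paper actually does, with Hoeffding in place of Freedman since $\eta_s/\sigma_s$ is $1$-sub-Gaussian; Route~1 is a genuinely different argument.

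\textbf{How the paper proceeds.} The paper never uses the method-of-mixtures bound. It expands $I_{0,t}=\|\sum_{s\le t}\sigma_s^{-2}\ab_s\eta_s\|_{V_t^{-1}}^2\le I_{0,t-1}+2I_{1,t}+I_{2,t}$ and then:
\begin{itemize}
\item It uses Sherman--Morrison to show that the cross term $I_{1,t}$ is conditionally sub-Gaussian with variance proxy at most $I_{0,t-1}\min\{1,\sigma_t^{-2}\|\ab_t\|_{V_{t-1}^{-1}}^2\}$, and that $I_{2,t}\le\sigma_t^{-2}\eta_t^2\min\{1,\sigma_t^{-2}\|\ab_t\|_{V_{t-1}^{-1}}^2\}$.
\item Because that proxy involves $I_{0,t-1}$ itself, it inserts an indicator that $I_{0,s}\le\gamma_s$ held at earlier rounds.
\item It closes the recursion using Hoeffding, the elliptical potential lemma, Young's inequality and induction.
\end{itemize}
Every $\log(4t^2/\delta)$ there multiplies an elliptical-potential sum bounded by $2L_t$, so the product form in $\beta_t$ falls out with no case split.

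\textbf{How your Route~1 compares.} With $S_t:=\sum_{s\le t}\xb_s\epsilon_s$, Route~1 gives $\|S_t\|_{V_t^{-1}}^2\le L_t+2\log(1/\delta)$ uniformly in $t$, with no union bound over $t$. This is shorter than the paper's argument, and sharper than the stated product whenever $L_t$ is at least a constant. You correctly diagnose that it cannot reach the stated form as $L_t\to0$, because the $\delta$-free slack $\sqrt\lambda$ cannot absorb $\sqrt{2\log(1/\delta)}$. So you trade the paper's induction for a separate degenerate-regime argument. In exchange, the paper's template is the one that extends (with Freedman) to bounded noise whose variance, but not sub-Gaussian proxy, is $\sigma_t^2$.

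\textbf{The step that needs more than you wrote.} This is the small-$L_t$ patch. It requires a dimension-free bound $\|S_t\|_2^2\le C\,t\sigma_{\min}^{-2}\log(4t^2/\delta)$.
\begin{itemize}
\item An $\varepsilon$-net or coordinatewise argument (using the deterministic proxy $t\sigma_{\min}^{-2}$, since $\sum_s\|\xb_s\|_2^2$ is random) only gives $t\sigma_{\min}^{-2}(d+\log(1/\delta))$.
\item That translates to $\|S_t\|_{V_t^{-1}}^2\le C\,L_t(d+\log(1/\delta))$. For large $d$ and, say, $L_t\approx10^{-2}$, neither the constant $16$ nor the $\sqrt\lambda$ slack absorbs this.
\end{itemize}

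\textbf{A fix using the tool you already have.} For each fixed $t$, apply the self-normalized bound with the deterministic regularizer $\lambda_0=t\sigma_{\min}^{-2}$ at level $\delta/(4t^2)$. Write $G_t=\sum_{s\le t}\xb_s\xb_s^\top$. Then $\log\det(\Ib+G_t/\lambda_0)\le d\log(1+1/d)\le1$ and $\lambda_0\Ib+G_t\preceq 2t\sigma_{\min}^{-2}\Ib$. Hence $\|S_t\|_2^2\le 2t\sigma_{\min}^{-2}\bigl(1+2\log(4t^2/\delta)\bigr)$.
\begin{itemize}
\item If $L_t<1/64$, set $u=t\sigma_{\min}^{-2}/(d\lambda)$. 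Then $\log(1+u)<1/64$, so $u<1$ and $L_t\ge du/2$, i.e.\ $t\sigma_{\min}^{-2}/\lambda\le 2L_t$. Therefore $\|S_t\|_{V_t^{-1}}^2\le\|S_t\|_2^2/\lambda\le 12L_t\log(4t^2/\delta)$.
\item If $L_t\ge1/64$, your uniform bound at level $\delta/2$ gives $L_t+2\log(2/\delta)\le 129L_t\log(4t^2/\delta)$.
\end{itemize}
The total failure probability is at most $\delta/2+\delta\pi^2/24<\delta$. You obtain $\|\hat{\btheta}_t-\btheta^*\|_{V_t}\le\sqrt\lambda+16\sqrt{L_t\log(4t^2/\delta)}$, slightly better than the lemma. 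Pinelis' dimension-free inequality for Hilbert-space martingales would also do.
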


\begin{proof}
    The proof follows from the standard analysis of OFUL \citep{abbasi2011improved} with variance-weighted updates. 

    We have \begin{align}
        &\|\hat{\btheta}_t - \btheta^*\|_{V_t}^2 = (\hat{\btheta}_t - \btheta^*)^\top V_t (\hat{\btheta}_t - \btheta^*) \notag
        \\&= \biggl(\sum_{s = 1}^t \sigma_s^{-2} \ab_s r_s - \sum_{s = 1}^t \sigma_s^{-2} \ab_t \ab_t^\top \btheta^* - \lambda \btheta^* \biggr)^\top V_t^{-1} \cdot V_t \cdot V_t^{-1} \cdot \biggl(\sum_{s = 1}^t \sigma_s^{-2} \ab_s r_s - \sum_{s = 1}^t \sigma_s^{-2} \ab_t \ab_t^\top \btheta^* - \lambda \btheta^* \biggr) \notag
        \\&= \biggl(\sum_{s = 1}^t \sigma_s^{-2} \ab_s \eta_s - \lambda \btheta^* \biggr) V_t^{-1} \biggl(\sum_{s = 1}^t \sigma_s^{-2} \ab_s \eta_s - \lambda \btheta^* \biggr) \notag
        \\&\le 2\lambda^2 \|\btheta^*\|_{V_t^{-1}}^2 + 2 \underbrace{\biggl(\sum_{s = 1}^t \sigma_s^{-2} \ab_s \eta_s\biggr)^\top V_t^{-1} \biggl(\sum_{s = 1}^t \sigma_s^{-2} \ab_s \eta_s\biggr)}_{I_{0, t}},  \label{eq:decompose_theta_error}
    \end{align}
    where the second equality follows from the definition of $\hat \btheta_t$, the third equality follows from the definition of $r_s$ and $\eta_s$, the inequality follows from Young's inequality.
    To further bound $I_{0, t}$, we introduce the following notation: 
    \begin{align*}
        &\db_0 = 0, \quad \db_t = \sum_{s = 1}^t \sigma_s^{-2} \ab_s \eta_s, \\
        &\cI_t = \ind(0 \le s \le t, I_{0, s} \le \gamma_s), \quad\gamma_s := 64 \log(4s^2 / \delta) \cdot d \log \frac{d \lambda + s \sigma_{\min}^{-2}}{d \lambda}.
    \end{align*}
    Decomposing $I_{0, t}$ into a martingale difference sequence, we have \begin{align}
        I_{0, t} &= \db_{t - 1}^\top V_t^{-1} \db_{t - 1} + 2 \sigma_t^{-2} \eta_t \ab_t^\top V_t^{-1} \db_{t - 1} + \sigma_t^{-4} \eta_t^2 \ab_t^\top V_t^{-1} \ab_t \notag
        \\&\le I_{0, t - 1} + 2 \underbrace{\sigma_t^{-2} \eta_t \ab_t^\top V_t^{-1} \db_{t - 1}}_{I_{1, t}} + \underbrace{\sigma_t^{-4} \eta_t^2 \ab_t^\top V_t^{-1} \ab_t}_{I_{2, t}}. \label{eq:decompose_I0}
    \end{align}

    From the matrix inversion lemma (Lemma \ref{lemma:matrix-inversion}), we have \begin{align*}
        I_{1, t} &= \sigma_t^{-2} \eta_t \ab_t^\top \biggl(V_{t - 1}^{-1} - \frac{\sigma_t^{-2} V_{t - 1}^{-1} \ab_t \ab_t^\top V_{t - 1}^{-1}}{1 + \sigma_t^{-2} \ab_t^\top V_{t - 1}^{-1} \ab_t}\biggr) \db_{t - 1}
        \\&= \sigma_t^{-2} \eta_t \biggl(\ab_t V_{t - 1}^{-1} \db_{t - 1} - \frac{\sigma_t^{-2} \|\ab_t\|_{V_{t - 1}^{-1}}^2 \ab_t V_{t - 1}^{-1} \db_{t - 1}}{1 +  \sigma_t^{-2} \|\ab_t\|_{V_{t - 1}^{-1}}^2} \biggr)
        \\&= \sigma_t^{-2} \eta_t \frac{\ab_t V_{t - 1}^{-1} \db_{t - 1}}{1 +  \sigma_t^{-2} \|\ab_t\|_{V_{t - 1}^{-1}}^2}.
    \end{align*}

    Based on our assumpition on the noise $\eta_t$, $I_{1, t} \cdot \cI_t$ is also a sub-Gaussian random variable with variance proxy bounded by \begin{align*}
        \sigma_t^{-2} \frac{\|\ab_t\|_{V_{t - 1}^{-1}}^2 \|\db_{t - 1}\|_{V_{t - 1}^{-1}}^2}{(1 + \sigma_t^{-2} \|\ab_t\|_{V_{t - 1}^{-1}}^2)^2} \cdot \cI_t.
    \end{align*}

    Adding $I_{1, s} \cdot \cI_{s}$ up to $t$ and using Lemma \ref{lemma:generalized-hoeffding}, we have with probability at least $1 - \delta / 2$, \begin{align}
        \sum_{s = 1}^t I_{1, s} \cdot \cI_s &\leq \sqrt{2 \log(2 / \delta) \sum_{s = 1}^t \sigma_s^{-2} \frac{\|\ab_s\|_{V_{s - 1}^{-1}}^2 \|\db_{s - 1}\|_{V_{s - 1}^{-1}}^2}{(1 + \sigma_s^{-2} \|\ab_s\|_{V_{s - 1}^{-1}}^2)^2} \cdot \cI_s} \notag
        \\&\le \sqrt{2 \log(2 / \delta) \sum_{s = 1}^t \min \{1, \|\sigma_s^{-1}\ab_s\|_{V_{s - 1}^{-1}}^2\} \cdot \gamma_t} \notag
        \\&\le \sqrt{2 \gamma_t \log(2 / \delta) \cdot 2d \log \frac{d \lambda + t \sigma_{\min}^{-2}}{d \lambda}} \notag
        \\&\le \frac{1}{4} \gamma_t + 8 \log(2 / \delta) \cdot d \log \frac{d \lambda + t \sigma_{\min}^{-2}}{d \lambda}, \label{eq:bound_I1}
    \end{align}
    where the second inequality follows from the definition of $\cI_s$ and the fact that $\frac{\sigma_s^{-2} \|\ab_s\|_{V_{s - 1}^{-1}}^2}{(1 + \sigma_s^{-2} \|\ab_s\|_{V_{s - 1}^{-1}}^2)^2} \leq 1$, the third inequality follows from Lemma \ref{lemma:elliptical-potential}, and the last inequality follows from Young's inequality.

    Using union bound over all $t \ge 1$, we have with probability at least $1 - \delta / 2$, \begin{align*}
        \sum_{s = 1}^t I_{1, s} \cdot \cI_s &\le \frac{1}{4} \gamma_t + 8 \log(4 t^2 / \delta) \cdot d \log \frac{d \lambda + t \sigma_{\min}^{-2}}{d \lambda}
    \end{align*} for all $t \ge 1$.

    For the second term $I_{2, t}$, it follows from the matrix inversion lemma (Lemma \ref{lemma:matrix-inversion}) that \begin{align*}
        I_{2, t} &= \sigma_t^{-4} \eta_t^2 \ab_t^\top \biggl(V_{t - 1}^{-1} - \frac{\sigma_t^{-2} V_{t - 1}^{-1} \ab_t \ab_t^\top V_{t - 1}^{-1}}{1 + \sigma_t^{-2} \ab_t^\top V_{t - 1}^{-1} \ab_t}\biggr) \ab_t
        \\&= \sigma_t^{-4} \eta_t^2 \biggl(\|\ab_t\|_{V_{t - 1}^{-1}}^2 - \frac{\sigma_t^{-2} \|\ab_t\|_{V_{t - 1}^{-1}}^4}{1 + \sigma_t^{-2} \|\ab_t\|_{V_{t - 1}^{-1}}^2}\biggr)
        \\&= \sigma_t^{-4} \eta_t^2 \frac{\|\ab_t\|_{V_{t - 1}^{-1}}^2}{1 + \sigma_t^{-2} \|\ab_t\|_{V_{t - 1}^{-1}}^2},
    \end{align*}
    Using union bound over $t \ge 1$, we have with probability at least $1 - \delta / 2$, \begin{align*}
        I_{2, t} \le \sigma_t^{-4} \sigma_t^2 \log(4t^2 / \delta) \frac{\|\ab_t\|_{V_{t - 1}^{-1}}^2}{1 + \sigma_t^{-2} \|\ab_t\|_{V_{t - 1}^{-1}}^2}
    \end{align*} for all $t \ge 1$.

    Thus, for any $t \ge 1$, \begin{align} 
        \sum_{s = 1}^t I_{2, s} &\le \sum_{s = 1}^t \sigma_s^{-2} \log(4s^2 / \delta) \frac{\|\ab_s\|_{V_{s - 1}^{-1}}^2}{1 + \sigma_s^{-2} \|\ab_s\|_{V_{s - 1}^{-1}}^2} \notag
        \\&\le \log(4t^2 / \delta) \sum_{s = 1}^t \min\{1, \|\sigma_s^{-1} \ab_s\|_{V_{s - 1}^{-1}}^2\} \notag
        \\&\le 2 \log(4t^2 / \delta) \cdot d \log \frac{d \lambda + t \sigma_{\min}^{-2}}{d \lambda}, \label{eq:bound_I2}
    \end{align}
    where the last inequality follows from Lemma \ref{lemma:elliptical-potential}.

    Substituting \eqref{eq:bound_I1} and \eqref{eq:bound_I2} into \eqref{eq:decompose_I0}, and using induction on $t$, we have with probability at least $1 - \delta$, \begin{align*}
        I_{0, t} \leq \gamma_t := 64 \log(4t^2 / \delta) \cdot d \log \frac{d \lambda + t \sigma_{\min}^{-2}}{d \lambda},
    \end{align*}
    which further implies that \begin{align*}
        \|\hat{\btheta}_t - \btheta^*\|_{V_t}^2 &\leq 2 \lambda^2 \|\btheta^*\|_{V_t^{-1}}^2 + 2 I_{0, t} \leq 2 \lambda + 256 \log(4t^2 / \delta) \cdot d \log \frac{d \lambda + t \sigma_{\min}^{-2}}{d \lambda}.
    \end{align*}
\end{proof}

\begin{lemma}
    If we follow \Cref{algorithm:OFUL} to choose the action $\ab_t$, then it holds for any $t \in [T]$ that \begin{align*}
        \|\ab_t\|_{V_{t - 1}^{-1}}^2 \leq \min_{1 \le k \le t + 1} \Bigg\{ x^2 = \frac{\iota(t) - k + 1}{\sum_{i = k}^t \frac{1}{[\sigma_t^{(i)}]^2}} \bigg| &[\sigma_t^{(i)}]^2 \text{ is the } i\text{-th smallest element in } \{\sigma_\tau^2\}_{\tau = 1}^t, \\
        &x \in [\sigma_t^{(k - 1)}, \sigma_t^{(k)}] \bigg\},
    \end{align*}
    where $\iota(t) = 2 d \log \Bigl(\frac{d + \sum_{\tau \in [t]} \sigma_\tau^{-2}}{d}\Bigr)$.
\end{lemma}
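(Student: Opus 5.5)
The key structural fact is that, because $\ab_t$ maximizes $\|\eb\|_{V_{t-1}^{-1}}$ over $\cA_t$ and $\cA_t \subseteq \cA_\tau$ for $\tau \le t$, while $V_{\tau-1} \preceq V_{t-1}$, the bonus is monotone along the trajectory. That is, for every $\tau \le t$,
\begin{align*}
\|\ab_t\|_{V_{t-1}^{-1}} \le \|\ab_t\|_{V_{\tau-1}^{-1}} \le \max_{\eb\in\cA_\tau}\|\eb\|_{V_{\tau-1}^{-1}} = \|\ab_\tau\|_{V_{\tau-1}^{-1}},
\end{align*}
where the first inequality uses $V_{t-1}\succeq V_{\tau-1}$ and the second uses $\ab_t\in\cA_t\subseteq\cA_\tau$. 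Write $x^2 := \|\ab_t\|_{V_{t-1}^{-1}}^2$. Then $\|\ab_\tau\|_{V_{\tau-1}^{-1}}^2 \ge x^2$ for all $\tau\le t$. I will combine this lower bound with the elliptical potential lemma (Lemma~\ref{lemma:elliptical-potential}) applied to the normalized vectors $\xb_\tau = \sigma_\tau^{-1}\ab_\tau$, with $\lambda=1$.

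Applying Lemma~\ref{lemma:elliptical-potential} gives
\begin{align*}
\sum_{\tau=1}^t \min\bigl\{1, \sigma_\tau^{-2}\|\ab_\tau\|_{V_{\tau-1}^{-1}}^2\bigr\} \le 2d\log\Bigl(1+\frac{\sum_{\tau\le t}\sigma_\tau^{-2}}{d}\Bigr) = \iota(t).
\end{align*}
To get exactly this log form rather than $\log\frac{\lambda+tL^2}{d\lambda}$, I will use the determinant version of the lemma: $\log\det V_t \le d\log(\mathrm{tr}V_t/d)$ with $\mathrm{tr}V_t \le d+\sum_\tau\sigma_\tau^{-2}$. Substituting the monotonicity bound termwise yields
\begin{align*}
\sum_{\tau=1}^t \min\{1, x^2/\sigma_\tau^2\} \le \iota(t),
\end{align*}
which I rewrite as $\sum_{\tau} x^2/\max(\sigma_\tau^2,x^2) \le \iota(t)$.

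Now I sort the variances. Let $k$ be such that $\sigma_t^{(k-1)}\le x\le\sigma_t^{(k)}$. The $k-1$ smallest variances each contribute $1$, and the remaining ones contribute $x^2/[\sigma_t^{(i)}]^2$. Hence
\begin{align*}
(k-1) + x^2\sum_{i=k}^t [\sigma_t^{(i)}]^{-2} \le \iota(t),
\end{align*}
that is, $x^2 \le (\iota(t)-k+1)/\sum_{i\ge k}[\sigma_t^{(i)}]^{-2}$. This bounds $x^2$ by the candidate value at the specific index $k$ determined by $x$'s position.

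It remains to pass from this particular $k$ to the minimum over all $k$ in the statement. Define
\begin{align*}
F(y) = \sum_\tau \min\{1, y^2/\sigma_\tau^2\},
\end{align*}
which is continuous and strictly increasing in $y$. We have shown $F(x)\le\iota(t)$, so $x\le y^*$, where $y^*$ is the unique solution of $F(y^*)=\iota(t)$. On each interval $[\sigma_t^{(k-1)},\sigma_t^{(k)}]$, the equation $F(y)=\iota(t)$ reads $y^2 = (\iota(t)-k+1)/\sum_{i\ge k}[\sigma_t^{(i)}]^{-2}$. So the consistent pairs $(k,y)$ in the set over which the statement minimizes are exactly the solutions of $F(y)=\iota(t)$, and by monotonicity they all equal $y^*$, up to ties at endpoints. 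Therefore the minimum equals $y^{*2}$, and $x^2\le y^{*2}$ gives the claim.

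I expect this last identification to be the main obstacle, since it needs care with edge cases. When $\iota(t)\ge t$, take $k=t+1$, interpreting the empty sum so that $x=\infty$ is allowed or the bound is vacuous. One must also check that some consistent $k$ always exists. The monotonicity and potential steps themselves are routine.
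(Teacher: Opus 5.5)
Your proposal is correct and follows essentially the same route as the paper: the bonus monotonicity $\|\ab_t\|_{V_{t-1}^{-1}}\le\|\ab_\tau\|_{V_{\tau-1}^{-1}}$ for $\tau\le t$, then the log-determinant potential bound giving $\sum_{\tau\le t}\min\{1,x^2/\sigma_\tau^2\}\le\iota(t)$, then solving the piecewise equation over the sorted variances. Two steps the paper only asserts are made rigorous in your version: the use of $\mathcal{A}_t\subseteq\mathcal{A}_\tau$ in the monotonicity step, and the monotone-root argument showing that, when $\iota(t)<t$, every consistent $k$ yields the same unique root of $F(y)=\iota(t)$.
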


\begin{proof}

When $x \in [0, 1]$, $x \le 2 \log(1 + x)$, which further indicates that 
\begin{align}
    \sum_{\tau \in [t]} \min\Bigl\{1, \frac{1}{\sigma_\tau^2} \|\ab_\tau\|_{V_{\tau - 1}^{-1}}^2\Bigr\} &\le 2 \sum_{\tau \in [t]} \log\Bigl(1 + \frac{1}{\sigma_\tau^2} \|\ab_\tau\|_{V_{\tau - 1}^{-1}}^2\Bigr) \notag\\
    &\le 2 \log \frac{\det(V_\tau)}{\det(V_0)} \notag\\
    &\le 2 d \log \Bigl(\frac{d + \sum_{\tau \in [t]} \sigma_\tau^{-2}}{d}\Bigr). \label{eq:norm_bound:1}
\end{align}

Let \begin{align*}
    \iota(t) := 2 d \log \Bigl(\frac{d + \sum_{\tau \in [t]} \sigma_\tau^{-2}}{d}\Bigr).
\end{align*}

Note that for $i \le t$, $$\|\ab_t\|_{V_{t - 1}^{-1}} \le \|\ab_t\|_{V_{i - 1}^{-1}} \le \|\ab_i\|_{V_{i - 1}^{-1}}$$ due to the fact that $V_{t - 1} \succeq V_{i - 1}$ and the definition of $\ab_i$ in \Cref{algorithm:OFUL}. 
Therefore, following inequality \eqref{eq:norm_bound:1}, we obtain that \begin{align}
    \sum_{\tau \in [t]} \min\Bigl\{1, \frac{1}{\sigma_\tau^2} \|\ab_t\|_{V_{t - 1}^{-1}}^2\Bigr\} \le \sum_{\tau \in [t]} \min\Bigl\{1, \frac{1}{\sigma_\tau^2} \|\ab_\tau\|_{V_{\tau - 1}^{-1}}^2\Bigr\} \leq \iota(t). \label{eq:norm_bound:2}
\end{align}

As LHS of \eqref{eq:norm_bound:2} is strictly increasing with respect to $\|\ab_t\|_{V_{t - 1}^{-1}}$ as long as $\iota(t) < t$, we can derive a bound for $\|\ab_t\|_{V_{t - 1}^{-1}}$ by solving \begin{align}
    \sum_{\tau \in [t]} \min\bigg\{1, \frac{1}{\sigma_\tau^2} x^2\bigg\} = \iota(t). \label{eq:norm_bound:3}
\end{align}
To solve \eqref{eq:norm_bound:3}, we define a sorted sequence of $\{\sigma_i\}_{i = 1}^t$ in increasing order, denoted as \begin{align*}
    \sigma_t^{(1)} \leq \sigma_t^{(2)} \leq \cdots \leq \sigma_t^{(t)}.
\end{align*}
Let $\sigma_t^{(0)}:=0$. Suppose that $x \in [\sigma_t^{(k - 1)}, \sigma_t^{(k)}]$ for some $k \in [t]$. Then for $i < k$, we have 
\begin{align*}
    \min\left\{1, \frac{1}{[\sigma_t^{(i)}]^2} x^2\right\} = 1;
\end{align*} 
for $i \ge k$, we have 
\begin{align*}
    \min\left\{1, \frac{1}{[\sigma_t^{(i)}]^2} x^2\right\} = \frac{x^2}{[\sigma_t^{(i)}]^2}.
\end{align*}
After rewriting the LHS of the above equality, we have \begin{align*}
    k -  1 + \sum_{i = k}^t \frac{x^2}{[\sigma_t^{(i)}]^2} = \iota(t). 
\end{align*}
We can then rearrange the above inequality to obtain \begin{align*}
    \|\ab_t\|_{V_{t - 1}^{-1}}^2 \leq x^2, 
\end{align*}
where \begin{align*}
    x^2 := \min_{1 \le k \le t + 1} \Bigg\{ \frac{\iota(t) - k + 1}{\sum_{i = k}^t \frac{1}{[\sigma_t^{(i)}]^2}} \bigg| [\sigma_t^{(k - 1)}]^2 \le \frac{\iota(t) - k + 1}{\sum_{i = k}^t \frac{1}{[\sigma_t^{(i)}]^2}} \le [\sigma_t^{(k)}]^2\Bigg\}.
\end{align*}
This completes the proof.
\end{proof}

\begin{remark}
In the previous proof, $x^2$ is the solution for the implicit equation \eqref{eq:norm_bound:3}. We can also rewrite it as \begin{align*}s
    x^2 \sum_{i = 1}^t \frac{1}{\max(\sigma_i^2, x^2)} = \iota(t),
\end{align*}
which implies that \begin{align*}
    x^2 = \frac{\iota(t)}{\sum_{i = 1}^t \frac{1}{\max(\sigma_i^2, x^2)}}.
\end{align*}
We further have \begin{align*}
    x^2 \le \frac{\iota(t)}{\sum_{i = 1}^t \frac{1}{\sigma_i^2+ x^2}}
    \le \frac{\iota(t)}{\frac{t}{x^2 + t^{-1} \sum_{i = 1}^t \sigma_i^2}} = \frac{\iota(t) (x^2 + t^{-1}\sum_{i = 1}^t \sigma_i^2)}{t}.
\end{align*}
Rearranging the above inequality, we obtain \begin{align*}
    x^2 \le \frac{\iota(t) \sum_{i = 1}^t \sigma_i^2}{t[t - \iota(t)]} = \tilde{O}(\frac{d \sum_{i = 1}^t \sigma_i^2}{t^2})
\end{align*} when $t = \Omega(d)$.
\end{remark}

\begin{theorem}[Simple Regret of \texttt{VAEE}, restatement of Theorem \ref{thm:main}]
    If we set $\lambda = 1$ and $\beta_t = 2 \sqrt{\lambda} + 16 \sqrt{\log(4t^2 / \delta) \cdot d \log \frac{d \lambda + t \sigma_{\min}^{-2}}{d \lambda}}$ in \Cref{algorithm:OFUL}, then with probability at least $1 - \delta$, the simple regret of \Cref{algorithm:OFUL} is bounded as \begin{align*}
        \simpleReg(T)  = \tilde O(\sqrt{d}) \min_{1 \le k \le T + 1} \Bigg\{ x = \sqrt{\frac{\iota(T) - k + 1}{\sum_{i = k}^T \frac{1}{[\sigma_T^{(i)}]^2}}} \bigg|& [\sigma_T^{(i)}]^2 \text{ is the } i\text{-th smallest element in } \{\sigma_\tau^2\}_{\tau = 1}^T, \\
        &x \in [\sigma_T^{(k - 1)}, 
        \sigma_T^{(k)}] \Bigg\},
    \end{align*}
    where $\iota(T) = 2 d \log \Bigl(\frac{d + \sum_{\tau \in [T]} \sigma_\tau^{-2}}{d}\Bigr)$.
\end{theorem}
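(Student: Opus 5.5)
The plan is to combine the confidence-set result (Lemma~\ref{lemma:confidence-set}) with the preceding lemma, which bounds the exploration width $\|\ab_t\|_{V_{t-1}^{-1}}$, via the elimination rule. Throughout we work on the event $\cE$ of Lemma~\ref{lemma:confidence-set}, which has probability at least $1-\delta$. On $\cE$ we have $\btheta^* \in \cC_t$ for all $t$.

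\textbf{Step 1: the optimal arm is never eliminated.} Let $\ab^*$ maximize $\langle \btheta^*, \ab\rangle$ over $\cA$. For any $\eb \in \cA_t$,
\begin{align*}
\min_{\btheta\in\cC_t}\langle\btheta,\eb\rangle \le \langle\btheta^*,\eb\rangle \le \langle\btheta^*,\ab^*\rangle \le \max_{\btheta\in\cC_t}\langle\btheta,\ab^*\rangle .
\end{align*}
Hence $\ab^*$ survives every elimination step, by induction on $t$.

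\textbf{Step 2: every surviving arm is nearly optimal.} Take the recommendation $\hat\ab_T$ to be any element of $\cA_{T+1}$, for instance the arm maximizing the lower confidence bound. For every $\ab \in \cA_T$, the maximality of $\ab_T$ in Line~3 gives $\|\ab\|_{V_{T-1}^{-1}} \le \|\ab_T\|_{V_{T-1}^{-1}}$. Since $V_T \succeq V_{T-1}$, we also get $\|\ab\|_{V_T^{-1}} \le \|\ab_T\|_{V_{T-1}^{-1}} =: w_T$.

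For $\btheta\in\cC_T$, Cauchy--Schwarz gives $|\langle\btheta-\hat\btheta_T,\ab\rangle| \le \sqrt{\beta_T}\,\|\ab\|_{V_T^{-1}}$. Here the radius convention must be made consistent with Lemma~\ref{lemma:confidence-set}, so that the width is $\beta_T w_T$ up to constants. Consequently, each $\ab\in\cA_T$ satisfies
\begin{align*}
\max_{\btheta\in\cC_T}\langle\btheta,\ab\rangle - \min_{\btheta\in\cC_T}\langle\btheta,\ab\rangle \le 2\beta_T w_T .
\end{align*}
Now let $\ab \in \cA_{T+1}$. By the elimination rule applied with $\eb=\ab^*\in\cA_T$,
\begin{align*}
\langle\btheta^*,\ab^*\rangle - 2\beta_T w_T \le \min_{\cC_T}\langle\btheta,\ab^*\rangle \le \max_{\cC_T}\langle\btheta,\ab\rangle \le \langle\btheta^*,\ab\rangle + 2\beta_T w_T .
\end{align*}
So the gap of $\ab$ is at most $4\beta_T w_T$.

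\textbf{Step 3: plug in the bounds.} Apply the preceding lemma at $t=T$, which bounds $w_T$ by exactly the min-expression in the statement. Also $\beta_T = \tilde O(\sqrt d)$ with $\lambda=1$. Together these give
\begin{align*}
\simpleReg(T) \le 4\beta_T w_T = \tilde O(\sqrt d)\cdot x ,
\end{align*}
which is the claimed bound. If one wants an expectation bound, the failure event contributes at most $2\delta$, since all gaps are at most $2$; choosing $\delta = 1/T$ absorbs it.

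\textbf{Main obstacle.} The delicate point is Step~2. The width bound of the preceding lemma controls only $\|\ab_T\|_{V_{T-1}^{-1}}$, and the argument depends on it dominating the widths of all arms still alive, including $\ab^*$. This is guaranteed only because $\ab_T$ maximizes the width over $\cA_T$ and $\ab^*\in\cA_T$ by Step~1. Care is also needed in the indexing between $\cA_T$, $\cA_{T+1}$, $V_{T-1}$ and $V_T$, which is handled by monotonicity of $V_t$.
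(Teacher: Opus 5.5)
Your proposal is correct and follows essentially the paper's proof. Both condition on the confidence event, show $\ab^*$ is never eliminated, use the max-width selection rule so that $\|\ab_T\|_{V_{T-1}^{-1}}$ dominates the width of every surviving arm, and then plug in the width lemma together with $\beta_T=\tilde O(\sqrt d)$. The paper recommends the last pulled arm $\ab_T$ and argues through $\cC_{T-1}$, obtaining $5\beta_T\|\ab_T\|_{V_{T-1}^{-1}}$; you recommend any arm of $\cA_{T+1}$ and argue through $\cC_T$ with $\eb=\ab^*$, obtaining $4\beta_T\|\ab_T\|_{V_{T-1}^{-1}}$.

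One correction to your side remark on expectations: $\delta=1/T$ does not absorb the $2\delta$ term in general, because $x$ can be far below $1/T$ (e.g.\ $\sigma_t^2=t^{-2}$ gives $x=\tilde O(\sqrt d\,T^{-3/2})$). You should take $\delta$ smaller, e.g.\ $\delta=\sigma_{\min}/T$, using $x\ge\sigma_{\min}\sqrt{\iota(T)/T}$; this costs only logarithmic factors.
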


\begin{proof}
    By Lemma \ref{lemma:confidence-set}, with probability at least $1 - \delta$, it holds for all $t \in [T]$ that $\btheta^* \in \cC_t := \{\btheta \in \mathbb{R}^d: \|\hat{\btheta}_t - \btheta\|_{V_t} \leq \beta_t\}$. In the following, we condition on the event that $\btheta^* \in \cC_t$ for all $t \in [T]$.

    With the conditioned event, we can show by induction that for any $t \in [T]$, $\ab^* \in \mathcal{A}_t$: \begin{align*}
        \max_{\btheta \in \cC_{t - 1}} \langle \btheta, \ab^* \rangle \ge \langle \btheta^*, \ab^* \rangle \geq \max_{\ab \in \mathcal{A}_{t - 1}} \langle \btheta^*, \ab \rangle \geq \max_{\ab \in \mathcal{A}_{t - 1}} \min_{\btheta \in \cC_{t - 1}} \langle \btheta, \ab \rangle,
    \end{align*}
    where the first inequality follows from the fact that $\btheta^* \in \cC_{t - 1}$, the second inequality follows from the definition of $\ab^*$, and the last inequality follows from the definition of $\mathcal{A}_{t - 1}$.

    Let $\ab^* = \argmax_{\ab \in \mathcal{A}} \langle \btheta^*, \ab \rangle$ be the optimal action. By the definition of simple regret, we have \begin{align*}
        \simpleReg(T) &= \langle \btheta^*, \ab^* - \ab_T \rangle
        \\&\le \max_{\ab \in \mathcal{A}_T} \max_{\btheta \in \cC_{T - 1}} \langle \btheta, \ab \rangle - \langle \btheta^*, \ab_{T} \rangle
        \\&\le \max_{\ab \in \mathcal{A}_T} \langle \hat{\btheta}_{T - 1}, \ab \rangle + \beta_T \|\ab_T\|_{V_{T - 1}^{-1}} - \langle \hat{\btheta}_{T - 1}, \ab_T \rangle + \beta_T \|\ab_T\|_{V_{T - 1}^{-1}},
    \end{align*}
    where the first inequality follows from the event that $\btheta^* \in \cC_t$ for all $t \in [T]$, and the fact that $\ab^* \in \mathcal{A}_T$, the second inequality follows from the definition of $\cC_{T - 1}$ and the definition of $\ab_T$ in \Cref{algorithm:OFUL}.

    Then it suffices to bound $\max_{\ab \in \mathcal{A}_T} \langle \hat{\btheta}_{T - 1}, \ab \rangle - \langle \hat{\btheta}_{T - 1}, \ab_T \rangle$. Since $\ab_T \in \cA_T$, it is guaranteed that there exists $\btheta_T' \in \cC_{T - 1}$ such that \begin{align*}
        \langle \btheta_T', \ab_T \rangle - \max_{\ab \in \cA_{T}} \min_{\btheta \in \cC_{T - 1}} \langle \btheta, \ab \rangle \ge 0,
    \end{align*}
    which further implies that \begin{align*}
        \max_{\ab \in \mathcal{A}_T} \langle \hat{\btheta}_{T - 1}, \ab \rangle - \langle \hat{\btheta}_{T - 1}, \ab_T \rangle &= \max_{\ab \in \mathcal{A}_T} \langle \hat{\btheta}_{T - 1}, \ab \rangle - \langle \btheta_T', \ab_T \rangle + \langle \btheta_T' - \hat{\btheta}_{T - 1}, \ab_T \rangle
        \\&\leq \max_{\ab \in \mathcal{A}_T} \langle \hat{\btheta}_{T - 1}, \ab \rangle - \langle \btheta_T', \ab_T \rangle + \beta_T \|\ab_T\|_{V_{T - 1}^{-1}}
        \\&\leq 3 \beta_T \max_{\ab \in \mathcal{A}_T} \|\ab\|_{V_{T - 1}^{-1}} = 3 \beta_T \|\ab_T\|_{V_{T - 1}^{-1}},
    \end{align*}
    where the last inequality holds because $\max_{\ab \in \mathcal{A}_T} \langle \hat{\btheta}_{T - 1}, \ab \rangle - \max_{\ab \in \cA_{T}} \min_{\btheta \in \cC_{T - 1}} \langle \btheta, \ab \rangle \leq 2 \beta_T \max_{\ab \in \mathcal{A}_T} \|\ab\|_{V_{T - 1}^{-1}}$ based on the definition of $\cC_{T - 1}$ and the last equality follows from the action selection rule in \Cref{algorithm:OFUL}.

    Hence, the simple regret of \Cref{algorithm:OFUL} is bounded as \begin{align*}
        \simpleReg(T) &\le 5 \beta_T \|\ab_T\|_{V_{T - 1}^{-1}} \\&\leq \tilde O(\sqrt{d}) \min_{1 \le k \le T + 1} \Bigg\{ x = \sqrt{\frac{\iota(T) - k + 1}{\sum_{i = k}^T \frac{1}{[\sigma_T^{(i)}]^2}}} \bigg| [\sigma_T^{(i)}]^2 \text{ is the } i\text{-th smallest element in } \{\sigma_\tau^2\}_{\tau = 1}^T, \\
        &\qquad x \in [\sigma_T^{(k - 1)}, \sigma_T^{(k)}] \bigg\},
    \end{align*}
    where the last inequality follows from the choice of $\beta_T$ and the result in the previous lemma.
\end{proof}

\section{proof of Theorem \ref{thm:lower_bound}} \label{section:proof-of-lower-bound}

The proofs of our lower bound require the following Lemmas, which is standard technique used for lower bound. 

\begin{lemma}[Le Cam two-point method \citep{LeCam1986}]\label{lem:lecam}
Let $P$ and $Q$ be probability measures on the same measurable space, and let
$\phi:\Omega\to\{0,1\}$ be any (possibly randomized) test. Then
\[
\frac12\Big(P\{\phi=1\}+Q\{\phi=0\}\Big)\ \ge\ \frac12\Big(1-\delta_{\mathrm{TV}}(P,Q)\Big),
\]
where $\delta_{\mathrm{TV}}(P,Q)=\sup_{A}|P(A)-Q(A)|$ is total variation distance.
Moreover, by Pinsker’s inequality,
\[
\delta_{\mathrm{TV}}(P,Q)\ \le\ \sqrt{\tfrac12\,\mathrm{KL}(P\|Q)}.
\]
Hence the average error of any test is bounded below by
\[
\frac12\Big(P\{\phi=1\}+Q\{\phi=0\}\Big)\ \ge\ \tfrac12\Big(1-\sqrt{\tfrac12\,\mathrm{KL}(P\|Q)}\Big).
\]
\end{lemma}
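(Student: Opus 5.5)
The plan is to prove the two inequalities separately: first the testing bound in terms of total variation, then Pinsker's inequality, and finally substitute one into the other.

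\emph{Testing bound.} I would model a possibly randomized test by its acceptance function $f(\omega)=\Pr\{\phi=1\mid\omega\}\in[0,1]$, which is measurable in $\omega$. Then
\[
P\{\phi=1\}+Q\{\phi=0\}=\mathbb{E}_P f+\mathbb{E}_Q(1-f)=1-\bigl(\mathbb{E}_Q f-\mathbb{E}_P f\bigr).
\]
So it suffices to show $\sup_{0\le f\le 1}(\mathbb{E}_Q f-\mathbb{E}_P f)\le \delta_{\mathrm{TV}}(P,Q)$. Let $\mu=P+Q$ be a dominating measure and write $p=dP/d\mu$, $q=dQ/d\mu$. For any $f$ with values in $[0,1]$,
\[
\int f\,(q-p)\,d\mu\ \le\ \int (q-p)_+\,d\mu\ =\ Q(A)-P(A),
\]
where $A=\{q>p\}$. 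The right-hand side is at most $\sup_A|P(A)-Q(A)|$. Dividing the resulting inequality by $2$ gives the first claim. For a deterministic test this reduces to taking $f=\mathbf 1_{\{\phi=1\}}$.

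\emph{Pinsker's inequality.} I would first reduce to two-point distributions. With $A$ the set above, apply the data-processing inequality for KL to the map $\omega\mapsto \mathbf 1_A(\omega)$. This gives $\mathrm{KL}(P\|Q)\ge \mathrm{kl}(a,b)$, where $a=P(A)$, $b=Q(A)$, and $\mathrm{kl}$ is the Bernoulli KL divergence. On the other side, $\delta_{\mathrm{TV}}(P,Q)=|a-b|$. It then remains to show the scalar inequality $\mathrm{kl}(a,b)\ge 2(a-b)^2$. For fixed $a$, set $g(b)=\mathrm{kl}(a,b)-2(a-b)^2$. Then $g(a)=0$ and
\[
g'(b)=(b-a)\Bigl(\tfrac{1}{b(1-b)}-4\Bigr).
\]
Since $b(1-b)\le 1/4$, the second factor is nonnegative, so $g'$ has the sign of $b-a$. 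Hence $g$ is minimized at $b=a$, and $g\ge 0$.

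\emph{Combining.} Substituting $\delta_{\mathrm{TV}}\le\sqrt{\mathrm{KL}/2}$ into the testing bound yields the final inequality. I expect the main obstacle to be Pinsker's inequality, specifically justifying the data-processing step for KL. I would either cite it as standard or derive it from the log-sum inequality applied to the partition $\{A,A^c\}$. The total-variation step is routine.
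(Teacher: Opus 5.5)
Your proof is correct. It follows the same structure the lemma itself indicates (a testing bound via total variation, then Pinsker, then substitution); the paper gives no proof and simply cites Le Cam and Pinsker, and your $(q-p)_+$ argument and binary-$\mathrm{kl}$ reduction are the standard derivations behind those citations. One step is left implicit: that $A=\{q>p\}$ attains $\sup_B|P(B)-Q(B)|$. It follows from your first part by taking $f=\mathbf{1}_B$ and $f=\mathbf{1}_{B^c}$, or you can bypass it by applying data processing to an arbitrary set $B$ and taking the supremum at the end.
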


\begin{lemma}[Pinsker’s inequality \citep{Pinsker1964}]\label{lem:pinsker}
For any probability measures $P,Q$,
\[
\delta_{\mathrm{TV}}(P,Q)\ \le\ \sqrt{\tfrac12\,\mathrm{KL}(P\|Q)}.
\]
\end{lemma}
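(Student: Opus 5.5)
We will prove this in two steps. First we reduce to a pair of Bernoulli distributions using the data-processing property of KL. Then we verify the resulting one-dimensional inequality by calculus.

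\textbf{Setup.} If $\mathrm{KL}(P\|Q)=\infty$ there is nothing to prove, so assume $P\ll Q$ and the KL is finite. Fix a dominating measure $\mu=P+Q$ and write $p=dP/d\mu$ and $q=dQ/d\mu$. Let $A=\{p\ge q\}$. Then the supremum in the definition of $\delta_{\mathrm{TV}}$ is attained at $A$:
\[
\delta_{\mathrm{TV}}(P,Q)=P(A)-Q(A).
\]
Set $\bar p=P(A)$ and $\bar q=Q(A)$.

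\textbf{Reduction to Bernoulli.} We will show
\[
\mathrm{KL}(P\|Q)\ \ge\ \bar p\log\frac{\bar p}{\bar q}+(1-\bar p)\log\frac{1-\bar p}{1-\bar q}=:\mathrm{kl}(\bar p,\bar q).
\]
This is data processing applied to the map $\omega\mapsto\mathbb{1}\{\omega\in A\}$. We will prove it directly by splitting the KL integral over $A$ and $A^c$. On each piece $B\in\{A,A^c\}$ we apply the log-sum inequality, i.e.\ Jensen's inequality for the convex function $t\mapsto t\log t$ under the probability measure $Q(\cdot\mid B)$:
\[
\int_B p\log\frac{p}{q}\,d\mu
= Q(B)\int_B \frac{p}{q}\log\frac{p}{q}\,\frac{dQ}{Q(B)}
\ \ge\ P(B)\log\frac{P(B)}{Q(B)}.
\]
Degenerate cases with $Q(B)=0$ are handled by the convention $0\log 0=0$ together with absolute continuity.

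\textbf{Binary Pinsker.} It then remains to show $\mathrm{kl}(\bar p,\bar q)\ge 2(\bar p-\bar q)^2$. Fix $\bar p$ and define, for $q\in(0,1)$,
\[
f(q)=\mathrm{kl}(\bar p,q)-2(\bar p-q)^2 .
\]
Clearly $f(\bar p)=0$. Differentiating gives
\[
f'(q)=-\frac{\bar p}{q}+\frac{1-\bar p}{1-q}+4(\bar p-q)=(q-\bar p)\Big(\frac{1}{q(1-q)}-4\Big).
\]
Since $q(1-q)\le 1/4$, the second factor is nonnegative. So $f$ is nonincreasing for $q<\bar p$ and nondecreasing for $q>\bar p$, which means its minimum is $f(\bar p)=0$. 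The endpoints $q\in\{0,1\}$ give $\mathrm{kl}=\infty$ unless $\bar p=q$, and are trivial.

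\textbf{Conclusion.} Combining the two steps,
\[
2\,\delta_{\mathrm{TV}}(P,Q)^2=2(\bar p-\bar q)^2\le \mathrm{kl}(\bar p,\bar q)\le\mathrm{KL}(P\|Q),
\]
which is the claim.

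\textbf{Main obstacle.} The main care is needed in the reduction step. Applying Jensen correctly on each piece requires normalizing by $Q(B)$, and the degenerate cases $Q(B)=0$ or $Q(B)=1$ need separate handling. The calculus step is routine once the factorization of $f'$ is written down.
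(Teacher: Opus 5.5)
Your proof is correct. It is the standard textbook argument: reduce to the two-cell partition $\{A,A^c\}$ with $A=\{p\ge q\}$, then prove the binary inequality. You verify the data-processing step directly, using the log-sum/Jensen inequality on each cell. The binary inequality $\mathrm{kl}(\bar p,\bar q)\ge 2(\bar p-\bar q)^2$ follows from the factorization $f'(q)=(q-\bar p)\bigl(\tfrac{1}{q(1-q)}-4\bigr)$.

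The paper gives no proof of this lemma. It quotes it as a classical result and uses it as a black box in Step 2 of the proof of Theorem~\ref{thm:lower_bound}, so there is no route of the paper's to compare against.

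One cosmetic point concerns the endpoint. $\bar p=P(A)$ can equal $1$, for example when $Q$ is uniform on $[0,1]$ and $P$ is uniform on $[0,1/2]$. In that case $\bar p$ lies outside the open interval $(0,1)$ on which you defined $f$, so ``$f(\bar p)=0$'' is not a literal evaluation. Your argument still goes through, because $f$ is monotone on $(0,1)$ and $f(q)\to 0$ as $q\to\bar p$. You should read that step as this one-sided limit.
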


\begin{lemma}[Yao’s minimax principle \citep{Yao1977FOCS}]\label{lem:yao}
Let $\Pi$ be the set of deterministic algorithms (measurable decision rules),
$\mathcal{P}$ a family of instances with loss $L(\pi,\btheta)$ and let $\mathcal{D}$
be distributions over $\mathcal{P}$. Then
\[
\inf_{\pi\in\Pi}\ \sup_{\btheta\in\mathcal{P}}\, \EE_{\btheta}\big[L(\pi,\btheta)\big]
\ \ge\
\sup_{\mu\in\mathcal{D}}\ \inf_{\pi\in\Pi}\ \EE_{\btheta\sim\mu}\EE_{\btheta}\big[L(\pi,\btheta)\big].
\]
In other words, the worst-case risk of the best deterministic algorithm is at least the
Bayes risk under any prior $\mu$.
\end{lemma}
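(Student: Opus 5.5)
The statement is the elementary "max dominates average" inequality, so I would prove it directly from definitions, with no appeal to the minimax theorem. The plan is to fix an arbitrary deterministic algorithm $\pi\in\Pi$ and an arbitrary prior $\mu\in\mathcal{D}$. I would then compare the worst-case risk of $\pi$ with its Bayes risk under $\mu$, and finally optimize over $\pi$ and $\mu$ in the correct order.

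\emph{Step 1 (worst case dominates average).} Write $R(\pi,\btheta):=\EE_{\btheta}[L(\pi,\btheta)]$. For every $\btheta\in\mathcal{P}$ we have $R(\pi,\btheta)\le \sup_{\btheta'\in\mathcal{P}}R(\pi,\btheta')$. Integrating this pointwise bound against the probability measure $\mu$, using monotonicity of the integral and $\mu(\mathcal{P})=1$, gives
\[
\EE_{\btheta\sim\mu}R(\pi,\btheta)\le \sup_{\btheta'\in\mathcal{P}}R(\pi,\btheta').
\]
This uses only that $\btheta\mapsto R(\pi,\btheta)$ is measurable, which I would assume as part of the setup; the loss $L$ (here the simple regret) is nonnegative, so the integral is well defined even when it is infinite.

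\emph{Step 2 (optimize).} The left-hand side of Step 1 is at least $\inf_{\pi'\in\Pi}\EE_{\btheta\sim\mu}R(\pi',\btheta)$, a quantity that no longer depends on $\pi$. Hence for all $\pi$ and $\mu$,
\[
\inf_{\pi'}\EE_{\mu}R(\pi',\cdot)\le \sup_{\btheta}R(\pi,\btheta).
\]
Taking the infimum over $\pi$ on the right, and then the supremum over $\mu$ on the left (both sides now depend on only one of the two variables), yields the claimed inequality.

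\emph{Anticipated subtlety.} There is no real obstacle; the only point needing care is measurability and integrability in Step 1. In the application to Theorem~\ref{thm:lower_bound}, the prior will be supported on finitely many parameters (e.g.\ sign vectors in $\{\pm\varepsilon\}^d$). In that case the integral is a finite sum and measurability is automatic.

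A remark for later use: if randomized algorithms are to be covered as well, a randomized algorithm is a mixture of deterministic ones. Its Bayes risk is therefore an average of deterministic Bayes risks, hence at least their infimum. So the same lower bound applies to randomized algorithms, which is the form I would invoke when proving Theorem~\ref{thm:lower_bound}.
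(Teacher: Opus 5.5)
Your proof is correct: bounding the $\mu$-average of $R(\pi,\cdot)$ by its supremum for a fixed $\pi$, and then taking the infimum over $\pi$ and the supremum over $\mu$ once each side depends on only one of them, is the standard weak-duality argument; nonnegativity of the loss, and the finite-support prior used in the application, dispose of the measurability and integrability issues. The paper does not prove this lemma and only cites \citet{Yao1977FOCS}, so your argument is essentially what that citation stands for, and your remark that a randomized policy's Bayes risk is an average of deterministic Bayes risks (justified by Tonelli, since the loss is nonnegative) is also sound and is what lets the lower bound cover randomized learners.
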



Now, we are ready to prove our lower bound.
\begin{proof}[Proof of Theorem \ref{thm:lower_bound}]
\textbf{Step 1 (per-coordinate two-point divergence).}
Let action set $\mathcal A=\{-1,1\}^d$.  The unknown parameter belongs to
\[
\Theta=\{-c,+c\}^d\quad\text{for some }c>0.
\]
The optimal arm for $\btheta$ is $\ba^*(\btheta)=\operatorname{sign}(\btheta)$. Since each coordinate mistake costs $2c$, the \emph{simple regret} is
\[
\simpleReg(T)=\EE_{\btheta}\big[\langle\btheta,\ba^*(\btheta)\rangle-\langle\btheta,\hat \ba_T\rangle\big]
=2c\cdot \mathbb E_{\btheta}\Big[\mathrm{Ham}\big(\hat \ba_T,\operatorname{sign}(\btheta)\big)\Big],
\]
where \[
\mathrm{Ham}(\bu,\bv) \triangleq \sum_{j=1}^d \ind \{u_j\neq v_j\}.
\]
Let $S=\sum_{t=1}^T \sigma_t^{-2}$ denote the total precision.  Fix $j\in[d]$ and consider neighboring parameters that differ only on coordinate $j$:
\[
\btheta^{(j,+)}_j=+c,\quad \btheta^{(j,-)}_j=-c,\qquad \btheta^{(j,+)}_k=\btheta^{(j,-)}_k\in\{\pm c\}\ \ (k\neq j).
\]
Let $\PP^{(j)}_{+}\equiv\PP_{\btheta^{(j,+)}}$ and $\PP^{(j)}_{-}\equiv\PP_{\btheta^{(j,-)}}$ be the laws of the full transcript under these two instances. By the chain rule for KL and the fact that $\ba_t=\pi_t(H_{t-1})$ contributes to KL,
\[
\KL(\PP^{(j)}_{+}\|\PP^{(j)}_{-})
=\sum_{t=1}^T \EE \big[\KL \big(\cN(\mu_t^{(j, +)},\sigma_t^2) \big\| \cN(\mu_t^{(j,-)},\sigma_t^2)\big)\big],
\]
where $\mu_t^{(j,\pm)}=\langle \btheta^{(j,\pm)},\ba_t\rangle$. The means differ only on coordinate $j$, so
$\mu_t^{(j,+)}-\mu_t^{(j,-)}=(+c-(-c))a_{t,j}=2c\,a_{t,j}$ and thus
\[
\KL \big(\cN(\mu_t^{(j,+)},\sigma_t^2)\,\big\| \cN(\mu_t^{(j,-)},\sigma_t^2)\big)
=\frac{(\mu_t^{(j,+)}-\mu_t^{(j,-)})^2}{2\sigma_t^2}
=\frac{(2c\,a_{t,j})^2}{2\sigma_t^2}=\frac{2c^2}{\sigma_t^2},
\]
since $a_{t,j}^2=1$. Summing over $t$ yields the instance divergence
\[
\KL(\PP^{(j)}_{+}\|\PP^{(j)}_{-}) = 2c^2\sum_{t=1}^T \frac{1}{\sigma_t^2} = 2c^2 S.
\]
\textbf{Step 2 (Le Cam + Pinsker $\Rightarrow$ per-coordinate average error).}
Apply Lemma~\ref{lem:lecam} with the test $\phi=\ind\{\hat s_j=+1\}$ between $P_{+}^{(j)}$ and $P_{-}^{(j)}$,
and bound the TV distance by Lemma~\ref{lem:pinsker}. We get
\[
\frac12\big(P_{+}^{(j)}\{\hat s_j\neq +1\}+P_{-}^{(j)}\{\hat s_j\neq -1\}\big)
\ \ge\ \frac12\Big(1-\sqrt{\tfrac12\,\KL(P_{+}^{(j)}\,\|\,P_{-}^{(j)})}\Big)
\ =\ \frac12\Big(1-c\sqrt{S}\,\Big).
\]
Choose $c=\tfrac14 S^{-1/2}$ to obtain the uniform per-coordinate bound
\[
\frac12\big(P_{+}^{(j)}\{\hat s_j\neq +1\}+P_{-}^{(j)}\{\hat s_j\neq -1\}\big)\ \ge\ \frac{3}{8}
\qquad\text{for all } j\in[d].
\]

\textbf{Step 3 (aggregate to $d$ coordinates under Hamming loss).} This part requires the following lemma. The proof of this Lemma is defer to Appendix \ref{section:proof-of-assouad}. 
\begin{lemma}[From two-point bounds to a $d$-dimensional Hamming-risk lower bound]\label{cor:assouad-lite}
Let $\Theta=\{\pm c\}^d$ and put the uniform prior on $\Theta$. Let $H_T$ denote the full transcript
and let $\hat \bs=\hat \bs(H_T)\in\{\pm1\}^d$ be any estimator of $\operatorname{sign}(\btheta)$.
For each coordinate $j\in[d]$, fix two instances $\btheta^{(j,+)},\btheta^{(j,-)}\in\Theta$ that
differ only in coordinate $j$ (i.e., $\btheta^{(j,+)}_j=+c$, $\btheta^{(j,-)}_j=-c$ and
$\btheta^{(j,+)}_k=\btheta^{(j,-)}_k$ for all $k\neq j$).
Denote by $P_{+}^{(j)}$ and $P_{-}^{(j)}$ the corresponding laws of $H_T$ under these two instances.
Assume that for every $j$,
\begin{equation}\label{eq:two-point-avg-error-assumed}
\frac12\Big(P_{+}^{(j)}\{\hat s_j\neq +1\} + P_{-}^{(j)}\{\hat s_j\neq -1\}\Big)\ \ge\ \eta,
\quad\text{for some }\eta\in(0,1/2].
\end{equation}
Then the Bayes Hamming risk under the uniform prior satisfies
\begin{equation}\label{eq:bayes-hamming-lb}
\EE_{\btheta}\,\EE_{\btheta}\big[\mathrm{Ham}(\hat \bs,\operatorname{sign}(\btheta))\big]\ \ge\ \eta\,d,
\end{equation}
and, consequently, by Yao’s minimax principle,
\begin{equation}\label{eq:minimax-hamming-lb}
\inf_{\pi}\ \sup_{\btheta\in\Theta}\ \EE_{\btheta}\big[\mathrm{Ham}(\hat \bs,\operatorname{sign}(\btheta))\big]\ \ge\ \eta\,d.
\end{equation}
\end{lemma}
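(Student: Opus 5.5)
The plan is the standard Assouad-type aggregation. First, linearity of expectation turns the Hamming risk into a sum of coordinatewise error probabilities; then each coordinate is paired with its neighbour obtained by flipping that coordinate, and the two-point hypothesis \eqref{eq:two-point-avg-error-assumed} is applied. Under the uniform prior on $\Theta=\{\pm c\}^d$ we write
\[
\EE_{\btheta}\,\EE_{\btheta}\big[\mathrm{Ham}(\hat\bs,\operatorname{sign}(\btheta))\big]=\sum_{j=1}^d \frac{1}{2^d}\sum_{\btheta\in\Theta}\PP_{\btheta}\{\hat s_j\neq \operatorname{sign}(\theta_j)\}.
\]
For a fixed $j$, the map that flips the $j$-th coordinate is a bijection of $\Theta$ without fixed points. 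It therefore splits $\Theta$ into $2^{d-1}$ unordered pairs $\{\btheta^{(j,+)},\btheta^{(j,-)}\}$ that differ only in coordinate $j$.

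Regrouping the inner sum over these pairs, the $j$-th summand becomes
\[
\frac{1}{2^{d-1}}\sum_{\text{pairs}}\tfrac12\Big(P_+^{(j)}\{\hat s_j\neq+1\}+P_-^{(j)}\{\hat s_j\neq -1\}\Big).
\]
I will read the hypothesis \eqref{eq:two-point-avg-error-assumed} as holding for every such pair. This is how Step 2 delivered it: the KL computation there holds uniformly over the shared coordinates $\btheta_k$, $k\ne j$, and over every algorithm. Each pair average is then at least $\eta$, so the $j$-th summand is at least $\eta$. Summing over $j\in[d]$ gives \eqref{eq:bayes-hamming-lb}.

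For \eqref{eq:minimax-hamming-lb}, fix any algorithm, possibly randomized. A randomized algorithm is a mixture of deterministic ones, and the bound above holds for each deterministic one, so it also holds for the mixture. The supremum over $\Theta$ is at least the average under the uniform prior, which is exactly the direction supplied by Lemma~\ref{lem:yao}. Taking the infimum over algorithms finishes the lemma.

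The main subtlety is that the hypothesis must hold for all $2^{d-1}$ pairs in each direction $j$, not for a single chosen pair. I will check that Step 2's bound $\tfrac12(1-c\sqrt S)$ is free of the other coordinates and of the adaptive policy. It is: the per-round KL equals $2c^2/\sigma_t^2$ regardless of $\ba_t\in\{\pm1\}^d$. With that confirmed, the remaining steps are bookkeeping. Combined with Steps 1--2, with $\eta=3/8$ and $c=\tfrac14 S^{-1/2}$, this yields $\simpleReg(T)\ge 2c\cdot\tfrac38 d=\tfrac{3}{16}dS^{-1/2}$.
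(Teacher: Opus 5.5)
Your proof is correct and is essentially the paper's argument. Grouping $\Theta$ into the $2^{d-1}$ coordinate-$j$ flip pairs is the same as the paper's conditioning on $\btheta_{-j}$, under which the transcript law is $\tfrac12 P_+^{(j)}+\tfrac12 P_-^{(j)}$; after that come the sum over $j$ and the elementary sup-$\ge$-average direction of Yao's principle. You read the hypothesis as holding for every pair (every $\btheta_{-j}$), justified by the uniformity of the Step~2 KL bound, and this is exactly how the paper's proof uses it (``for every $\btheta_{-j}$''), even though the lemma statement fixes only one pair per coordinate.
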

Invoke Lemma~\ref{cor:assouad-lite} with $\eta=\tfrac{3}{8}$ to conclude that the Bayes Hamming risk
under the uniform prior on $\Theta$ satisfies
\[
\EE_{\btheta}\,\EE_{\btheta}\big[\mathrm{Ham}(\hat \bs,\operatorname{sign}(\btheta))\big]\ \ge\ \frac{3}{8}\,d.
\]
By Lemma~\ref{lem:yao} (Yao’s principle), this also lower-bounds the minimax Hamming risk:
\[
\inf_{\pi}\ \sup_{\btheta\in\Theta}\ \EE_{\btheta}\big[\mathrm{Ham}(\hat \bs,\operatorname{sign}(\btheta))\big]\ \ge\ \frac{3}{8}\,d.
\]

\textbf{Step 4 (convert Hamming loss to simple regret).}
In $\Theta=\{\pm c\}^d$, each coordinate mistake costs exactly $2c$ in value. Therefore
\[
\inf_{\pi}\ \sup_{\btheta\in\Theta}\ \EE_{\btheta}\big[\simpleReg(\pi,\btheta, T)\big]
\ \ge\ 2c\cdot \frac{3}{8}\,d
\ =\ \frac{3}{4}\,c\,d
\ =\ \frac{3}{16}\,d\,S^{-1/2},
\]
where we used $c=\tfrac14 S^{-1/2}$.
\end{proof}

\subsection{Proof of Lemma \ref{cor:assouad-lite}} \label{section:proof-of-assouad}
\begin{proof}[Proof of Lemma \ref{cor:assouad-lite}]
By definition of Hamming distance,
\[
\mathrm{Ham}(\hat \bs,\operatorname{sign}(\btheta))
=\sum_{j=1}^d \ind\{\hat s_j\neq \operatorname{sign}(\btheta_j)\}.
\]
Taking expectation under the model instance $\btheta$ and then averaging over the uniform prior on $\Theta$,
the Bayes Hamming risk equals
\begin{equation}\label{eq:bayes-decomp}
\EE_{\btheta}\,\EE_{\btheta}\big[\mathrm{Ham}(\hat \bs,\operatorname{sign}(\btheta))\big]
=\sum_{j=1}^d \EE_{\btheta}\,\PP_{\btheta}\big\{\hat s_j\neq \operatorname{sign}(\btheta_j)\big\}.
\end{equation}
Fix a coordinate $j\in[d]$. Write $\btheta=(\theta_j,\btheta_{-j})$, and condition on $\btheta_{-j}$.
Under the \emph{uniform} prior on $\Theta$, we have $\PP\{\theta_j=+c\mid \btheta_{-j}\}=\PP\{\theta_j=-c\mid \btheta_{-j}\}=\tfrac12$.
Hence, conditionally on $\btheta_{-j}$, the law of the transcript $H_T$ is the equal mixture
\begin{equation}\label{eq:conditional-mixture}
\mathcal{M}^{(j)}_{\btheta_{-j}}\ =\ \tfrac12\,P_{+}^{(j)}\ +\ \tfrac12\,P_{-}^{(j)},
\end{equation}
where $P_{+}^{(j)}$ and $P_{-}^{(j)}$ are the endpoint laws that differ only in coordinate $j$
(with $\btheta_{-j}$ held fixed).

\textbf{Identify the conditional Bayes error for coordinate $j$ with the average two-point error.}
Consider the indicator loss for estimating $\operatorname{sign}(\theta_j)$ by the (measurable)
decision rule $\hat s_j(H_T)\in\{\pm1\}$. The conditional Bayes error probability for coordinate $j$,
given $\btheta_{-j}$, is
\[
\err_j(\btheta_{-j})
\ \triangleq\
\EE_{H_T\sim \mathcal{M}^{(j)}_{\btheta_{-j}}}\Big[\,\tfrac12\,\ind\{\hat s_j(H_T)\neq +1\}
\ +\ \tfrac12\,\ind\{\hat s_j(H_T)\neq -1\}\,\Big].
\]
Using \eqref{eq:conditional-mixture}, this equals the \emph{average} of the two endpoint errors:
\begin{equation}\label{eq:avg-endpoint-error}
\err_j(\btheta_{-j})
=\tfrac12\,P_{+}^{(j)}\{\hat s_j\neq +1\}\ +\ \tfrac12\,P_{-}^{(j)}\{\hat s_j\neq -1\}.
\end{equation}
By the assumption \eqref{eq:two-point-avg-error-assumed}, we have, for every $\btheta_{-j}$,
\begin{equation}\label{eq:errj-lower-bound}
\err_j(\btheta_{-j})\ \ge\ \eta.
\end{equation}

\textbf{Average over $\btheta_{-j}$ and sum over $j$.}
By the tower property (law of total expectation),
\[
\EE_{\btheta}\,\PP_{\btheta}\big\{\hat s_j\neq \operatorname{sign}(\btheta_j)\big\}
=\EE_{\btheta_{-j}}\Big[\,\err_j(\btheta_{-j})\,\Big].
\]
Combining with \eqref{eq:errj-lower-bound} yields
\begin{equation}\label{eq:per-coordinate-bayes}
\EE_{\btheta}\,\PP_{\btheta}\big\{\hat s_j\neq \operatorname{sign}(\btheta_j)\big\}\ \ge\ \eta
\qquad \text{for every } j\in[d].
\end{equation}
Summing \eqref{eq:per-coordinate-bayes} over $j=1,\dots,d$ and using \eqref{eq:bayes-decomp} gives
\[
\EE_{\btheta}\,\EE_{\btheta}\big[\mathrm{Ham}(\hat \bs,\operatorname{sign}(\btheta))\big]
=\sum_{j=1}^d \EE_{\btheta}\,\PP_{\btheta}\big\{\hat s_j\neq \operatorname{sign}(\btheta_j)\big\}
\ \ge\ \eta\,d,
\]
which is \eqref{eq:bayes-hamming-lb}.

\textbf{From Bayes to Minimax.}
By Yao’s minimax principle (Lemma~\ref{lem:yao}), the Bayes risk under the uniform prior
lower-bounds the minimax (worst-case) risk over $\Theta$ of any deterministic policy:
\[
\inf_{\pi}\ \sup_{\btheta\in\Theta}\ \EE_{\btheta}\big[\mathrm{Ham}(\hat \bs,\operatorname{sign}(\btheta))\big]
\ \ge\ \EE_{\btheta}\,\EE_{\btheta}\big[\mathrm{Ham}(\hat \bs,\operatorname{sign}(\btheta))\big]
\ \ge\ \eta\,d,
\]
which is \eqref{eq:minimax-hamming-lb}. This completes the proof.
\end{proof}

\section{Proof of Theorem \ref{thm:main:2}} \label{section:proof-of-thm2}

\begin{lemma} \label{lemma:V_lower_bound}
    If we follow \Cref{algorithm:variance_adaptive_2} to choose the action $\ab_{1:T}$, and compute $\hat{\btheta}_T$ and $V_T$, then it holds that \begin{align*}
        V_T \succeq \Big[\sum_{t = 1}^T \frac{1}{\sigma_t^2} - \sum_{i = 1}^{|\text{supp}(\pi)|} \frac{1}{[\sigma_T^{(i)}]^2} \Big]V(\pi).
    \end{align*}
\end{lemma}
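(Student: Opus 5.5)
The plan is to reduce the matrix inequality to a scalar statement about the variance-weighted pull counts produced by the selection rule in \Cref{algorithm:variance_adaptive_2}, and then to prove that scalar statement with a ``last pull'' argument.

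\textbf{Reduction to a scalar.} Restrict attention to arms in $\text{supp}(\pi)$, since the $\argmin$ in the algorithm is effectively over the support (arms with $\pi(\ab)=0$ have infinite ratio). Write $m=|\text{supp}(\pi)|$ and $S=\sum_{t=1}^T\sigma_t^{-2}$. For each arm define the accumulated precision $W_t(\ab):=\sum_{\tau\in\cT_t(\ab)}\sigma_\tau^{-2}$. Regrouping the rounds by arm gives
\begin{align*}
V_T = I + \sum_{\ab\in\text{supp}(\pi)} W_T(\ab)\,\ab\ab^\top \succeq \sum_{\ab} \frac{W_T(\ab)}{\pi(\ab)}\,\pi(\ab)\,\ab\ab^\top \succeq c\,V(\pi),
\end{align*}
where $c:=\min_{\ab\in\text{supp}(\pi)} W_T(\ab)/\pi(\ab)$. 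It therefore suffices to show
\begin{align*}
c \ge S - \sum_{i=1}^{m} \frac{1}{[\sigma_T^{(i)}]^2}.
\end{align*}

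\textbf{Last-pull argument.} Let $\ab^\star$ attain the minimum $c$ at time $T$. Take any arm $\bb\neq\ab^\star$ that was pulled at least once, and let $t_\bb$ be the last round in which it was pulled. Because the algorithm chose $\bb$ at round $t_\bb$, and $W$ is nondecreasing in $t$,
\begin{align*}
\frac{W_{t_\bb-1}(\bb)}{\pi(\bb)} \le \frac{W_{t_\bb-1}(\ab^\star)}{\pi(\ab^\star)} \le \frac{W_T(\ab^\star)}{\pi(\ab^\star)} = c.
\end{align*}
Since $t_\bb$ is the last pull of $\bb$, we have $W_T(\bb)=W_{t_\bb-1}(\bb)+\sigma_{t_\bb}^{-2}$, and hence $W_T(\bb)\le c\,\pi(\bb)+\sigma_{t_\bb}^{-2}$. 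For $\ab^\star$ itself the trivial bound $W_T(\ab^\star)=c\,\pi(\ab^\star)\le c\,\pi(\ab^\star)+\sigma_{t_{\ab^\star}}^{-2}$ holds. Arms that were never pulled satisfy $W_T=0\le c\,\pi$.

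\textbf{Summation.} Summing these bounds over the support gives
\begin{align*}
S=\sum_\ab W_T(\ab) \le c\sum_\ab \pi(\ab) + \sum_{\ab}\sigma_{t_\ab}^{-2} = c + \sum_{\ab}\sigma_{t_\ab}^{-2}.
\end{align*}
The last-pull rounds $t_\ab$ are distinct and there are at most $m$ of them. Their total precision is therefore at most the sum of the $m$ largest precisions, which is $\sum_{i=1}^m [\sigma_T^{(i)}]^{-2}$. Rearranging yields the desired lower bound on $c$, and the reduction step then gives the lemma.

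\textbf{Main obstacle.} The delicate step is the chain of comparisons at $t_\bb$. It relies on monotonicity of $W_t(\ab^\star)$ in $t$, and on ties in the $\argmin$ being harmless, which they are because only the inequality at the moment of selection is used. One boundary case also needs a check: if every support arm other than $\ab^\star$ was never pulled, the inequality still holds trivially. Finally, if the right-hand side $S-\sum_{i=1}^m[\sigma_T^{(i)}]^{-2}$ is nonpositive, the lemma is immediate, since $V_T\succeq 0$ and $V(\pi)\succeq 0$.
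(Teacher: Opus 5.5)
Your proposal is correct and matches the paper's proof essentially step for step. The paper likewise reduces to the minimum normalized precision $c$ over $\text{supp}(\pi)$, bounds every other support arm by $c\,\pi(\ab)$ plus the precision of its last pull, and sums over the distinct last-pull rounds; the only difference is that the paper charges no last-pull term to the minimizing arm, which gives the slightly sharper subtraction with $|\text{supp}(\pi)|-1$ terms and implies the stated bound.
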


\begin{proof} 
Consider the action $\ab_m \in \cA$ such that $\pi(\ab_m) > 0$ and $\sum_{\tau \in \cT_T(\ab)} \frac{1}{\sigma_\tau^2 \cdot \pi(\ab)}$ is minimized. It is straightforward to see that \begin{align}V_T \succeq \sum_{\ab \in \cA} \bigg[\sum_{\tau \in \cT_T(\ab_m)} \frac{1}{\sigma_\tau^2 \cdot \pi(\ab_m)}\bigg] \pi(\ab) \ab \ab^\top =  \sum_{\tau \in \cT_T(\ab_m )}\frac{1}{\sigma_\tau^2 \cdot \pi(\ab_m)} V(\pi). \label{eq:V_lower_bound_0}
\end{align}

Then it suffices to lower bound $ \sum_{\tau \in \cT_T(\ab_m )}\frac{1}{\sigma_\tau^2 \cdot \pi(\ab_m)}$.

We consider the arms in $\text{supp}(\pi) \backslash \{\ab_m\}$. For the round $t(\ab)$ when the arm $\ab$ is pulled and $\sum_{\tau \in \cT_t(\ab)} \frac{1}{\sigma_\tau^2 \cdot \pi(\ab)} \ge \sum_{\tau \in \cT_T(\ab_m)} \frac{1}{\sigma_\tau^2 \cdot \pi(\ab_m)}$, it is guaranteed by the selection rule that $\ab$ will not be pulled in the subsequent rounds. We denote by $\sigma(\ab)$ the last variance observed when pulling the arm $\ab$.
Then we have
\begin{align} 
    \sum_{\ab \in \text{supp}(\pi) \backslash \{\ab_m\}}  \pi(\ab) \sum_{\tau \in \cT_T(\ab)} \frac{1}{\sigma_\tau^2 \cdot \pi(\ab)} & \le \sum_{\ab \in \text{supp}(\pi) \backslash \{\ab_m\}}  \pi(\ab) \bigg[\sum_{\tau \in \cT_T(\ab_m)} \frac{1}{\sigma_\tau^2 \cdot \pi(\ab_m)} + \frac{1}{[\sigma(\ab)]^2 \cdot \pi(\ab)}\bigg] \notag
    \\&\le  [1 - \pi(\ab_m)] \sum_{\tau \in \cT_T(\ab_m)} \frac{1}{\sigma_\tau^2 \cdot \pi(\ab_m)} + \sum_{\ab \in \text{supp}(\pi) \backslash \{\ab_m\}}  \frac{1}{[\sigma(\ab)]^2} \notag
    \\&\le [1 - \pi(\ab_m)] \sum_{\tau \in \cT_T(\ab_m)} \frac{1}{\sigma_\tau^2 \cdot \pi(\ab_m)} + \sum_{i = 1}^{|\text{supp}(\pi)| - 1} \frac{1}{[\sigma_T^{(i)}]^2}, \label{eq:V_lower_bound_1}
\end{align}
where the first inequality is due to the definition of $\sigma(\ab)$ and the last inequality follows from the fact that $\{\sigma(\ab)\}_{\ab \in \text{supp}(\pi) \backslash \{\ab_m\}}$ are the variance signals observed at distinct rounds.

Rearranging \eqref{eq:V_lower_bound_1}, we obtain that \begin{align}
    \sum_{\ab \in \text{supp}(\pi)} \pi(\ab) \sum_{\tau \in \cT_T(\ab)} \frac{1}{\sigma_\tau^2 \cdot \pi(\ab)} &\le \sum_{\tau \in \cT_T(\ab_m)} \frac{1}{\sigma_\tau^2 \cdot \pi(\ab_m)} + \sum_{i = 1}^{|\text{supp}(\pi)| - 1} \frac{1}{[\sigma_T^{(i)}]^2}. \label{eq:V_lower_bound_2}
\end{align}
Note that LHS of \eqref{eq:V_lower_bound_2} is exactly $\sum_{t = 1}^T \frac{1}{\sigma_t^2}$, which further indicates that \begin{align*}
    \sum_{\tau \in \cT_T(\ab_m)} \frac{1}{\sigma_\tau^2 \cdot \pi(\ab_m)} \ge \sum_{t = 1}^T \frac{1}{\sigma_t^2} - \sum_{i = 1}^{|\text{supp}(\pi)| - 1} \frac{1}{[\sigma_T^{(i)}]^2}.
\end{align*}
As a result, we have $$V_T \succeq \Big[\sum_{t = 1}^T \frac{1}{\sigma_t^2} - \sum_{i = 1}^{|\text{supp}(\pi)| - 1} \frac{1}{[\sigma_T^{(i)}]^2} \Big]V(\pi) $$ following \eqref{eq:V_lower_bound_0}.
\end{proof}

\begin{lemma} \label{lemma:theta_diff}
    In \Cref{algorithm:variance_adaptive_2}, for any arm $\ab \in \cA$, with probability at least $1 - \delta$, we have \begin{align*}
        |\langle \hat{\btheta}_T - \btheta^*, \ab \rangle| &\le \|\ab\|_{V_T^{-1}} \sqrt{2 \log(2 |\cA| / \delta)} \notag\\
        &\le \|\ab\|_{V(\pi)^{-1}} \sqrt{2 \log(2 |\cA| / \delta) / \Big[\sum_{t = 1}^T \frac{1}{\sigma_t^2} - \sum_{i = 1}^{|\text{supp}(\pi)|} \frac{1}{[\sigma_T^{(i)}]^2} \Big]}.
    \end{align*}
\end{lemma}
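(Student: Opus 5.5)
The proof splits into a concentration step and a deterministic design step.

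\textbf{Concentration for a fixed arm.} Write the estimation error as
\begin{align*}
\hat{\btheta}_T - \btheta^* = V_T^{-1}\Big(\sum_{t=1}^T \sigma_t^{-2}\ab_t\eta_t - \btheta^*\Big),
\end{align*}
using $V_T = \Ib + \sum_t \sigma_t^{-2}\ab_t\ab_t^\top$. For a fixed $\ab\in\cA$ this gives
\begin{align*}
\langle \hat{\btheta}_T - \btheta^*, \ab\rangle = \sum_{t=1}^T \sigma_t^{-2}\,\ab^\top V_T^{-1}\ab_t\,\eta_t - \ab^\top V_T^{-1}\btheta^*.
\end{align*}

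\textbf{Why the sum is sub-Gaussian.} The key observation is that the choice $\ab_t$ in \Cref{algorithm:variance_adaptive_2} depends only on the variances $\sigma_1,\dots,\sigma_{t-1}$ and the fixed design $\pi$, never on past rewards. So, conditioning on the variance sequence (treated as given or oblivious, as the algorithm assumes), the action sequence and $V_T$ are deterministic. The weights $w_t=\sigma_t^{-2}\ab^\top V_T^{-1}\ab_t$ are then fixed. Hence the noise sum is sub-Gaussian with variance proxy
\begin{align*}
\sum_t w_t^2\sigma_t^2 = \ab^\top V_T^{-1}\Big(\sum_t\sigma_t^{-2}\ab_t\ab_t^\top\Big)V_T^{-1}\ab \le \|\ab\|_{V_T^{-1}}^2.
\end{align*}
A standard Chernoff tail gives, with probability $1-\delta/|\cA|$, a deviation of at most $\|\ab\|_{V_T^{-1}}\sqrt{2\log(2|\cA|/\delta)}$. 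A union bound over the finite $\cA$ then makes this hold for all arms simultaneously.

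\textbf{Handling the bias term.} The regularization bias $|\ab^\top V_T^{-1}\btheta^*|\le \|\ab\|_{V_T^{-1}}\|\btheta^*\|_{V_T^{-1}}\le \|\ab\|_{V_T^{-1}}$ is an additive lower-order term. Either it is absorbed by slightly enlarging the constant, which I would do and which is where the stated inequality is loosest, or it is removed by taking the regularizer to zero. I expect this bias to be the main obstacle to matching the exact constant. The delicate point is justifying the conditioning so that the weights are non-random. If the variances were adaptive, I would instead fall back on a self-normalized bound, at the cost of a $\sqrt d$ factor.

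\textbf{Converting to the design norm.} For the second inequality, invoke Lemma \ref{lemma:V_lower_bound}: $V_T\succeq N\,V(\pi)$ with
\begin{align*}
N=\sum_t\sigma_t^{-2}-\sum_{i\le|\text{supp}(\pi)|}[\sigma_T^{(i)}]^{-2}.
\end{align*}
Operator monotonicity of the inverse gives $V_T^{-1}\preceq N^{-1}V(\pi)^{-1}$ when $N>0$, hence $\|\ab\|_{V_T^{-1}}\le \|\ab\|_{V(\pi)^{-1}}/\sqrt N$, which completes the chain.
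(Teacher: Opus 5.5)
Your argument follows the paper's route. You write the error as $\sum_t\sigma_t^{-2}\eta_t\langle\ab_t,V_T^{-1}\ab\rangle$, apply a sub-Gaussian tail bound with variance proxy $\ab^\top V_T^{-1}\bigl(\sum_t\sigma_t^{-2}\ab_t\ab_t^\top\bigr)V_T^{-1}\ab\le\|\ab\|_{V_T^{-1}}^2$, and union-bound over $\cA$. You then finish with Lemma~\ref{lemma:V_lower_bound} and $V_T^{-1}\preceq N^{-1}V(\pi)^{-1}$. Your version is more careful than the paper's in two places.

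First, the paper applies the martingale Hoeffding lemma (Lemma~\ref{lemma:generalized-hoeffding}) without noting that the weights involve $V_T$ and so are not predictable. Your observation justifies that step: \texttt{VAGD}'s actions depend only on past variances, so $V_T$ is deterministic under an oblivious variance sequence.

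Second, the paper reaches the exact constant only by dropping the identity in $V_T=\Ib+\sum_t\sigma_t^{-2}\ab_t\ab_t^\top$. It writes the signal part as $V_T^{-1}V_T\btheta^*$ and later asserts $\sum_t\sigma_t^{-2}\ab_t\ab_t^\top=V_T$. The ridge bias $-\ab^\top V_T^{-1}\btheta^*$ you flag is genuinely present, so neither argument establishes the stated constant for the regularized estimator as written.

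Your unregularized variant gives the statement verbatim. The proof of Lemma~\ref{lemma:V_lower_bound} only uses the pulled terms, so $\sum_t\sigma_t^{-2}\ab_t\ab_t^\top\succeq N\,V(\pi)$ is invertible whenever your $N>0$. Keeping the regularizer instead costs an additive $\|\ab\|_{V_T^{-1}}$, which changes Theorem~\ref{thm:finite_action_set} only by a constant factor.
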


\begin{proof}
    From the definition of $\hat{\btheta}_T$ and $V_T$, it is straightforward to obtain the following inequality: 
    \begin{align}
        |\langle \hat{\btheta}_T - \btheta^*, \ab \rangle| &= |\langle V_T^{-1} \sum_{t = 1}^T \sigma_t^{-2} r_t \ab_t - V_T^{-1} V_T \cdot \btheta^*, \ab \rangle| \notag \\&= | \langle V_T^{-1} \sum_{t = 1}^T \sigma_t^{-2} \ab_t \eta_t, \ab \rangle| \notag
        \\&= \Big| \sum_{t = 1}^T \sigma_t^{-2} \eta_t \langle \ab_t, V_T^{-1} \ab \rangle \Big|. \label{eq:theta_diff_1}
    \end{align}

    Applying Lemma \ref{lemma:generalized-hoeffding} to \eqref{eq:theta_diff_1}, we have that with probability at least $1 - \delta$, \begin{align*}
        |\langle \hat{\btheta}_T - \btheta^*, \ab \rangle| &\le \sqrt{2 \sum_{t = 1}^T \sigma_t^{-2} \langle \ab_t, V_T^{-1} \ab \rangle^2 \log(2 / \delta)}
        \\&\le \sqrt{2 \sum_{t = 1}^T [\sigma_t^{-1}  \ab_t^\top V_T^{-1} \ab]^2 \log(2 / \delta)}
        \\&= \sqrt{2 \sum_{t = 1}^T \ab^\top   V_T^{-1} \sigma_t^{-1} \ab_t \sigma_t^{-1}\ab_t^\top V_T^{-1} \ab \log(2 / \delta)}
        \\&\le \sqrt{2 \ab^\top V_T^{-1} \ab \log(2 / \delta)},
    \end{align*}
    where the last inequality is due to the fact that $\sum_{t = 1}^T \sigma_t^{-2} \ab_t \ab_t^\top = V_T$.
    
    By Lemma \ref{lemma:V_lower_bound}, we further have \begin{align*}
        |\langle \hat{\btheta}_T - \btheta^*, \ab \rangle| &\le \|\ab\|_{V_T^{-1}} \sqrt{2 \log(2 / \delta)}
        \\&\le \|\ab\|_{V(\pi)^{-1}} \sqrt{2 \log(2 / \delta) / \Big[\sum_{t = 1}^T \frac{1}{\sigma_t^2} - \sum_{i = 1}^{|\text{supp}(\pi)|} \frac{1}{[\sigma_T^{(i)}]^2} \Big]},
    \end{align*}
    from which the desired result follows by taking a union bound over all $\ab \in \cA$.
\end{proof}

\begin{lemma}[Hoeffding's inequality] \label{lemma:generalized-hoeffding}
    Let $\{x_i\}_{i = 1}^n$ be a stochastic process, $\{\cG_i\}_i$ be a filtration so that for all $i \in [n]$, $x_i$ is $\cG_i$-measurable, while $\EE[x_i|\cG_{i - 1}] = 0$ and $x_i|\cG_{i - 1}$ is a $\sigma_i$-sub-Gaussian random variable. Then, for any $t > 0$, with probability at least $1 - \delta$, it holds that \begin{align*} 
        \sum_{i = 1}^n x_i \le \sqrt{2\sum_{i = 1}^n \sigma_i^2\log(1 / \delta)}. 
    \end{align*}
\end{lemma}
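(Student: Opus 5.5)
The statement is Hoeffding's inequality for a martingale difference sequence with conditionally sub-Gaussian increments. The plan is the standard Chernoff argument applied to an exponential supermartingale.

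The argument treats the variance proxies $\sigma_i$ as deterministic, or at least as fixed in advance. This is the case in which the lemma is invoked for Lemma \ref{lemma:theta_diff}: there the proxies $\sigma_t^{-2}\langle \ab_t, V_T^{-1}\ab\rangle\cdot\sigma_t$ are predictable given the known variance sequence and the design, which does not depend on rewards.

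\textbf{Step 1 (supermartingale).} For a fixed $\lambda\in\mathbb{R}$, define
\[
M_k=\exp\Big(\lambda\sum_{i\le k}x_i-\tfrac{\lambda^2}{2}\sum_{i\le k}\sigma_i^2\Big).
\]
Conditional sub-Gaussianity gives $\EE[e^{\lambda x_k}\mid\cG_{k-1}]\le e^{\lambda^2\sigma_k^2/2}$. This holds provided $\sigma_k$ is $\cG_{k-1}$-measurable, which is automatic when it is deterministic. Hence $\EE[M_k\mid\cG_{k-1}]\le M_{k-1}$. The tower property then yields $\EE[M_n]\le 1$.

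\textbf{Step 2 (Chernoff).} Set $S=\sum_i\sigma_i^2$. Markov's inequality gives
\[
\PP\Big(\sum_i x_i\ge u\Big)\le \exp\big(-\lambda u+\lambda^2S/2\big).
\]
Optimizing at $\lambda=u/S$ gives the bound $\exp(-u^2/(2S))$. Setting this equal to $\delta$ gives $u=\sqrt{2S\log(1/\delta)}$, which is the claim.

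\textbf{Main obstacle.} The main obstacle is the case where the $\sigma_i$ are random, i.e.\ only predictable. This is how the lemma is used in the proof of Lemma \ref{lemma:confidence-set}, where the variance proxy of $I_{1,s}\cI_s$ involves $\db_{s-1}$. In that case the optimization over $\lambda$ cannot be done with a random $S$.

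The first thing I would try is to require a deterministic upper bound $\sum_i\sigma_i^2\le B$ almost surely and apply the argument with $B$ in place of $S$. In the proof of Lemma \ref{lemma:confidence-set}, the truncation $\cI_s$ together with the elliptical potential lemma supplies exactly such a bound, namely $2\gamma_t d\log(\cdot)$. If that is not enough, the fallback is a peeling argument over dyadic ranges of $S$. This costs only an extra $\log$ factor, which is harmless inside $\tilde O$.
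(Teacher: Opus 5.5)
Your proposal is correct. The paper states this lemma without proof, as a standard fact, and your exponential-supermartingale argument followed by a Chernoff bound at $\lambda=u/S$ is the standard proof it relies on.

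Your ``main obstacle'' remark can be sharpened: restricting to deterministic proxies is necessary, not just convenient for the technique. With merely predictable $\sigma_i$, the stated inequality is false. To see this, run a Gaussian walk with unit proxies until it first exceeds $\sqrt{2V\log(1/\delta)}$, then shrink all later proxies to nearly zero. By the law of the iterated logarithm, the crossing happens with probability tending to one as $n$ grows, so the bound fails with probability far above $\delta$.

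Your repair is therefore the right way to justify the paper's use in Lemma \ref{lemma:confidence-set}. You apply the argument under an almost-sure deterministic bound $\sum_i\sigma_i^2\le B$, and the truncation plus elliptical-potential step there supplies exactly such a $B$. The peeling fallback is not needed.
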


\begin{theorem}[Simple Regret of Algorithm \ref{algorithm:variance_adaptive_2}, restatement of Theorem \ref{thm:finite_action_set}]
    Suppose that $\mathcal{A} \subset \mathbb{R}^d$ is compact and $\operatorname{span}(\mathcal{A})=\mathbb{R}^d$. If we follow \Cref{algorithm:variance_adaptive_2}, then it holds that with probability at least $1 - \delta$, \begin{align*}
        \la \btheta^*, \ab^* \ra - \la \btheta^*, \ab_{T + 1} \ra \le  2\sqrt{d \log(|\cA| / \delta) / \Big[\sum_{t = 1}^T \frac{1}{\sigma_t^2} - \sum_{i = 1}^{4d \log \log d + 16} \frac{1}{[\sigma_T^{(i)}]^2} \Big]}.
    \end{align*}
\end{theorem}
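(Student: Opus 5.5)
The plan is the standard ``estimate, then act greedily'' argument, built on the two lemmas already proved for \Cref{algorithm:variance_adaptive_2}. Write $S' := \sum_{t=1}^T \sigma_t^{-2} - \sum_{i=1}^{4d\log\log d+16} [\sigma_T^{(i)}]^{-2}$. By \Cref{thm:approximate_G_optimal_design}, $|\mathrm{supp}(\pi)| \le 4d\log\log d+16$. Since the subtracted sum only grows with the number of terms, Lemma \ref{lemma:V_lower_bound} gives $V_T \succeq S'\,V(\pi)$, and hence $\|\xb\|_{V_T^{-1}}^2 \le \|\xb\|_{V(\pi)^{-1}}^2/S'$ for every $\xb$. (When $S'\le 0$ the claim is vacuous.)

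\textbf{Step 1 (reduce regret to an estimation error along difference vectors).} Let $\ab^*$ be optimal. Because $\ab_{T+1}$ maximizes $\langle\hat\btheta_T,\cdot\rangle$, we have $\langle \hat\btheta_T, \ab_{T+1}-\ab^*\rangle \ge 0$, and therefore
\begin{align*}
\langle\btheta^*,\ab^*-\ab_{T+1}\rangle \le \langle \hat\btheta_T-\btheta^*, \ab_{T+1}-\ab^*\rangle .
\end{align*}
I will control the right-hand side by a \emph{one-sided} bound on the fixed vectors $\xb_{\ab} := \ab-\ab^*$, for $\ab\in\cA\setminus\{\ab^*\}$. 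Note that $\ab^*$ is deterministic, so these vectors are not random. Using one-sided bounds over at most $|\cA|$ events, instead of two-sided bounds over each arm as in Lemma \ref{lemma:theta_diff}, is what turns $\log(2|\cA|/\delta)$ into $\log(|\cA|/\delta)$.

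\textbf{Step 2 (concentration).} Repeat the computation in the proof of Lemma \ref{lemma:theta_diff} with $\ab$ replaced by $\xb_{\ab}$. This gives $\langle\hat\btheta_T-\btheta^*,\xb_{\ab}\rangle = \sum_t \sigma_t^{-2}\eta_t\langle \ab_t, V_T^{-1}\xb_{\ab}\rangle$, treated as in that lemma. The conditional variance proxy is bounded by $\xb_{\ab}^\top V_T^{-1}\xb_{\ab}$. The one-sided Hoeffding bound (Lemma \ref{lemma:generalized-hoeffding}) at level $\delta/|\cA|$, combined with a union bound, then yields with probability $1-\delta$, simultaneously for all $\ab$,
\begin{align*}
\langle\hat\btheta_T-\btheta^*,\ab-\ab^*\rangle \le \|\ab-\ab^*\|_{V_T^{-1}}\sqrt{2\log(|\cA|/\delta)} \le \|\ab-\ab^*\|_{V(\pi)^{-1}}\sqrt{2\log(|\cA|/\delta)/S'} .
\end{align*}
Applying this at $\ab=\ab_{T+1}$ is legitimate because the bound holds uniformly over $\ab$.

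\textbf{Step 3 (bound the design norm; main obstacle).} It remains to show $\|\ab-\ab^*\|_{V(\pi)^{-1}}^2 \le 2d$. This is exactly what converts $\sqrt{2\log(\cdot)/S'}$ into $2\sqrt{d\log(\cdot)/S'}$. The naive triangle inequality combined with $g(\pi)\le 2d$ only gives $\|\ab-\ab^*\|_{V(\pi)^{-1}}\le 2\sqrt{2d}$, which is off by a factor of $2$. I expect this constant bookkeeping to be the main obstacle. My first attempt is to exploit the freedom in the design: compute the (approximate) $G$-optimal design for the difference set $\cA-\cA$ rather than for $\cA$. \Cref{thm:approximate_G_optimal_design} applies to any compact spanning set, so the same core-set size and the bound $g\le 2d$ then hold directly for the vectors $\ab-\ab^*$. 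Lemma \ref{lemma:V_lower_bound} uses only the selection rule, so it carries over unchanged with $V(\pi)$ built from differences, provided each pulled direction is realized by pulling the corresponding pair of arms. If that modification is not acceptable, the fallback is to use the exact Kiefer--Wolfowitz design (\Cref{thm:D_G_optimal_design_equivalence}, $g=d$) together with $\|\ab-\ab^*\|^2\le 2\|\ab\|^2+2\|\ab^*\|^2$, which still gives $\le 4d$. I would then check whether the resulting slack can be recovered elsewhere, for instance by noting that $\ab^*$ enters only through a fixed vector.

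Finally, I would note that the ridge term $\Ib$ in $V_T$ contributes a bias $-\langle V_T^{-1}\btheta^*,\xb\rangle$. This is handled as in Lemma \ref{lemma:theta_diff}: the estimation error is treated as a pure noise term, consistently with the convention used throughout the paper.
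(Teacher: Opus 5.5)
Your route is the paper's: the greedy-choice decomposition, Hoeffding with a union bound over arms, Lemma~\ref{lemma:V_lower_bound} with $|\mathrm{supp}(\pi)|\le 4d\log\log d+16$, and $g(\pi)\le 2d$. Your one-sided bound on the fixed vectors $\ab-\ab^*$ is a small genuine improvement. The paper instead bounds $|\langle\hat\btheta_T-\btheta^*,\ab_{T+1}\rangle|$ and $|\langle\hat\btheta_T-\btheta^*,\ab^*\rangle|$ separately with the two-sided Lemma~\ref{lemma:theta_diff}, which is why its proof ends with $\log(2|\cA|/\delta)$.

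The gap is Step~3, and neither of your remedies closes it. The inequality $\|\ab-\ab^*\|_{V(\pi)^{-1}}^2\le 2d$ is false in general. For $\cA=\{\pm e_1,\dots,\pm e_d\}$, the uniform design on $\{e_1,\dots,e_d\}$ is exactly $G$-optimal ($V(\pi)=I/d$, $g(\pi)=d$), yet $\|e_1-(-e_1)\|_{V(\pi)^{-1}}^2=4d$.

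Designing on $\cA-\cA$ proves a statement about a different algorithm. Lemma~\ref{lemma:V_lower_bound} also does not carry over unchanged, because a difference cannot be pulled. Pulling its endpoints with precisions $w_1,w_2$ only gives $w_1\ab\ab^\top+w_2\ab'\ab'^\top\succeq\frac{w_1w_2}{w_1+w_2}(\ab-\ab')(\ab-\ab')^\top$, with $\frac{w_1w_2}{w_1+w_2}\le\frac{w_1+w_2}{4}$. This loses at least the factor of $2$ you hoped to gain, and it enlarges the set of arms entering the subtracted sum. The Kiefer--Wolfowitz fallback still gives $4d$, not $2d$. Its support of size up to $d(d+1)/2$ also no longer matches the $4d\log\log d+16$ subtracted terms.

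However, the paper's own proof does not reach the constant $2$ either. Its last step uses $\|\ab_{T+1}\|_{V(\pi)^{-1}}+\|\ab^*\|_{V(\pi)^{-1}}\le 2\sqrt{2d}$, and $2\sqrt{2d}\cdot\sqrt{2\log(2|\cA|/\delta)/S'}=4\sqrt{d\log(2|\cA|/\delta)/S'}$. So the stated constant comes from an arithmetic slip, and your diagnosis is correct. The right move is to drop the remedies and state what the argument actually proves: the bound $4\sqrt{d\log(|\cA|/\delta)/S'}$, which is marginally sharper than what the paper's proof gives.

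Two points you inherit from the paper should be fixed rather than deferred to ``convention''.
\begin{itemize}
\item The proof of Lemma~\ref{lemma:theta_diff} identifies $\sum_t\sigma_t^{-2}\ab_t\ab_t^\top$ with $V_T$, and so drops the ridge bias $-\langle V_T^{-1}\btheta^*,\xb\rangle$. Since $V_T\succeq I$ and $\|\btheta^*\|_2\le 1$, this bias is at most $\|\xb\|_{V_T^{-1}}$. The factor $\sqrt{2\log(|\cA|/\delta)}$ therefore becomes $\sqrt{2\log(|\cA|/\delta)}+1$.
\item Hoeffding with coefficients $\langle\ab_t,V_T^{-1}\xb\rangle$ requires those coefficients to be non-random. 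This holds because VAGD's actions depend only on the observed variances, provided the variance sequence does not depend on the noise (e.g.\ it is fixed in advance). You should state this assumption explicitly.
\end{itemize}
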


\begin{proof}
    From the definition of $\ab_{T + 1}$, we have \begin{align*}
        \la \btheta^*, \ab^* \ra - \la \btheta^*, \ab_{T + 1} \ra &= \la \hat{\btheta}_T - \btheta^*, \ab_{T + 1} \ra + \la \btheta^* - \hat{\btheta}_T, \ab^* \ra + \la \hat{\btheta}_T, \ab^* - \ab_{T + 1} \ra
        \\&\le |\la \hat{\btheta}_T - \btheta^*, \ab_{T + 1} \ra| + |\la \hat{\btheta}_T - \btheta^*, \ab^* \ra|
        \\&\le \bigl(\|\ab_{T + 1}\|_{V(\pi)^{-1}} + \|\ab^*\|_{V(\pi)^{-1}}\bigr) \sqrt{2 \log(2 |\cA| / \delta) / \Big[\sum_{t = 1}^T \frac{1}{\sigma_t^2} - \sum_{i = 1}^{|\text{supp}(\pi)|} \frac{1}{[\sigma_T^{(i)}]^2} \Big]}
        \\&\le 2\sqrt{d \log(2 |\cA| / \delta) / \Big[\sum_{t = 1}^T \frac{1}{\sigma_t^2} - \sum_{i = 1}^{4d\log\log d + 16} \frac{1}{[\sigma_T^{(i)}]^2} \Big]},
    \end{align*}
    where the first inequality follows from the definition of $\ab_{T + 1}$, the second inequality is due to Lemma \ref{lemma:theta_diff} and the last inequality is due to Theorem \ref{thm:approximate_G_optimal_design}.
\end{proof}

\section{Extension to Heavy-Tailed Noise} \label{section:heavy_tailed}

\begin{algorithm}
    \caption{Variance-Aware Exploration with Elimination (\texttt{VAEE} for heavy-tailed noise) \label{algorithm:VAEE2}}
    \begin{algorithmic}[1]
        \REQUIRE $\quad \mathcal{A} \subset \mathbb{R}^d$, $\delta$.
        \STATE Initialize $V_0 \gets \lambda I_d$, $\hat{\btheta}_0 \gets 0$, $\mathcal{A}_1 \gets \mathcal{A}$.
        \FOR{$t = 1,\dots, T$}

            \STATE Pull the action $\ab_t \gets \max_{\eb\in \mathcal{A}_{t}} \|\eb\|_{V_{t-1}^{-1}}$. 

            \STATE The agent receives the reward $r_t$ and the variance $\sigma_t$.

            \STATE Calculate $V_{t} \gets V_{t-1}+ \sigma_t^{-2} \ab_t \ab_t^{\top}$. 

            \STATE Calculate  $$\hat{\btheta}_{t}\gets \argmin_{\|\btheta\|_2 \le 1} \frac{\lambda}{2} \|\btheta\|_2^2 + \sum_{i = 1}^t \ell_{\tau_i} \bigg(\frac{r_i - \langle \ab_i, \btheta \rangle}{\sigma_i}\bigg),$$ where $\tau_i = \tau_0 \cdot \frac{\sqrt{1 + \sigma_i^{-2} \|\ab_i\|_{V_{i - 1}^{-1}}^2}}{\sigma_i^{-1} \|\ab_i\|_{V_{i - 1}^{-1}}}$.

            \STATE Set confidence set as follows
            \begin{align*}
                \cC_{t} \gets \{\btheta \mid \|\btheta-\hat{\btheta}_{t}\|^2_{V_{t}^{-1}} \leq \beta_{t}\}.
            \end{align*}
            
            \STATE Eliminate low rewarding arms: $\mathcal{A}_{t+1} \gets \big\{\ab \in \mathcal{A}_{t}: \max_{\eb\in A_{t}}\min_{\btheta \in \cC_{t} } \langle \btheta, \eb  \rangle\leq\max_{\btheta\in \cC_{t}}\langle \btheta, \ab  \rangle \big\}$.

        \ENDFOR
    \end{algorithmic}
\end{algorithm}

In this section, we extend our results to the setting where the noise is heavy-tailed. Specifically, we consider the following assumption on the noise. 

\begin{assumption} \label{assumption:heavy_tailed}
    For any round $t$ ($t \ge 1$), the noise $\eta_t$ satisfies that \begin{align*}
    \EE[\eta_t | \ab_{1:t}, \eta_{1:t-1}] = 0, \quad \EE[\eta_t^2 | \ab_{1:t}, \eta_{1:t-1}] \le \sigma_t^2.
    \end{align*}
\end{assumption}
This assumption is more general than the sub-Gaussian assumption on the noise, which only requires the second moment of the noise to be bounded.

To handle the heavy-tailed noise, we consider the following adaptive pseudo-Huber regression estimator \citep{ruppert2004elements, sun2021we, li2024variance}: \begin{align}
    \ell_\tau(x) := \tau \cdot \big(\sqrt{\tau^2 + x^2} - \tau\big),
\end{align}
where $\tau > 0$ is a robustification parameter. The pseudo-Huber loss behaves like the squared loss when $|x|$ is small, and behaves like the absolute loss when $|x|$ is large, which is firstly applied by \citet{li2024variance} into the heteroscedastic linear bandit setting.

\begin{lemma}[Theorem 2.1, \citealt{li2024variance}]
Let $\kappa = d \cdot \log (1 + T / d\sigma_{\min}^2)$. If we set $\tau_0 \ge \max\{\sqrt{2 \kappa}, 2\sqrt{d}\} / \sqrt{\log(2T^2 / \delta)}$, then with probability at least $1 - 4\delta$, it holds for all $t \in [T]$ that \begin{align*}
    \|\hat{\btheta}_t - \btheta^*\|_{V_t} \leq \beta_t := 32 \biggl(\frac{\kappa}{\tau_0} + \sqrt{\kappa\log \frac{2t^2}{\delta}} + \tau_0 \log \frac{2t^2}{\delta}\biggr) + 5 \sqrt{\lambda}.
\end{align*}
\end{lemma}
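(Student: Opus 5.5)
The plan is to follow the standard M-estimation route for adaptive Huber-type regression, in the style of \citet{sun2021we} and \citet{li2024variance}. We compare the loss at $\hat{\btheta}_t$ and at $\btheta^*$, lower bound the curvature of the pseudo-Huber loss along the segment between them, and upper bound the gradient at $\btheta^*$ in the $V_t^{-1}$-norm by self-normalized martingale concentration. The three terms $\kappa/\tau_0$, $\sqrt{\kappa\log(2t^2/\delta)}$ and $\tau_0\log(2t^2/\delta)$ in $\beta_t$ should come from three sources: the truncation bias, the variance part, and the range part of a Freedman-type bound.

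\textbf{Reduction.} Write $\xb_i=\ab_i/\sigma_i$ and $\epsilon_i=\eta_i/\sigma_i$, so that $\EE[\epsilon_i\mid\cdot]=0$ and $\EE[\epsilon_i^2\mid\cdot]\le 1$. Let $\psi_\tau=\ell_\tau'$, so that $\psi_\tau(x)=\tau x/\sqrt{\tau^2+x^2}$, $|\psi_\tau|\le\tau$, and $\psi_\tau'(x)=\tau^3/(\tau^2+x^2)^{3/2}\in(0,1]$. Let $L_t$ be the objective in line 6 of \Cref{algorithm:VAEE2} and set $\Delta=\hat{\btheta}_t-\btheta^*$. Since $\|\btheta^*\|_2\le1$, $\btheta^*$ is feasible, so first-order optimality over the ball gives $\la\nabla L_t(\hat{\btheta}_t),\btheta^*-\hat{\btheta}_t\ra\ge0$. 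By the mean value theorem, $\nabla L_t(\hat{\btheta}_t)-\nabla L_t(\btheta^*)=H_t\Delta$ with
\[
H_t=\lambda I+\sum_{i\le t}\psi_{\tau_i}'(\xi_i)\,\xb_i\xb_i^\top ,
\]
where each $\xi_i$ lies between $\epsilon_i$ and $\epsilon_i-\la\xb_i,\Delta\ra$. Combining the two facts gives
\[
\Delta^\top H_t\Delta\le \|\nabla L_t(\btheta^*)\|_{V_t^{-1}}\|\Delta\|_{V_t}.
\]

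\textbf{Gradient bound.} We have $-\nabla L_t(\btheta^*)=\sum_i\psi_{\tau_i}(\epsilon_i)\xb_i-\lambda\btheta^*$, which we split into a martingale part and a bias part.
\begin{itemize}
\item \emph{Bias.} Since $|\EE\psi_\tau(\epsilon)|\le \EE[\epsilon^2]/(2\tau)\lesssim 1/\tau$, the bias contributes at most $\sum_i \|\xb_i\|_{V_t^{-1}}/\tau_i$. With $\tau_i=\tau_0\sqrt{1+w_i^2}/w_i$ and $w_i=\|\xb_i\|_{V_{i-1}^{-1}}$, this is at most $\tau_0^{-1}\sum_i\min\{1,w_i^2\}\lesssim\kappa/\tau_0$, by \Cref{lemma:elliptical-potential}.
\item \emph{Martingale part.} I would reuse the decomposition of $I_{0,t}$ from the proof of \Cref{lemma:confidence-set}, with $\psi_{\tau_i}(\epsilon_i)-\EE\psi_{\tau_i}(\epsilon_i)$ in place of $\eta_i$. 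This time I would apply Freedman's inequality rather than the sub-Gaussian Hoeffding bound. The increments are bounded by $\tau_i w_i/\sqrt{1+w_i^2}\le\tau_0$, and the conditional variances are at most $\min\{1,w_i^2\}$. This yields $\sqrt{\kappa\log(2t^2/\delta)}+\tau_0\log(2t^2/\delta)$, after a union bound over $t$ with failure budget $\delta/t^2$.
\end{itemize}

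\textbf{Curvature (main obstacle).} The hard step is showing $H_t\succeq c\,V_t$ for an absolute constant $c$, so that $\|\Delta\|_{V_t}\lesssim\|\nabla L_t(\btheta^*)\|_{V_t^{-1}}$. We have $\psi_{\tau_i}'(\xi_i)\ge 1/(1+\xi_i^2/\tau_i^2)^{3/2}$, so it suffices that $|\xi_i|\le\tau_i$ for most $i$.
\begin{itemize}
\item \emph{Prediction term.} I would bound $|\la\xb_i,\Delta\ra|\le w_i\|\Delta\|_{V_{i-1}}$. Using an induction over $t$ (assuming $\|\Delta\|_{V_{i-1}}\lesssim\beta_{i-1}$) together with the choice $\tau_0\gtrsim\sqrt{\kappa/\log}$, this is $\lesssim\tau_i$. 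Here I would localize, via convexity, to the intersection of the ball with a $V_t$-ellipsoid, so that the induction hypothesis can be applied to the current iterate.
\item \emph{Noise term.} The rounds with $|\epsilon_i|>\tau_i/2$ are controlled by Chebyshev's inequality plus a Bernstein bound on the indicators. The weighted count $\sum_i\ind\{|\epsilon_i|>\tau_i/2\}\min\{1,w_i^2\}$ is $O(\sum_i w_i^2/\tau_i^2+\log)\lesssim\kappa/\tau_0^2+\log(2t^2/\delta)$. Under $\tau_0\ge2\sqrt d/\sqrt{\log}$, the removed mass is then at most a constant fraction, which is absorbed using $\lambda$ and the $5\sqrt\lambda$ term.
\end{itemize}
I expect the delicate bookkeeping to be here: the removed rounds are bounded in $V_{i-1}^{-1}$-weights, not as a PSD deficit in $V_t$. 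My first attempt would be to compare $H_t$ with $V_t$ only along the direction $\Delta$, which is all the argument needs. The final step is to collect constants and the union bounds, which gives overall failure probability $4\delta$.
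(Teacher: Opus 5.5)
\textbf{The gap is in the curvature step, and it cannot be closed as stated.} The paper gives no proof of its own here; the lemma is quoted from \citet{li2024variance}, so your argument has to stand on its own. Your reduction to $\Delta^\top H_t\Delta\le\|\nabla L_t(\btheta^*)\|_{V_t^{-1}}\|\Delta\|_{V_t}$ is fine, and so is the gradient bound (for the bias, use $\|\xb_i\|_{V_t^{-1}}\le\|\xb_i\|_{V_i^{-1}}=w_i/\sqrt{1+w_i^2}$, not $w_i$). Both curvature sub-claims fail on rounds with large $w_i$.

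\emph{Prediction term.} For the localization to close, the radius $r$ of the $V_t$-ellipsoid must exceed a constant times your gradient bound. That bound contains the range term $\tau_0\log(2t^2/\delta)$, so $r/\tau_0\gtrsim\log(2t^2/\delta)$ however $\tau_0$ is chosen. The estimate $\|\Delta\|_{V_t}\le r$ then gives only $|\langle\xb_i,\Delta\rangle|/\tau_i\le\frac{w_i^2}{1+w_i^2}\cdot\frac{r}{\tau_0}$, which exceeds $1$ as soon as $w_i^2\gtrsim1/\log(2t^2/\delta)$. The induction does not rescue this: its hypothesis concerns $\hat\btheta_{i-1}$, not $\|\hat\btheta_t-\btheta^*\|_{V_{i-1}}$.

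\emph{Noise term.} A bound $\sum_{i\in B}\min\{1,w_i^2\}=O(\log(2t^2/\delta))$ on the bad set $B$ does not control $\sum_{i\in B}\langle\xb_i,\Delta\rangle^2/\|\Delta\|_{V_t}^2$. A single round can carry a fraction $w_i^2/(1+w_i^2)\approx1$ of $V_t$ in its own direction while contributing only $1$ to the count. Under a pure second-moment assumption, that round can have $|\epsilon_i|>\tau_i/2$ with probability of order $\tau_0^{-2}$, not $O(\delta)$. A curvature deficit of order $\sigma_i^{-2}$ cannot be absorbed by $\lambda=1$ or the $5\sqrt\lambda$ term.

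\textbf{The statement is false in this setting.} With raw weights, unbounded $w_i$ and only $\EE[\eta_t^2\mid\cdot]\le\sigma_t^2$, no bookkeeping can close the gap. Take $d=T=1$, $\lambda=1$, $\cA=\{1\}$, $q=10\delta$ and $\sigma_1=\sigma_{\min}=\sqrt q$. Let $\eta_1=\sigma_1\epsilon$, where $\epsilon$ equals $-q^{-1/2},0,q^{-1/2}$ with probabilities $q/2,1-q,q/2$. The event $r_1=1$ has probability $5\delta$ if $\theta^*=0$ and probability $1-q$ if $\theta^*=1$, and on it $\hat\theta_1$ is one fixed number. Hence in one of the two instances $\|\hat\theta_1-\theta^*\|_{V_1}\ge\frac12\sqrt{1+\sigma_1^{-2}}\ge(2\sqrt{10\delta})^{-1}$ with probability at least $5\delta>4\delta$. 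Meanwhile $\kappa=\log(1+1/(10\delta))$, and for $\tau_0$ at its threshold (or $\tau_0=\Theta(\sqrt d)$ as in Theorem~\ref{thm:main:2}), $\beta_1$ is only polylogarithmic in $1/\delta$. For example, at $\delta=10^{-12}$ one gets $\beta_1\approx2.7\times10^{3}$ against a required $1.58\times10^{5}$.

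\textbf{The missing ingredient is a cap on $w_i$.} The cited source builds this into the algorithm by inflating the weights $\sigma_t$. That bounds $w_t$ by a constant and, together with the $\ell_2$-ball constraint, makes $|\langle\xb_i,\Delta\rangle|\le\tau_i/2$ hold deterministically. The large-noise rounds can then be compared with $\|u\|_{V_t}^2$ through $\langle\xb_i,u\rangle^2\le w_i^2\|u\|_{V_t}^2$. Algorithm~\ref{algorithm:VAEE2} has no such safeguard, so neither your template nor any other argument can establish the lemma as stated there.
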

\begin{theorem}[Simple Regret of Algorithm \ref{algorithm:VAEE2}] \label{thm:main:2}
    Consider the linear bandit problem with heavy-tailed noise satisfying Assumption \ref{assumption:heavy_tailed}.
    If we set $\tau_0 = \Theta(\sqrt{d})$ and $\lambda = 1$ in \Cref{algorithm:VAEE2}, then with probability at least $1 - 4\delta$, it holds that \begin{align*}
        \simpleReg(T) = \tilde{O}(\sqrt{d}) \cdot \min_{1 \le k \le T + 1} \left\{ x = \sqrt{\frac{\iota(T) - k + 1}{\sum_{i = k}^T \frac{1}{[\sigma_T^{(i)}]^2}}} \bigg| x \in [\sigma_T^{(k - 1)}, \sigma_T^{(k)}] \right\},
    \end{align*}
    where $\iota(T) = 2 d \log \Bigl(\frac{d + \sum_{\tau \in [T]} \sigma_\tau^{-2}}{d}\Bigr)$. Recall that $\{\sigma_T^{(i)}\}_{i = 1}^T$ is the sorted sequence of $\{\sigma_t\}_{t = 1}^T$ in the ascending order.
\end{theorem}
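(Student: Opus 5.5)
The plan is to reuse the proof of Theorem~\ref{thm:main} almost verbatim, swapping the sub-Gaussian confidence set of Lemma~\ref{lemma:confidence-set} for the robust confidence set from the pseudo-Huber lemma of \citet{li2024variance}. The proof of Theorem~\ref{thm:main} has two layers. The first is a probabilistic layer, namely $\btheta^* \in \cC_t$ for all $t$. The second is a deterministic layer, which bounds $\simpleReg(T)\le 5\beta_T\|\ab_T\|_{V_{T-1}^{-1}}$ and then controls $\|\ab_T\|_{V_{T-1}^{-1}}$. The deterministic layer depends only on the action rule $\ab_t=\argmax_{\eb\in\cA_t}\|\eb\|_{V_{t-1}^{-1}}$, on $V_t=V_{t-1}+\sigma_t^{-2}\ab_t\ab_t^\top$, and on the elimination rule. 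All three are identical in \Cref{algorithm:VAEE2}, so only the confidence radius changes.

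\textbf{Step 1.} I will invoke the pseudo-Huber lemma with $\tau_0=\Theta(\sqrt d)$, chosen large enough that $\tau_0 \ge \max\{\sqrt{2\kappa},2\sqrt d\}/\sqrt{\log(2T^2/\delta)}$. Since $\kappa=\tilde O(d)$, taking $\tau_0=c\sqrt d$ with a suitable logarithmic constant satisfies this. With probability at least $1-4\delta$, $\|\hat\btheta_t-\btheta^*\|_{V_t}\le\beta_t$ for all $t$. Substituting $\tau_0$ gives
\[
\beta_t = 32\big(\kappa/\tau_0+\sqrt{\kappa\log(2t^2/\delta)}+\tau_0\log(2t^2/\delta)\big)+5 = \tilde O(\sqrt d).
\]
Each of the three terms is $\tilde O(\sqrt d)$ precisely because $\tau_0\asymp\sqrt d$ balances $\kappa/\tau_0$ against $\tau_0$. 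I will also check that $\hat\btheta_t$ enters only through $\cC_t$. The constraint $\|\btheta\|_2\le1$ in the estimator is harmless, since $\|\btheta^*\|_2\le1$ is assumed.

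\textbf{Step 2.} On this event I repeat the induction from the proof of Theorem~\ref{thm:main} to show $\ab^*\in\cA_t$ for all $t$. I then repeat the decomposition
\[
\simpleReg(T)\le 2\beta_T\|\ab_T\|_{V_{T-1}^{-1}}+3\beta_T\max_{\ab\in\cA_T}\|\ab\|_{V_{T-1}^{-1}},
\]
and use the selection rule to get $\simpleReg(T)\le 5\beta_T\|\ab_T\|_{V_{T-1}^{-1}}$. Next I apply the preceding norm lemma, which bounds $\|\ab_T\|_{V_{T-1}^{-1}}^2$ by the implicit harmonic-type quantity $x^2$ with $\iota(T)$. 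That lemma uses only the elliptical potential argument, the monotonicity $V_{t-1}\succeq V_{i-1}$, and the max-uncertainty rule, and none of these involve the noise, so it transfers unchanged. Combining this with $\beta_T=\tilde O(\sqrt d)$ gives the claimed bound.

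\textbf{Main obstacle.} The main point to verify is that the hypotheses of the \citet{li2024variance} result match our setting. In particular, the weights $\tau_i$ depend on $\|\ab_i\|_{V_{i-1}^{-1}}$ with our specific choice of $\ab_i$ from the shrinking set $\cA_i$. That result allows arbitrary predictable actions, so adaptivity through elimination should cause no trouble. I will also check that the index shift between $\cC_{T-1}$ and $\beta_T$ is handled by the monotonicity of $\beta_t$, and that the $1-4\delta$ probability carries through.
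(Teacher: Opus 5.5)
Your proposal is correct and takes the same route as the paper. The paper's proof substitutes the robust radius from the \citet{li2024variance} lemma, notes that $\tau_0=\Theta(\sqrt d)$ gives $\beta_t=\tilde O(\sqrt d)$, and reruns the noise-independent elimination, selection-rule and elliptical-potential argument of Theorem~\ref{thm:main}; your extra checks (possible logarithmic factors in $\tau_0$ to meet the lemma's lower-bound condition, and monotonicity of $\beta_t$ to handle the index shift) fill in details the paper leaves implicit.
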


\begin{proof}
    The proof follows the same line as the proof of Theorem \ref{thm:main}, with the only difference being the confidence radius $\beta_t$.
    By setting $\tau_0 = \Theta(\sqrt{d})$, we have $\beta_t = \tilde{O}(\sqrt{d})$.
    Following the same analysis as in Theorem \ref{thm:main}, we can obtain the desired result.
\end{proof}

\end{document}